\documentclass[11pt]{article}
\usepackage[letterpaper, margin=1in]{geometry}

\usepackage{tikz}
\usetikzlibrary{shapes.geometric, arrows, positioning, calc, fit, backgrounds}
\definecolor{expertgreen}{RGB}{220, 255, 220}
\definecolor{gateblue}{RGB}{220, 220, 255}
\definecolor{studentred}{RGB}{255, 220, 220}
\definecolor{routerorange}{RGB}{255, 240, 200}
\usepackage{pgfplots}
\usepgfplotslibrary{fillbetween}

\usepackage[utf8]{inputenc} 
\usepackage[T1]{fontenc}    
\usepackage{hyperref}       
\usepackage{url}            
\usepackage{booktabs}       
\usepackage{amsfonts}       
\usepackage{nicefrac}       
\usepackage{microtype}      
\usepackage{xcolor}         
\usepackage{graphicx}

\usepackage{algorithm}
\usepackage{algorithmic}

\usepackage{amsmath}
\usepackage{amssymb}
\usepackage{mathtools}
\usepackage{amsthm}

\usepackage{times}
\usepackage[capitalize,noabbrev]{cleveref}
\usepackage{natbib}

\usepackage{dsfont}
\usepackage[mathscr]{euscript}
\usepackage{expl3}
\usepackage{colortbl}
\usepackage{thmtools}
\usepackage{thm-restate}
\usepackage{multirow}
\usepackage{wrapfig}
\usepackage{accents}

\usepackage{pifont}

\usepackage{MnSymbol}
\DeclareMathAlphabet\mathbb{U}{msb}{m}{n}
\usepackage{xpatch}

\def\Nset{\mathbb{N}}
\def\Rset{\mathbb{R}}

\let\Pr\undefined

\DeclareMathOperator*{\Pr}{\mathbb{P}}

\DeclareMathOperator*{\E}{\mathbb E}

\DeclareMathOperator*{\argmin}{argmin}

\DeclareMathOperator{\sign}{sign}

\DeclarePairedDelimiter{\abs}{\lvert}{\rvert} 
\DeclarePairedDelimiter{\bracket}{[}{]}
\DeclarePairedDelimiter{\curl}{\{}{\}}
\DeclarePairedDelimiter{\paren}{(}{)}

\ExplSyntaxOn
\tl_const:Nn \c_my_uc_alphabet_tl { ABCDEFGHIJKLMNOPQRSTUVWXYZ }
\tl_const:Nn \c_my_full_alphabet_tl { ABCDEFGHIJKLMNOPQRSTUVWXYZ
  abcdefghijklmnopqrstuvwxyz }

\tl_map_inline:Nn \c_my_uc_alphabet_tl
 { \cs_gset:cpn { c#1 } { \mathcal{#1} } }

\tl_map_inline:Nn \c_my_uc_alphabet_tl
 { \cs_gset:cpn { s#1 } { \mathscr{#1} } }

\tl_map_inline:Nn \c_my_full_alphabet_tl
 {
  \cs_gset:cpn { b#1 } { \mathbf{#1} }
  \cs_gset:cpn { sf#1 } { \mathsf{#1} }
 }
\ExplSyntaxOff

\newcommand{\Rad}{\mathfrak R}

\newcommand{\balpha}{{\boldsymbol \alpha}}
\newcommand{\bbeta}{{\boldsymbol \beta}}
\newcommand{\bmu}{{\boldsymbol \mu}}
\newcommand{\bgamma}{{\boldsymbol \gamma}}

\newcommand{\hh}{{\sf h}}

\newcommand{\yy}{{\sf y}}

\newcommand{\num}{l}
\newcommand{\tp}{\textsf{TP}}

\newcommand{\fp}{\textsf{FP}}
\newcommand{\fn}{\textsf{FN}}
\newcommand{\sur}{\wt \sfL}
\newcommand{\ch}{\mathsf c_{\hh}}
\newcommand{\cy}{\mathsf c_{\yy}}
\newcommand{\cyy}{\mathsf c_{y'}}
\newcommand{\s}{{\sf s}}

\newcommand{\sh}{\mathsf s_{\hh}}
\newcommand{\sy}{\mathsf s_{\yy}}
\newcommand{\syy}{\mathsf s_{y'}}

\newcommand{\MMO}{\textsc{mmo}}

\newcommand{\h}{\widehat}
\newcommand{\ov}{\overline}
\newcommand{\uv}{\underline}
\newcommand{\wt}{\widetilde}
\newcommand{\e}{\epsilon}

\newcommand{\ul}{\uv \ell}
\newcommand{\ol}{\ov \ell}
\newcommand{\loss}{{\mathsf L}}

\usepackage{tikz}
\usetikzlibrary{shapes.geometric, arrows.meta, positioning, fit, calc}

\newcommand{\algorithmicreturn}{\textbf{return}}
\newcommand{\RETURN}{\STATE \algorithmicreturn\ } 

\hypersetup{
  breaklinks   = true, 
  colorlinks   = true, 
  urlcolor     = blue, 
  linkcolor    = blue, 
  citecolor    = blue 
}

\usepackage[toc, page, header]{appendix}
\declaretheorem{theorem}
\newtheorem{lemma}[theorem]{Lemma}

\newtheorem{corollary}[theorem]{Corollary}

\renewcommand{\algorithmicrequire}{\textbf{Input:}}
\renewcommand{\algorithmicensure}{\textbf{Output:}}

\title{Principled Algorithms for Optimizing Generalized\\ Metrics in Multi-Label Learning}
\author{
  Mehryar Mohri\\
  Google Research \& CIMS\\
  New York, NY 10011\\
  \texttt{mohri@google.com}
  \and
  Yutao Zhong\\
  Google Research\\ 
  New York, NY 10011\\
  \texttt{yutaozhong@google.com}
}
\date{}

\begin{document}

\maketitle

\addtocontents{toc}{\protect\setcounter{tocdepth}{0}}

\begin{abstract}
  Many real-world classification tasks require predicting multiple labels per instance, necessitating the optimization of complex evaluation metrics such as the $F$-measure and Jaccard index. While the Empirical Utility Maximization (EUM) framework is natural for these population-level metrics, existing theoretical results are largely limited to asymptotic Bayes-consistency. In this paper, we develop principled learning algorithms for optimizing a broad class of generalized metrics within the EUM framework, grounded in the stronger notion of $\sH$-consistency. Our key contribution is the design of novel surrogate loss functions for multi-label learning that admit provable $\sH$-consistency bounds, enabling optimization
  with non-asymptotic guarantees tailored to the hypothesis class and
  finite samples. Crucially, we prove these combinatorially formulated surrogates decompose exactly, operating in strictly $\mathcal{O}(\num)$ time without approximations. Building on this foundation, we introduce \MMO\ (\emph{Multi-Label Metric Optimization}), a new family of algorithms for optimizing generalized linear-fractional metrics. We validate our approach through extensive experiments, demonstrating robust scalability and superior performance over state-of-the-art continuous baselines on large-scale datasets (\texttt{MS-COCO}, \texttt{Reuters-21578}) in high-sparsity, deep learning regimes. Our results offer both theoretical rigor
  and practical effectiveness for general multi-label metric
  optimization.
\end{abstract}

\section{Introduction}

Multi-label classification, predicting multiple labels for a single instance, is a central task in modern machine learning, underpinning applications from image tagging to text categorization \citep{mccallum1999multi}. Unlike standard multi-class classification, evaluating performance in this setting often requires complex, non-decomposable metrics. Such metrics include averaged accuracy \citep{gao2011consistency}, averaged precision \citep{menon2019multilabel}, the $F$-measure and its averaged variants \citep{ye2012optimizing,dembczynski2011exact,dembczynski2013optimizing,waegeman2014bayes,zhang2020convex}, partial ranking loss \citep{gao2011consistency,dembczynski2012consistent}, the Jaccard measure \citep{sokolova2009systematic}, and others such as one-error, subset accuracy, coverage, macro/micro/instance-$F_1$ and AUC \citep{wu2017unified}.

Evaluating and optimizing these metrics requires a choice between two fundamental theoretical frameworks \citep{ye2012optimizing,koyejo2015consistent}. The first is \emph{Decision Theoretic Analysis (DTA)}, which evaluates performance based on the expectation of instance-level losses. This framework has received extensive attention, with recent work establishing strong consistency guarantees \citep{dembczynski2011exact,waegeman2014bayes,zhang2020convex,mao2024multi}. The second is \emph{Empirical Utility Maximization (EUM)},\footnote{Also referred to as the \emph{Population Utility (PU)} framework in some literature \citep{dembczynski2017consistency}.} which evaluates performance based on a utility function defined over the population-level confusion matrix (e.g., Micro-$F_1$, Macro-$F_1$). While DTA is well-studied, many crucial evaluation metrics are inherently global and best modeled under EUM. Despite this, consistency analysis within the EUM framework remains significantly underdeveloped. To our knowledge, \citet{koyejo2015consistent} remains the primary work offering a general consistency framework for EUM. However, their analysis establishes \emph{Bayes-consistency} \citep{steinwart2007compare}, an asymptotic property that guarantees optimality only over the class of all measurable functions.

Bayes-consistency, while theoretically foundational, often fails to provide practical guidance for modern machine learning, where we optimize over restricted, high-capacity hypothesis classes (like Neural Networks) using finite samples. An algorithm that is Bayes-consistent may fail to find the best predictor within a specific hypothesis set $\sH$. Furthermore, reliance on the structural properties of the Bayes-optimal predictor (e.g., thresholding) can be suboptimal when the hypothesis class does not contain the Bayes predictor.
To address these limitations, recent theoretical work has focused on the stronger notion of \emph{$\sH$-consistency} \citep{awasthi2022Hconsistency}. Unlike Bayes-consistency, $\sH$-consistency bounds provide non-asymptotic guarantees explicitly tailored to the hypothesis set $\sH$ used in practice. These bounds directly upper-bound the target estimation error by the surrogate estimation error, ensuring that minimizing the surrogate within $\sH$ effectively optimizes the target metric. While \citet{mao2024multi} addressed this for DTA, no such $\sH$-consistency bounds currently exist for the broader class of generalized metrics under the EUM framework.

\textbf{Our Contributions.} This paper bridges this gap by developing the first EUM-consistent framework grounded in $\sH$-consistency. Specifically:
\begin{enumerate}
\item In Section~\ref{sec:formulation}, we reformulate the optimization of micro-, macro-, and instance-averaged linear-fractional metrics as a cost-sensitive learning problem.
\item We design novel surrogate losses (Section~\ref{sec:surrogate}) that admit provable $\sH$-consistency guarantees. Unlike prior binary approaches, our multi-label analysis (Section~\ref{sec:guarantees}) gracefully handles exponential label dependencies ($2^\num$ combinations). Crucially, we prove that these combinatorial surrogates factorize exactly in strictly $\mathcal{O}(\num)$ time (Section~\ref{sec:efficient}), matching the computational efficiency of standard binary cross-entropy.
\item In Section~\ref{sec:algo}, we introduce \MMO\ (\emph{Multi-Label Metric Optimization}), a practical algorithmic framework for directly optimizing generalized fractional metrics.
\item We validate our approach extensively in Section~\ref{sec:experiments}, showing \MMO\ outperforms both classical EUM plug-in rules \citep{koyejo2015consistent} and state-of-the-art modern continuous losses \citep{benedict2022sigmoidf1,ridnik2021asymmetric} on standard benchmarks and large-scale, highly-sparse datasets using deep neural networks.
\end{enumerate}
We provide a more detailed discussion of related work in Appendix~\ref{app:related-work}.

\textbf{Novelty of Our Techniques.} 
While grounded in the principles of \citet{mao2025principled}, the transition to multi-label EUM introduces combinatorial complexity ($2^\num$ vs.\ 2 classes) that renders binary proof techniques inapplicable. Unlike the binary setting where surrogates rely on simple scalar structures, our framework must natively model intra-label dependencies inherent in metrics like Instance-Jaccard. Consequently, we introduce a distinct family of `comp-sum' surrogates and a novel multidimensional proof strategy for $\sH$-consistency that upper-bounds conditional regret over the entire $2^\num$-dimensional label vector space without suffering an exponential bound blow-up.

\section{Preliminaries}
\label{sec:pre}

We consider the setting of multi-label learning. Let $\sX$ denote the
input space and $\sY = \curl*{+1, -1}^\num$ represent the set of all
possible labels combinations, where $\num$ is a finite integer. For
instance, in image tagging tasks, $\sX$ could be a collection of
images, and $\sY$ could be a set of $\num$ predefined tags that can be
associated with each image. The total number of possible label
combinations is $n = \abs*{\sY} = 2^{\num}$. For any instance $x \in
\sX$ and its corresponding label vector $y = \paren*{y_1, \ldots,
  y_{\num}} \in \sY$, if $y_k = +1$, we say that that the $k$-th label
is relevant to $x$; otherwise, it is considered not relevant. Let
$[\num]$ denote the set $\curl*{1, \ldots, \num}$.

Let $\sH_{\rm{all}}$ be the family of all measurable functions $h
\colon \sX \times [\num] \to \Rset$. For notational convenience, we
represent the scoring vector produced by a hypothesis $h$ for an input
$x$ as $h(x) = \paren*{h(x, 1), \ldots, h(x, \num)}$. We define the
sign function as $\sign \colon t \mapsto 1_{t \geq 0} - 1_{t <
  0}$. The predicted label vector for an input $x \in \sX$ is denoted
by $\hh(x) \coloneqq \bracket*{\hh_1(x), \ldots, \hh_\num(x)} \in
\sY$, where $\hh_k(x) = \sign(h(x, k))$ for any $k \in
   [\num]$.

Given a loss function $\ell \colon \sH_{\rm{all}} \times \sX \times
\sY \to \Rset$, the \emph{generalization error} of a hypothesis $h$
under a data distribution $\sD$ is defined as $\sE_{\ell}(h) =
\E_{(x,y)\sim \sD} \bracket*{\ell(h, x, y)}$. The \emph{best-in-class
generalization error} for a hypothesis set $\sH \subseteq \sH_{\rm
  all}$ is $\sE_{\ell}^*(\sH) = \inf_{h \in \sH} \sE_{\ell}(h)$.
The \emph{excess error} of a hypothesis $h$ with respect to $\sH_{\rm
  all}$, given by $\sE_{\ell}(h) -
\sE_{\ell}^*\paren{\sH_{\rm{all}}}$, can be decomposed into the sum of
its \emph{estimation error}, $\sE_{\ell}(h) - \sE_{\ell}^*(\sH)$, and
the \emph{approximation error} of the class $\sH$, $\sE_{\ell}^*(\sH)
- \sE_{\ell}^*\paren{\sH_{\rm{all}}}$. For a given sample $S =
\paren*{(x^1, y^1), \ldots, (x^m, y^m)}$ of $m$ instances, the
\emph{empirical error} of a hypothesis $h$ is defined as $\h
\sE_{\ell, S}(h) = \frac{1}{m} \sum_{i = 1}^m \ell(h, x^i, y^i)$.

\textbf{Bayes-Consistency}. When considering a surrogate loss function
$\ell_1$ and a target loss function $\ell_2$, a fundamental property
relating them is \emph{Bayes-consistency}
\citep{Zhang2003,zhang2004statistical,bartlett2006convexity,
  tewari2007consistency}.  A loss function $\ell_1$ is
\emph{Bayes-consistent} with respect to a loss function $\ell_2$ if,
for all distributions and all sequences of hypotheses $\{h_n\}_{n\in
  \Nset}\subset \sH_{\rm all}$, the condition $\curl*{\sE_{\ell_1}(h_n) -
\sE_{\ell_1}^*\paren*{\sH_{\rm{all}}} \to 0}$ as $n \to +\infty$
implies $\curl*{\sE_{\ell_2}(h_n)-\sE_{\ell_2}^*\paren*{\sH_{\rm{all}}}
\to 0}$.

While Bayes-consistency is a natural and desirable property, it is
purely asymptotic and applies only to the unrestricted class of all
measurable functions $\sH_{\rm{all}}$. As such, it offers no guidance
on convergence rates or on performance under practical constraints,
where learning is restricted to specific hypothesis classes.

\textbf{$\sH$-Consistency Bound}. A more informative and practical
guarantee in this setting is an $\sH$-consistency bound
\citep{awasthi2022Hconsistency,mao2023cross}, which quantifies how
well a surrogate loss $\ell_1$ approximates a target loss $\ell_2$
over a specific hypothesis class $\sH$. Formally, $\ell_1$ admits an
$\sH$-consistency bound with respect to $\ell_2$ if there exists a
non-decreasing, concave function $\Gamma \colon \Rset_+ \to \Rset_+$
with $\Gamma(0) = 0$, such that for all $h \in \sH$ and any
distribution:
$
\sE_{\ell_2}(h) - \sE_{\ell_2}^*(\sH) + \sM_{\ell_2}(\sH)
\leq \Gamma \paren{\sE_{\ell_1}(h)
- \sE_{\ell_1}^*(\sH) + \sM_{\ell_1}(\sH)},$
where $\sM_{\ell}(\sH) := \sE^*_\ell(\sH) - \E\left[\inf_{h \in \sH}
  \E[\ell(h, x, y) \mid x]\right]$ is the minimizability gap. The
latter term reflects the gap between the best-in-class error and the
best achievable loss when selecting an optimal $h$ for each $x$. For
rich classes like $\sH = \sH_{\text{all}}$, this gap vanishes,
reducing the bound to $\sE_{\ell_2}(h) - \sE_{\ell_2}^*(\sH) \leq
\Gamma(\sE_{\ell_1}(h) - \sE_{\ell_1}^*(\sH))$, which implies
Bayes-consistency. In general, the gap is non-negative, bounded by the
approximation error of $\sH$, and often significantly smaller. Thus,
$\sH$-consistency bounds yield sharper, non-asymptotic, and
class-specific guarantees of consistency.

\section{Problem formulation}
\label{sec:formulation}

Our goal is to design a principled learning algorithm that can
optimize a broad family of generalized performance metrics for
multi-label learning. We begin by formally defining this family of
metrics and then reformulate the problem using an alternative
loss function to enable more tractable optimization.

\subsection{Generalized Metrics} 
\label{sec:generalized-metrics}

Given a sample $S = \paren*{(x^1, y^1), \ldots, (x^m, y^m)}$, we
consider three types of \emph{empirical generalized metrics} for
multi-label learning: \emph{micro-averaged}, \emph{macro-averaged},
and \emph{instance-averaged} metrics.  For any $\bmu = [\mu_1^1,
  \mu_1^2, \mu_1^3, \mu_1^4, \ldots, \mu_\num^1, \mu_\num^2,
  \mu_\num^3, \mu_\num^4] \in \Rset^{4 \num}$, label index $k \in
[\num]$, and example $(x, y) \in \sX \times \sY$, we define the shorthand
$\ell_{\bmu, k} (x, y) = \mu_k^1 \hh_k(x) y_k + \mu_k^2 y_k + \mu_k^3
\hh_k(x) + \mu_k^4$.  Then, for any hypothesis $h \in \sH_{\rm{all}}$,
the empirical metrics are defined as follows:
\begin{align}
\label{eq:target-empirical}
\h \cL_{S, \rm{micro}}(h) &= \frac{\sum_{i = 1}^m \sum_{k = 1}^{\num} \ell_{\balpha, k}(x^i, y^i)} {\sum_{i = 1}^m \sum_{k = 1}^{\num} \ell_{\bbeta, k}(x^i, y^i)}, \nonumber \\
\h \cL_{S, \rm{macro}}(h) &= \frac{1}{\num} \sum_{k = 1}^{\num} \frac{\sum_{i = 1}^m \ell_{\balpha, k}(x^i, y^i)} {\sum_{i = 1}^m \ell_{\bbeta, k}(x^i, y^i)}, \nonumber \\
\h \cL_{S, \rm{instance}}(h) &= \frac{1}{m} \sum_{i = 1}^{m} \frac{ \sum_{k = 1}^{\num} \ell_{\balpha, k}(x^i, y^i)} { \sum_{k = 1}^{\num} \ell_{\bbeta, k}(x^i, y^i)},
\end{align}
with $\balpha, \bbeta \in \Rset^{4 \num}$.
To maintain notational simplicity, the dependence on $(\balpha,
\bbeta)$ is assumed to be clear from the context and will be omitted
unless essential. We may also use $\h \cL_{S}$ without the subscript
indicating the averaging method when it is evident from the
context. The construction of these metrics involves different
averaging strategies:
\textbf{Micro-averaging}: Aggregates the contributions from all
  instance-label pairs before forming the ratio.
\textbf{Macro-averaging}: Computes a ratio for each label by
  averaging over instances, and then averages these label-specific
  ratios.
 \textbf{Instance-averaging}: Computes a ratio for each instance
  by averaging over labels, and then averages these instance-specific
  ratios.
The fundamental statistics for multi-label classification, true
positives (TP), false positives (FP), true negatives (TN), and false
negatives (FN) \citep{koyejo2015consistent}, can be expressed using
$\hh_k(x^i) y_k^i$, $ y_k^i$, and $\hh_k(x^i)$. Specifically, by
defining $\ov \hh_k(x^i) = \frac{\hh_k(x^i) + 1}{2} \in \curl*{0, 1}$
and $\ov y_k^i = \frac{y_k^i + 1}{2} \in \curl*{0, 1}$:
\begin{align*}
  \text{TP}_{k, i} &= \ov \hh_k(x^i) \ov y_k^i &
  \text{FP}_{k, i} &= \ov \hh_k(x^i) \bracket*{1 - \ov y_k^i}\\
  \text{TN}_{k, i} &= \bracket*{1- \ov \hh_k(x)} \bracket*{1 - \ov y_k^i} &
  \text{FN}_{k, i} &= \bracket*{1 - \ov \hh_k(x^i)} \ov y_k^i .
\end{align*}
Since each of these (TP, FP, TN, FN) can be expressed as a linear
combination of $\hh_k(x^i) y_k^i$, $ y_k^i$, $\hh_k(x^i)$, and a
constant, any metric that is a ratio of two linear combinations of
micro-averaged, macro-averaged, or instance-averaged TP, FN, TN, and
FP counts (as considered in \citep{koyejo2015consistent}) can be
expressed in the form of \eqref{eq:target-empirical}. This general
formulation encompasses numerous widely used metrics, such as averaged
accuracy (or $1 - $ Hamming loss), averaged precision, averaged
$F$-measure, and the averaged Jaccard measure, among others.

Given a hypothesis set $\sH$ and a distribution $\sD$ over $\sX \times
\sY$, our goal is to identify a hypothesis $h \in \sH$ that maximizes
the \emph{population generalized metrics}.  These are derived from
their empirical counterparts in \eqref{eq:target-empirical} by
replacing individual terms like $\hh_k(x^i) y_k^i$, $ y_k^i$, and
$\hh_k(x^i)$ with their respective expectations: $\E_{(x, y)}
\bracket*{\hh_k(x) y_k}$, $\E_{(x, y)} \bracket*{y_k}$, and $ \E_{(x,
  y)} \bracket*{\hh_k(x)}$.
Under this transition to expectations, the population versions of both
micro-averaged and instance-averaged metrics, $\cL(h)$, and the
population version of the macro-averaged metric, $\wt \cL(h)$ can be
expressed as follows:
\begin{align}
\label{eq:target-population}
\cL(h) = \frac{\sum_{k = 1}^{\num}
  \E_{(x, y) \sim \sD} \bracket*{\ell_{\balpha, k}(x, y)} }
   {\sum_{k = 1}^{\num} \E_{(x, y) \sim \sD} \bracket*{\ell_{\bbeta, k}(x, y)}} \qquad
   \wt \cL(h) = \frac{1}{l} \sum_{k = 1}^{\num} \frac{\E_{(x, y) \sim \sD}
\bracket*{\ell_{\balpha, k}(x, y)} }
   {\E_{(x, y) \sim \sD} \bracket*{\ell_{\bbeta, k}(x, y)}}.
\end{align}
We denote the optimal population generalized metrics within class
$\sH$ as $\cL^*(\sH) = \sup_{h \in \sH} \cL(h)$ and $\wt \cL^*(\sH) =
\sup_{h \in \sH} \wt \cL(h)$. Note that if the hypothesis set $\sH$
allows each scoring function $h(x, k)$ to be optimized independently
for each label $k$ (a common assumption in multi-label learning, see
e.g., \citep{koyejo2015consistent}), then optimizing $\wt \cL$ as
defined in \eqref{eq:target-population} is equivalent to independently
optimizing the per-label ratio: $\frac{ \E_{(x, y) \sim \sD}
  \bracket*{ \ell_{\balpha, k}(x, y)} } {\E_{(x, y) \sim \sD}
  \bracket*{\ell_{\bbeta, k}(x, y)}}$ for each $k \in [\num]$. Each
such per-label optimization can be viewed as a special case of
optimizing $\cL$ in \eqref{eq:target-population} by setting $\num = 1$
(that is, considering only a single label). Therefore, in what
follows, our analysis will focus on optimizing $\cL$.
    
\subsection{Equivalent Problem Formulation}
\label{sec:equivalent-problem}

The generalized metric $\cL(h)$ defined in \eqref{eq:target-population} is
a ratio of two expected values. This fractional structure differs from
the more conventional single-expectation loss functions typically used
when deriving and analyzing surrogate losses in machine learning. This
section introduces an equivalent reformulation for the problem of
maximizing $\cL(h)$, providing a more amenable framework for designing
surrogate loss functions.

For any $\bmu \in \Rset^{4 \num}$, we define the auxiliary function
$\ell_{\balpha} = \sum_{k = 1}^{\num} \ell_{\balpha, k}$. $\cL(h)$ can
then be expressed as follows: $\cL(h) = \frac{\E_{(x, y) \sim \sD}
  \bracket*{\ell_{\balpha}(h, x, y)}}{\E_{(x, y) \sim \sD}
  \bracket*{\ell_{\bbeta}(h, x, y)}}$, which is a fractional program
involving the expected values of $\ell_{\balpha}$ and
$\ell_{\bbeta}$. To maximize $\cL(h)$, we reformulate this into an
equivalent optimization problem that will simplify further analysis
and the design of surrogates.
For any scalar $\lambda$ (and any given $\balpha$ and $\bbeta$), we
define the loss function $\ell^{\lambda}$ as follows:
\begin{equation}
\label{eq:target-equiv}
\forall (h, x, y), \, \ell^{\lambda}(h, x, y)
=  \lambda \ell_{\bbeta}(h, x, y) - \ell_{\balpha}(h, x, y).
\end{equation}
Let $\h \sE_{\ell^\lambda, S}$ denote the average empirical loss of
$\ell^\lambda$ over a sample $S$. Without loss of generality, we
assume throughout that the denominator $\E_{(x, y) \sim \sD}
\bracket*{\ell_{\bbeta}(h, x, y)}$ is positive for all $h \in \sH$. If
this condition is not met, it can be satisfied by simultaneously
negating both $\ell_{\balpha}$ and $\ell_{\bbeta}$. For convenience,
we further define $\ul_\bbeta = \inf_{h \in \sH} \E_{(x, y) \sim
  \sD}[\ell_\bbeta(h, x, y)]$ and assume $\ul_\bbeta > 0$. Similarly,
let $\ol_\bbeta = \sup_{h \in \sH} \E_{(x, y) \sim \sD}[\ell_\bbeta(h,
  x, y)]$, assuming $\ol_\bbeta < +\infty$.  The following theorem,
based on principles from fractional programming, establishes the
equivalence between maximizing $\cL(h)$ and minimizing the expected
value of $\ell^{\lambda^*}$, where $\lambda^* = \cL^*(\sH)$ is the
optimal value of the original problem.

\begin{restatable}{theorem}{Equiv}
\label{thm:equiv}
The equality $\cL(h^*) = \cL^*(\sH)$ holds for some $h^* \in \sH$ if and
only if $h^*$ minimizes $\sE_{\ell^{\lambda^*\!\!}}(h)$ over $\sH$, and $\sE_{\ell^{\lambda^*\!\!}}(h^*) = \sE^*_{\ell^{\lambda^*\!\!}}(\sH) = 0$.
\end{restatable}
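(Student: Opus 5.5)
The plan is the standard Dinkelbach argument from fractional programming. Write $A(h) = \E_{(x,y)\sim\sD}[\ell_{\balpha}(h,x,y)]$ and $B(h) = \E_{(x,y)\sim\sD}[\ell_{\bbeta}(h,x,y)]$. Then $\cL(h) = A(h)/B(h)$, and by linearity of expectation
\[
\sE_{\ell^{\lambda}}(h) = \lambda B(h) - A(h).
\]
By the standing assumption, $B(h) \geq \ul_\bbeta > 0$ for every $h \in \sH$. Also $\ol_\bbeta < +\infty$, and each $\ell_{\bmu,k}$ is bounded, so $\lambda^* = \cL^*(\sH) = \sup_{h\in\sH} A(h)/B(h)$ is finite.

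The first step is to show that $\sE^*_{\ell^{\lambda^*}}(\sH) = 0$. For any $h \in \sH$, the definition of the supremum gives $A(h)/B(h) \leq \lambda^*$. Multiplying by $B(h) > 0$ yields $\sE_{\ell^{\lambda^*}}(h) = \lambda^* B(h) - A(h) \geq 0$, so the infimum is nonnegative. For the reverse inequality, take a sequence $h_n \in \sH$ with $\cL(h_n) \to \lambda^*$. Then
\[
0 \leq \sE_{\ell^{\lambda^*}}(h_n) = B(h_n)\bigl(\lambda^* - \cL(h_n)\bigr) \leq \ol_\bbeta\bigl(\lambda^* - \cL(h_n)\bigr) \to 0 .
\]
Hence $\sE^*_{\ell^{\lambda^*}}(\sH) = 0$. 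This is the only step that needs the boundedness assumption on $\ol_\bbeta$, and it is the one point that requires some care, because the supremum need not be attained.

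For the forward direction, suppose $\cL(h^*) = \lambda^*$ with $h^* \in \sH$. Then $A(h^*) = \lambda^* B(h^*)$, so $\sE_{\ell^{\lambda^*}}(h^*) = 0$. This equals the infimum computed above, so $h^*$ is a minimizer and all the stated equalities hold.

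For the converse, suppose $h^*$ minimizes $\sE_{\ell^{\lambda^*}}$ over $\sH$ with value $0$. The value $0$ in fact follows automatically from the first step, which showed the infimum is $0$. Then $\lambda^* B(h^*) - A(h^*) = 0$, and dividing by $B(h^*) > 0$ gives $\cL(h^*) = \lambda^*$, as required.
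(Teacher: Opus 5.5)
Your proof is correct and follows the paper's argument. Both directions come down to multiplying or dividing $\cL(h) \le \lambda^*$ by the positive denominator $B(h)$. The one difference is that you first prove $\sE^*_{\ell^{\lambda^*}}(\sH) = 0$ unconditionally, using an approximating sequence; the paper proves that fact separately, as part of Theorem~\ref{thm:lambda-star}. The paper's proof of this theorem does not need it: in the forward direction, attainment at $h^*$ already gives $\sE_{\ell^{\lambda^*}}(h) \ge 0 = \sE_{\ell^{\lambda^*}}(h^*)$ for all $h$. Your extra step does have the benefit of showing that the ``$=0$'' clause in the statement already follows from minimality.
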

More generally, the following non-asymptotic equivalence provides a
quantitative relationship. 
\begin{restatable}{theorem}{EquivNon}
  \label{thm:equiv-non}
Fix $\eta \geq 0$ and $h \in \sH$. Then, 
$\paren*{\sE_{\ell^{\lambda^*}}(h) \leq \eta}$ holds iff $\cL^*(\sH) - \cL(h) \leq
\frac{\eta}{\E_{(x, y) \sim \sD} \bracket*{ \ell_{\bbeta}(h, x, y)
}}$.
\end{restatable}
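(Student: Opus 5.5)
The plan is a direct algebraic manipulation using only the definitions of $\ell^{\lambda}$ and $\cL$, together with linearity of expectation. Fix $h \in \sH$ and write $A(h) = \E_{(x, y) \sim \sD}\bracket*{\ell_{\balpha}(h, x, y)}$ and $B(h) = \E_{(x, y) \sim \sD}\bracket*{\ell_{\bbeta}(h, x, y)}$, so that $\cL(h) = A(h)/B(h)$. By the standing assumption, $B(h) \geq \ul_\bbeta > 0$. I assume $\ov\ell_\bbeta < +\infty$, and that $\ell_\balpha$ is bounded in a similar way, so all expectations are finite.

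First, I would expand the expected reformulated loss by linearity of expectation and the definition \eqref{eq:target-equiv}:
\[
\sE_{\ell^{\lambda^*}}(h) = \lambda^* B(h) - A(h) = B(h)\paren*{\lambda^* - \cL(h)} = B(h)\paren*{\cL^*(\sH) - \cL(h)},
\]
using $\lambda^* = \cL^*(\sH)$. This identity carries the whole argument.

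Second, since $B(h) > 0$, dividing by $B(h)$ preserves the direction of the inequality. Hence $\sE_{\ell^{\lambda^*}}(h) \leq \eta$ holds if and only if $\cL^*(\sH) - \cL(h) \leq \eta / B(h)$, which is exactly the claimed statement. Both directions follow from this single equivalence, so no case split is needed.

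The proof has no real obstacle. The one point needing care is that $\lambda^* = \cL^*(\sH)$ is a finite real number, so that the product $\lambda^* B(h)$ is well defined. For this I would note that $\cL^*(\sH) = \sup_{h} A(h)/B(h) \leq \sup_{h} \abs*{A(h)} / \ul_\bbeta$, which is finite under the boundedness and positivity assumptions. As a consistency check, taking $\eta = 0$ recovers the characterization of Theorem~\ref{thm:equiv}.
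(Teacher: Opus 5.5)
Your proposal is correct and is essentially the paper's proof: it also rewrites $\sE_{\ell^{\lambda^*}}(h)\le\eta$ as $\lambda^* B(h) - A(h)\le\eta$, divides by $B(h)>0$, and substitutes $\lambda^*=\cL^*(\sH)$. Your extra remark on the finiteness of $\lambda^*$ is harmless, and your boundedness assumption on $\ell_{\balpha}$ holds automatically, since $\abs{\ell_{\balpha}}\le\sum_{k,j}\abs{\alpha_k^j}$.
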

The proofs of these theorems are presented in
Appendix~\ref{app:equiv}.
Theorems~\ref{thm:equiv} and \ref{thm:equiv-non} demonstrate that our
original problem of maximizing $\cL(h)$ can be reduced to minimizing
the expected loss $\sE_{\ell^{\lambda^*}}(h)$. However, direct
minimization of $\sE_{\ell^{\lambda^*}}(h)$ remains challenging
because the function $\ell^{\lambda^*\!\!}$, through its dependence on
$\hh_k(x) = \sign(h(x, k))$, is non-differentiable and non-continuous
with respect to the score values $h(x, k)$. In the subsequent section,
we will introduce a general family of surrogate loss
functions for $\ell^{\lambda^*\!\!}$ that benefit from strong
consistency guarantees and are better suited for
practical optimization algorithms.

\section{Cost-Sensitive Multi-Label Classification}
\label{sec:mll}

In this section, we first demonstrate that the loss function
$\ell^{\lambda^*\!\!}$, derived in the preceding section, can be
interpreted as a specific type of cost-sensitive multi-label target
loss. Building on this, we address the broader problem of
cost-sensitive multi-label classification by introducing a novel
family of surrogate loss functions tailored to this
framework. Finally, we present strong theoretical guarantees for these
surrogates, underscoring their suitability for developing efficient
algorithms to optimize both general cost-sensitive multi-label
objectives and, consequently, the original loss
$\ell^{\lambda^*\!\!}$.

\subsection{General Target Loss Function}
\label{sec:target}

Let $\bgamma = [\gamma_1^1, \gamma_1^2, \gamma_1^3, \gamma_1^4,...,
  \gamma_\num^1, \gamma_\num^2, \gamma_\num^3, \gamma_\num^4]$ be a
coefficient vector where $\gamma_k^j = \lambda^* \beta_k^j -
\alpha_k^j$, for $j \in \curl*{1, 2, 3, 4}$ and $k \in [l]$. The loss
function $\ell^{\lambda^*\!\!}(h, x, y)$ can then be written as:
\begin{align}
\label{eq:target-equiv-2}
\sum_{k = 1}^{\num} \paren*{\gamma_k^1 \hh_k(x) y_k
    + \gamma_k^2 y_k + \gamma_k^3 \hh_k(x) + \gamma_k^4} \eqqcolon \sfL_{\bgamma}(\hh(x), y), 
\end{align}
where $\sfL_{\bgamma} \colon \sY \times \sY \to \Rset$ is a
parameterized loss function. Since adding a constant does not affect
minimization, we may shift $\sfL_{\bgamma}$ by any constant $C_0 \geq
0$ to ensure non-negativity: $\sfL_{\bgamma}(\cdot, \cdot) + C_0 \geq
0$.
This formulation of $\sfL_{\bgamma}(\hh(x), y)$ naturally fits into
the general framework of \emph{cost-sensitive multi-label
classification}, where a cost function $\ov \sfL \colon \sY \times \sY
\to \Rset_+$ quantifies the penalty of predicting $\hh(x)$ when the
true label is $y$. The corresponding target loss function is defined
as:
\begin{equation} 
\label{eq:mll}
\forall (h, x, y), \quad \sfL(h, x, y) = \ov \sfL(\hh(x), y).
\end{equation}

\subsection{General Surrogate Losses}
\label{sec:surrogate}

The cost function $\ov \sfL$ in Equation~\eqref{eq:mll} can be very
general; it need not be symmetric (i.e., $\ov \sfL(y, y') = \ov
\sfL(y', y)$ may not hold) nor definite (i.e., $\ov \sfL(y, y')$ is
not necessarily zero if $y = y'$. This defines a general
cost-sensitive learning problem, extending the work of
\citet{mao2025principled} where the label space $\sY$ was binary. Our
multi-label setting (where $\sY = \curl*{+1, -1}^\num$) significantly
broadens this scope and includes the binary case as a special instance
when $\num = 1$.

Directly minimizing the general cost-sensitive loss $\sfL$ is often
intractable for most hypothesis sets, primarily due to its
non-continuous and non-differentiable nature (inherited from the
$\sign$ function in $\hh(x)$). Consequently, practical approaches
typically involve optimizing a surrogate loss function. Effective
surrogate losses are designed to be consistent (or, more strongly,
$\sH$-consistent) with the target loss
\citep{Zhang2003,bartlett2006convexity,awasthi2022Hconsistency,awasthi2022multi,mao2023cross,mao2025principled}
and are often formulated as margin-based losses
\citep{lin2004note}. \citet{mao2025principled} recently extended such
margin-based losses to the general cost-sensitive binary
classification setting.
 
Here, we propose a novel family of surrogate loss functions for the
general cost-sensitive multi-label loss $\sfL$. These surrogates
extend the comp-sum loss function framework \citep{mao2023cross} to
this more general setting:
\begin{align}
\label{eq:sur}
  \sfL_{\tau}(h, x, y)
  = \sum_{y' \in \sY} \, \bracket*{\ov S - \ov\sfL(y', y)}
    \Phi_{\tau} \paren*{\sum_{y'' \in \sY} e^{\sum_i \paren*{ y''_i - y'_i} h(x, i)}}.
\end{align}
where $y$ is the true label vector,
$\sfS = \sum_{y', y \in \sY} \ov\sfL(y', y)$, and for any
$\tau \geq 0$, the function $\Phi_{\tau}$ is defined for all
$u \geq 0$ by:
$\Phi_{\tau}(u) = \frac{1}{\tau} \paren*{u^{-\tau} - 1} 1_{\tau \geq
  0} + \log(u) 1_{\tau = 0}$.  In the special case where $\tau = 0$,
$\sfL_{0}$ becomes:
$\sum_{y' \in \sY} \, \bracket*{\sfS - \ov\sfL(y', y)} \cdot \log
\paren*{\sum_{y'' \in \sY} e^{\sum_{i = 1}^\num \paren*{ y''_i - y'_i}
    h(x, i)}}.  $ Note that when $\num = 1$ (the binary case), this
formulation recovers the cost-sensitive logistic loss proposed in
\citep{mao2025principled}.  Different choices for the parameter $\tau$
yield different types of cost-sensitive surrogate losses: $\tau = 0$
leads to a cost-sensitive \emph{logistic-type loss}
\citep{Verhulst1838,Verhulst1845,Berkson1944,Berkson1951};
$\tau \in (0, 1)$ gives a cost-sensitive \emph{generalized
  cross-entropy-type loss} \citep{zhang2018generalized}; and
$\tau = 1$ corresponds to a cost-sensitive \emph{mean absolute
  error-type loss} \citep{ghosh2017robust}.

\subsection{Theoretical Guarantees}
\label{sec:guarantees}

This section establishes strong theoretical guarantees for the
proposed surrogate loss $\sfL_{\tau}$. In particular, we derive
$\sH$-consistency bounds for $\sfL_{\tau}$ with respect to the general
cost-sensitive target loss $\sfL$ (defined in \eqref{eq:mll}),
for widely used hypothesis classes.

We demonstrate that these cost-sensitive comp-sum type losses admit
$\sH$-consistency bounds when the hypothesis set $\sH$ is
\emph{symmetric} and \emph{complete}.  A hypothesis set $\sH$ is
\emph{symmetric} if the set of all possible score vectors
$\curl*{(h(x,1), \ldots, h(x,\num))\colon h \in \sH }$ can be
generated by independently choosing $\num$ real-valued functions $f_1,
\ldots, f_{\num}$ from a common underlying family $\sF$ for each
component, that is, $\curl*{(f_1(x), \ldots, f_{\num}(x))\colon f_1,
  \ldots, f_{\num} \in \sF}$, for any $x \in \sX$.  A hypothesis set
$\sH$ is \emph{complete} if, for any input $x$ and any label index $k
\in [\num]$, the scores $h(x,k)$ can span $\Rset$; that is, $\curl*{h(x,k) : h \in \sH} = \Rset$.  As noted by \citet{awasthi2022multi} and
\citet{mao2023cross}, these assumptions are quite general and are
satisfied by common hypothesis sets in practice, including linear
hypotheses and multi-layer feed-forward neural networks, as well as
the set of all measurable functions.

\begin{restatable}[\textbf{$\sH$-consistency bound of
      $\sfL_{\tau}$}]
  {theorem}{BoundComp}
\label{Thm:bound_comp}
Assume that $\sH$ is symmetric and complete. Then, for any target
cost-sensitive loss $\sfL$, any hypothesis $h \in \sH$, and any
distribution, we have:
\begin{equation}
\label{eq:bound_comp}
\sE_{\loss}(h)-\sE_{\loss}^*(\sH)
\leq \Gamma\paren*{\sE_{\sfL_{\tau}}(h) -
  \sE_{\sfL_{\tau}}^*(\sH) + \sM_{\sfL_{\tau}}(\sH)} - \sM_{\loss}(\sH),
\end{equation}
where $\ov\sfL_{\rm{sum}} =\sum_{y', y \in \sY} \ov\sfL(y', y)$, and
$\Gamma(t)$ is defined as: $\Gamma(t) = 2 \sqrt{\ov \sfL_{\rm{sum}}
  n^{\tau} t} \, 1_{\tau \in [0, 1)} + \tau n^{\tau} t \, 1_{\tau \geq 1}$.
\end{restatable}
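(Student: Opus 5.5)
We follow the standard conditional-regret route of \citet{awasthi2022multi} and \citet{mao2023cross}. Conditioning on $x$, write $p(y\mid x)$ and define the conditional errors $\cC_{\loss}(h,x)=\sum_{y}p(y\mid x)\,\ov\sfL(\hh(x),y)$ and $\cC_{\sfL_\tau}(h,x)$ analogously, with the conditional regrets $\Delta\cC_{\loss,\sH}(h,x)=\cC_{\loss}(h,x)-\inf_{h\in\sH}\cC_{\loss}(h,x)$ and $\Delta\cC_{\sfL_\tau,\sH}(h,x)$. Because $\sE_\ell(h)-\sE^*_\ell(\sH)+\sM_\ell(\sH)=\E_x[\Delta\cC_{\ell,\sH}(h,x)]$, the theorem follows once we prove the pointwise inequality $\Delta\cC_{\loss,\sH}(h,x)\le\Gamma(\Delta\cC_{\sfL_\tau,\sH}(h,x))$. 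Since $\Gamma$ is concave and non-decreasing, Jensen's inequality then gives the stated bound.

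\textbf{Reduction to a comp-sum problem over $\sY$.} For each $y'\in\sY$, set the score $s_{y'}(x)=\sum_i y'_i h(x,i)$. The induced softmax is $\sfp_{y'}=e^{s_{y'}}/\sum_{y''}e^{s_{y''}}$, and the argument of $\Phi_\tau$ in \eqref{eq:sur} is exactly $1/\sfp_{y'}$. Hence
\[
\cC_{\sfL_\tau}(h,x)=\sum_{y'}q(y',x)\,\Phi_\tau(1/\sfp_{y'}),\qquad q(y',x)=\sum_y p(y\mid x)\bigl[\sfS-\ov\sfL(y',y)\bigr]\ge0.
\]
This is a standard comp-sum conditional loss over $n=2^\num$ classes with nonnegative weights $q$. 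The target satisfies $\cC_{\loss}(h,x)=\sum_y p(y\mid x)\sfS-\sum_y p(y\mid x)[\sfS-\ov\sfL(\hh(x),y)]$, so minimizing $\cC_\loss$ amounts to picking the $y'$ that maximizes $q(y',x)$. The key structural fact is that the prediction $\hh(x)=\sign(h(x,\cdot))$ is exactly $\argmax_{y'}s_{y'}(x)$, since $\sum_i y'_i h(x,i)$ is maximized coordinatewise. The target therefore behaves like the multi-class zero-one-type loss with weights $q$. Moreover, under symmetry and completeness, every assignment of the $\num$ coordinate scores is realizable. This lets us move along the family of softmax vectors induced by $\sH$ freely enough for the comparison argument.

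\textbf{Comparison step.} Let $y_{\max}=\argmax_{y'} q(y',x)$. Then $\Delta\cC_{\loss,\sH}(h,x)=q(y_{\max})-q(\hh(x))$. We lower bound $\Delta\cC_{\sfL_\tau,\sH}(h,x)$ by comparing $h$ with a modified $h_\mu$ that swaps the roles of $\hh(x)$ and $y_{\max}$. In the product space, this is done by flipping the signs of the coordinates $h(x,i)$ on which $\hh(x)$ and $y_{\max}$ differ, a change that stays in $\sH$ by symmetry and completeness. Doing so exchanges $\sfp_{\hh(x)}$ and $\sfp_{y_{\max}}$, and more generally permutes the softmax vector by an involution of $\sY$. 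Following the \citet{mao2023cross} argument, one then uses a perturbation $\mu$ transferring mass between those two entries. This should yield
\[
\Delta\cC_{\sfL_\tau}\ \ge\ \sup_\mu\Bigl[\textstyle\sum_{y'}q(y')\bigl(\Phi_\tau(1/\sfp_{y'})-\Phi_\tau(1/\sfp^\mu_{y'})\bigr)\Bigr].
\]
For $\tau\in[0,1)$, a second-order, Pinsker-type estimate gives a lower bound of order $\bigl(q(y_{\max})-q(\hh(x))\bigr)^2/\bigl(4\,n^{\tau}\sum_{y'}q(y')\bigr)$. For $\tau\ge1$, it gives a linear bound of order $\bigl(q(y_{\max})-q(\hh(x))\bigr)/(\tau n^\tau)$. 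Bounding $\sum_{y'}q(y')$ by $\ov\sfL_{\rm sum}$, with the constant $\sfS$ normalization and $\sum_y p(y\mid x)=1$, and then inverting produces the stated $\Gamma$.

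\textbf{Main obstacle.} The hardest step is the comparison. The softmax over $2^\num$ labels is not free: it is constrained to product form $\sfp_{y'}\propto\prod_i e^{y'_ih(x,i)}$, so we cannot perturb just two entries as in the unconstrained multi-class proof. The first thing to try is to restrict to the one-parameter family obtained by interpolating the differing coordinates of $h$ jointly, i.e.\ scaling $h(x,i)\mapsto h(x,i)+t$ along the sign-flip direction. On this family, track only the two entries $\hh(x)$ and $y_{\max}$, and show that the remaining entries' contribution is nonincreasing or absorbed by the $n^\tau$ factor. This uses the bound $\sfp_{y'}\ge\cdots$ and $\Phi'_\tau(u)$ bounded via $u\le n$ at the relevant point, which is where $n^\tau$ enters. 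Keeping the constants from blowing up exponentially beyond $n^\tau$ is the delicate part.
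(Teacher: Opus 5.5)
Your skeleton matches the paper's: a pointwise conditional-regret bound plus Jensen, weights $q(y')=\ov\sfL_{\rm{sum}}-\cyy$, and $\hh$ as the mode of $\s$. The gap is the comparison step. You state it as ``this should yield'' and then flag it yourself as the main obstacle, so nothing in the proposal actually lower-bounds $\Delta\sC_{\sfL_\tau,\sH}(h,x)$.

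Your diagnosis is correct. The vectors $(\syy)_{y'\in\sY}$ realizable by $\sH$ are exactly the full-support product distributions $\syy=\prod_{i=1}^{\num}\sigma\bigl(2y'_i h(x,i)\bigr)$, with $\sigma(t)=1/(1+e^{-t})$. So you cannot move mass between $\hh$ and $\yy$ while freezing the other $n-2$ entries. Your sign-flip involution $y'\mapsto y'\odot\epsilon$ permutes every entry, so the remaining terms of $\sum_{y'}q(y')\Phi_\tau(1/\syy)$ change with no controlled sign.

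The paper's proof uses precisely the two-entry perturbation $\s^{\mu}_{\hh}=\sy-\mu$, $\s^{\mu}_{\yy}=\sh+\mu$ with all other entries fixed, and optimizes over $\mu$ in closed form. It simply asserts that $\s^{\mu}$ ``can be realized by some $h^\mu\in\sH$''. For $\num\ge2$ that assertion is false for exactly the reason you identify, so the paper does not supply the missing idea either.

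A smaller slip: $\sum_{y'}q(y')$ can be as large as $n\,\ov\sfL_{\rm{sum}}$, so it cannot be bounded by $\ov\sfL_{\rm{sum}}$. The paper's constant arises only because just two weights enter, with $(\ov\sfL_{\rm{sum}}-\ch)+(\ov\sfL_{\rm{sum}}-\cy)\le2\ov\sfL_{\rm{sum}}$.

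In fact the step cannot be completed for arbitrary costs $\ov\sfL$. At $\tau=0$ we have $-\log\syy=\sum_i\log\bigl(1+e^{-2y'_i h(x,i)}\bigr)$. So the conditional surrogate risk decouples into $\sum_i\bigl[Q_i^+\log(1+e^{-2h(x,i)})+Q_i^-\log(1+e^{2h(x,i)})\bigr]$, with $Q_i^{\pm}=\sum_{y'\colon y'_i=\pm1}q(y')$. Its minimizer predicts the coordinatewise majority of these marginals, not the maximizer $\yy$ of $q$.

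Concretely, take $\num=2$, a single $x$, and $\ov\sfL(y',y)=g(y')$ with $g(+,+)=g(-,+)=1$, $g(+,-)=0$, $g(-,-)=3$. Then $\ov\sfL_{\rm{sum}}=20$ and $q=(19,20,19,17)$ on $\bigl((+,+),(+,-),(-,+),(-,-)\bigr)$. This gives $Q_1^+=39>36=Q_1^-$ and $Q_2^+=38>37=Q_2^-$. The exact surrogate minimizer (finite, with both scores positive) therefore predicts $(+,+)$, with target regret $1$. Its surrogate regret and both minimizability gaps are $0$, so the claimed bound would give $1\le\Gamma(0)=0$.

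Two things can be salvaged.
\begin{itemize}
\item For $\tau\ge1$, compare instead with the limiting point mass at $\yy$, taking $h(x,i)=t\,\yy_i$ with $t\to\infty$ (allowed by symmetry and completeness). Using $\sum_{y'}\syy^\tau\le1$ and $\sh\ge1/n$ (since $\hh$ is the mode), you get $\Delta\sC_{\sfL_\tau,\sH}(h,x)\ge\sh^\tau(\ch-\cy)/\tau\ge(\ch-\cy)/(\tau n^\tau)$. That is the stated linear $\Gamma$.
\item For $\tau=0$ with the label-additive costs $\sfL_{\bgamma}$ of \eqref{eq:target-equiv-2}, which are all the paper needs downstream, one has $Q_k^+-Q_k^-=2^{\num-1}\bigl(c_k(-)-c_k(+)\bigr)$. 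Here $c_k(\pm)$ is the conditional cost of predicting $\pm1$ on label $k$. A per-label Pinsker bound combined with Cauchy--Schwarz over labels then recovers $\Gamma(t)=2\sqrt{\ov\sfL_{\rm{sum}}t}$.
\end{itemize}
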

The analysis here departs significantly from the binary setting in
\citep{mao2025principled}. In the binary case, $\sH$-consistency
proofs often leverage the simplicity of the scalar score margin. In
the multi-label setting, the conditional regret involves a supremum
over a combinatorial structure $\sY$. Our proof of
Theorem~\ref{Thm:bound_comp} relies on a novel construction of a
comparator hypothesis $h_\mu$ (Lemma~\ref{lemma:delta_target} in Appendix~\ref{app:proof-bound-comp}) that
perturbs the predictive distribution on specific label configurations,
a technique without a direct analogue in the binary domain. The complete formal proof of Theorem~\ref{Thm:bound_comp} is provided in Appendix~\ref{app:bound_comp-sum}.
This proof strategy is novel and fundamentally different from the one
used in \citet{mao2023cross}, which applies only to the zero-one
loss. Their method operates on individual scores $h(x, y)$, whereas
ours treats the predictive distribution as a whole. This shift enables
a much simpler constrained optimization over $\mu$ that remains
tractable even for general cost-sensitive losses, something the
previous method cannot accommodate.

Moreover, by \citep[lemma~2.1]{mao2024universal}, the minimizability
gaps $\sM_{\sfL_{\tau}}(\sH)$ and $\sM_{\sfL}(\sH)$ vanish for the
family of all measurable functions ($\sH = \sH_{\rm{all}}$). Thus, the
$\sH$-consistency bounds in Theorem~\ref{Thm:bound_comp} directly
imply Bayes-consistency for these cost-sensitive comp-sum type losses.
\begin{corollary}
The cost-sensitive comp-sum type loss $\sfL_{\tau}$ is
Bayes-consistent with respect to the target cost-sensitive loss
$\sfL$.
\end{corollary}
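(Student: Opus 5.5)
The corollary follows from Theorem~\ref{Thm:bound_comp} once we specialize to $\sH = \sH_{\rm{all}}$ and remove the minimizability gaps. First, $\sH_{\rm{all}}$ satisfies the hypotheses of the theorem. It is symmetric, since $h(x,k)$ can be chosen as any measurable function independently for each $k$, so $\sF$ is the family of all measurable real-valued functions on $\sX$. It is complete, since constant functions already attain every real value. Hence \eqref{eq:bound_comp} holds for every $h \in \sH_{\rm{all}}$ and every distribution.

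Next, I would show that $\sM_{\sfL_{\tau}}(\sH_{\rm{all}}) = \sM_{\loss}(\sH_{\rm{all}}) = 0$, citing \citep[lemma~2.1]{mao2024universal}. The underlying reason is that for $\sH_{\rm{all}}$ the infimum over $h$ of the conditional risk can be taken pointwise in $x$. Both losses depend on $h$ only through the score vector $h(x) \in \Rset^\num$. One can therefore select, measurably and up to $\epsilon$, a near-minimizing score vector for each $x$, which gives $\sE^*_\ell(\sH_{\rm{all}}) \le \E_x[\inf_h \E[\ell \mid x]] + \epsilon$. The reverse inequality always holds. This measurable-selection step is the only delicate point, and it is exactly what the cited lemma provides. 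For $\sfL$ it is trivial, since only the finitely many values $\hh(x) \in \sY$ matter.

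With both gaps zero, the bound reads $\sE_{\loss}(h) - \sE^*_{\loss}(\sH_{\rm{all}}) \le \Gamma(\sE_{\sfL_\tau}(h) - \sE^*_{\sfL_\tau}(\sH_{\rm{all}}))$. Take any sequence $h_n$ with $\sE_{\sfL_\tau}(h_n) - \sE^*_{\sfL_\tau}(\sH_{\rm{all}}) \to 0$. The function $\Gamma$ is either $2\sqrt{\ov\sfL_{\rm sum} n^\tau t}$ or $\tau n^\tau t$. In both cases it is continuous at $0$ with $\Gamma(0)=0$, so $\Gamma(\cdot)$ applied to the surrogate excess error tends to $0$. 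The target excess error is nonnegative, so it also tends to $0$. This is exactly Bayes-consistency of $\sfL_\tau$ with respect to $\sfL$.
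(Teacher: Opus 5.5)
Your deduction is the same as the paper's: specialize Theorem~\ref{Thm:bound_comp} to $\sH_{\rm all}$ (which is indeed symmetric and complete), remove both minimizability gaps via the cited lemma, and use $\Gamma(t) \to 0$ as $t \to 0$. Each of these steps is fine. The problem is the premise you take as given, that \eqref{eq:bound_comp} holds for every $h \in \sH_{\rm all}$, every distribution and every cost $\ov\sfL$. For $\num \geq 2$ and general $\ov\sfL$ this is false, and so is the corollary, already for $\tau = 0$.

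The source of the failure is the score structure. The score of a configuration $y'$ is $\sum_i y'_i h(x,i)$, so the induced distribution factorizes: $\s(h,x,y') = \prod_{i=1}^{\num} p_i(y'_i)$ with $p_i(\pm 1) = e^{\pm h(x,i)}/(e^{h(x,i)} + e^{-h(x,i)})$. Hence only product distributions on $\sY$ are realizable. The comparator $\s^\mu$ in the proof of Theorem~\ref{Thm:bound_comp} moves mass between $\hh$ and $\yy$ only, so it is not a product distribution. It is therefore not realized by any $h^\mu \in \sH$, and the lower bound on the surrogate conditional regret is unjustified.

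Concretely, for $\tau = 0$ the conditional surrogate risk is $-\sum_{i=1}^{\num} \bracket*{W_i^+ \log p_i(+1) + W_i^- \log p_i(-1)}$, where $W_i^{\pm} = \sum_{y' \colon y'_i = \pm 1} (\ov\sfL_{\rm sum} - \cyy)$. It is minimized label by label at $p_i(+1) = W_i^+/(W_i^+ + W_i^-)$. So the surrogate-optimal prediction is $\hh_i = +1$ iff $\sum_{y'\colon y'_i = +1} \cyy \leq \sum_{y' \colon y'_i = -1} \cyy$. This is a marginal rule, not $\argmin_{y'} \cyy$.

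Take $\num = 2$, a single point $x$, a deterministic label $y_0$, and let $\ov\sfL(\cdot, y_0)$ take the values $0, 3, 1, 1$ on $(+1,+1), (+1,-1), (-1,+1), (-1,-1)$, with all other entries of $\ov\sfL$ equal to $0$. Then $\ov\sfL_{\rm sum} = 5$, $W_1^{\pm} = 7, 8$ and $W_2^{\pm} = 9, 6$. The surrogate minimizer $h^*$ (a constant, finite score vector) therefore predicts $(-1,+1)$, which costs $1$, while $(+1,+1)$ costs $0$. Both minimizability gaps vanish here, so \eqref{eq:bound_comp} at $h^*$ would force target excess error at most $\Gamma(0) = 0$, yet it equals $1$. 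The constant sequence $h_n = h^*$ likewise violates Bayes-consistency.

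What is true, and what the paper actually needs, is the label-additive case $\ov\sfL = \sfL_{\bgamma} + C_0$ of \eqref{eq:target-equiv-2}. There $\cyy = C_0 + \sum_k c_k(y'_k)$ with $c_k(s) = \E_{y \mid x}\bracket*{\gamma_k^1 s y_k + \gamma_k^2 y_k + \gamma_k^3 s + \gamma_k^4}$. Hence $W_i^+ - W_i^- = 2^{\num - 1}\paren*{c_i(-1) - c_i(+1)}$, and the marginal rule coincides with the Bayes decision, which for additive costs is taken label by label. The $\tau = 0$ risk then splits into $\num$ independent weighted binary logistic problems. A per-label Pinsker bound, summed over labels, yields Bayes-consistency, and in fact an $\sH$-consistency bound for symmetric complete $\sH$. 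To make your proof correct, restrict the statement to such additive costs and replace the appeal to Theorem~\ref{Thm:bound_comp} with this direct argument.
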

Theorem~\ref{Thm:bound_comp} offers stronger, quantitative bounds than
Bayes-consistency, especially when minimizability gaps are zero. It
implies that if the estimation error of the surrogate
$\sE_{\sfL_{\tau}}(h)-\sE_{\sfL_{\tau}}^*(\sH)$ is reduced to $\e$,
then the estimation error of the target loss
$\sE_{\sfL}(h)-\sE_{\sfL}^*(\sH)$ is upper-bounded by $2\sqrt{\ov
  \sfL_{\rm{sum}}\e}$ for the cost-sensitive logistic-type loss
($\tau=0$), by $2\sqrt{\ov \sfL_{\rm{sum}} n^{\tau}\e}$ for the
cost-sensitive generalized cross-entropy-type loss ($\tau \in (0,1)$),
and by $n \e$ for the cost-sensitive mean absolute error-type loss
(when $\tau=1$).

\subsection{Computational Scalability: Exact \texorpdfstring{$\mathcal{O}(\num)$}{O(l)} Factorization}
\label{sec:efficient}
A naive implementation of the surrogate loss $\sfL_{\tau}$ (Eq.~\eqref{eq:sur}) appears to require evaluating summations over the entire combinatorial label space $\sY$, which scales as $\mathcal{O}(2^\num)$. This would render the method completely intractable for datasets with large numbers of labels. However, because our generalized target metrics decompose linearly over individual labels (as defined in Eq.~\eqref{eq:target-equiv-2}), the cost term is purely additive across labels. We formally prove that this additive structure intertwines with the exponential comp-sum surrogate such that the $2^\num$ combinatorial summations perfectly factorize into a product of $\num$ independent terms. 

For instance, when $\tau = 0$ (the logistic-type surrogate used in our deep learning experiments), the logarithm converts the inner exponential product into a sum of $\num$ independent binary losses. The outer summation over $\sY$ evaluates matching and cross-terms that can be collapsed precisely using precomputed 1D marginal arrays. We rigorously prove that computing the exact surrogate loss $\sfL_{\tau}$ requires only $\mathcal{O}(\num)$ operations, scaling linearly with the number of labels and exactly matching the computational efficiency of standard Binary Cross-Entropy (BCE) with zero approximations. The complete algebraic derivation of this exact $\mathcal{O}(\num)$ factorization for all $\tau \ge 0$ is provided in Appendix~\ref{app:factorization}.

\section{Algorithms for Optimizing Generalized Metrics}
\label{sec:algo}

\begin{figure}[t]
\centering
\begin{tikzpicture}[node distance=0.8cm and 2.2cm, auto, thick, scale=0.85, every node/.style={transform shape, inner sep=5pt}]
  \node (metric) [draw, rectangle, rounded corners, align=center, fill=blue!10, text width=3.2cm, minimum height=1.2cm] {Target Generalized\\ Metric $\cL(h)$};
  \node (cost) [draw, rectangle, rounded corners, right=of metric, align=center, fill=blue!10, text width=3.2cm, minimum height=1.2cm] {Linearized Cost\\ $\ell^\lambda = \lambda \ell_\bbeta - \ell_\balpha$};
  \node (surrogate) [draw, rectangle, rounded corners, right=of cost, align=center, fill=green!10, text width=3.2cm, minimum height=1.2cm] {Factorized \texorpdfstring{$\mathcal{O}(\num)$}{O(l)}\\ Surrogate $\sfL_\tau^\lambda$};
  \node (opt) [draw, rectangle, rounded corners, below=of surrogate, align=center, fill=orange!10, text width=3.2cm, minimum height=1.2cm] {Network Backprop\\ Update over $\sH$};
  \node (lambda) [draw, rectangle, rounded corners, below=of cost, align=center, fill=red!10, text width=3.2cm, minimum height=1.2cm] {Adaptive EMA\\ \texorpdfstring{$\lambda$}{lambda} Update};
  
  \draw[->] (metric) -- (cost);
  \draw[->] (cost) -- (surrogate);
  \draw[->] (surrogate) -- (opt);
  \draw[->] (opt) -- node[above, font=\small] {Batch Eval} (lambda);
  \draw[->] (lambda) -- (cost);
\end{tikzpicture}
\caption{The \MMO\ Pipeline: Optimizing generalized fractional metrics requires alternating updates between minimizing the cost-sensitive $\sH$-consistent surrogate and updating the fractional parameter $\lambda$.}
\label{fig:pipeline}
\end{figure}

This section presents our algorithms for optimizing generalized metrics in multi-label learning, grounded in our cost-sensitive surrogate constructions. We provide a high-level overview here; full derivations, formal pseudo-codes (Oracle Binary Search, Surrogate Search, and Cross-Validation), and rigorous convergence bounds appear in Appendix~\ref{app:algo}.

We start with a characterization of $\lambda^* = \cL^*(\sH)$. Define the linearized loss $\ell^\lambda(h) = \lambda \ell_\bbeta(h) - \ell_\balpha(h)$ with expected value: $\sE_{\ell^\lambda}(h) = \lambda \E \bracket*{ \ell_{\bbeta}(h, x, y)} - \E \bracket*{\ell_{\balpha}(h, x, y)}$.
\begin{restatable}{theorem}{LambdaStar}
\label{thm:lambda-star}
We have $\sE_{\ell^{\lambda^*\!\!}}^*(\sH) = 0$ and, for any $\lambda \in \Rset$, $\sign\paren[\big]{\sE_{\ell^\lambda}^*(\sH)} = \sign(\lambda - \lambda^* )$.
\end{restatable}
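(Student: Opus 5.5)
We write $A(h) = \E[\ell_\balpha(h,x,y)]$ and $B(h) = \E[\ell_\bbeta(h,x,y)]$. With this notation $\sE_{\ell^\lambda}(h) = \lambda B(h) - A(h) = B(h)\bigl(\lambda - \cL(h)\bigr)$ for every $h \in \sH$. The standing assumptions give $0 < \ul_\bbeta \le B(h) \le \ol_\bbeta < +\infty$. The plan is to read off the sign of $\sE^*_{\ell^\lambda}(\sH) = \inf_{h\in\sH} B(h)(\lambda - \cL(h))$ directly from this factorization, in three cases. We do not rely on the supremum $\lambda^* = \cL^*(\sH)$ being attained, so Theorem~\ref{thm:equiv} is not invoked; the argument is self-contained, and it also recovers the first claim, $\sE^*_{\ell^{\lambda^*}}(\sH) = 0$, as the middle case.

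\emph{Case $\lambda > \lambda^*$.} For every $h$ we have $\lambda - \cL(h) \ge \lambda - \lambda^* > 0$. Therefore $\sE_{\ell^\lambda}(h) \ge \ul_\bbeta(\lambda - \lambda^*)$, and taking the infimum gives $\sE^*_{\ell^\lambda}(\sH) \ge \ul_\bbeta(\lambda-\lambda^*) > 0$. The lower bound $\ul_\bbeta > 0$ is exactly what prevents the infimum from collapsing to $0$.

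\emph{Case $\lambda < \lambda^*$.} Set $\e = (\lambda^* - \lambda)/2$. By the definition of the supremum there is $h$ with $\cL(h) > \lambda^* - \e$, so $\lambda - \cL(h) < -\e$. Since $B(h) \ge \ul_\bbeta$, this gives $\sE_{\ell^\lambda}(h) \le -\ul_\bbeta\,\e < 0$. Hence $\sE^*_{\ell^\lambda}(\sH) < 0$. This step needs $\lambda^*$ finite; that holds because $\lambda^* \ge \cL(h_0)$ for any fixed $h_0$, and we take $\cL$ bounded above, which follows from $A$ bounded and $B \ge \ul_\bbeta$. That boundedness is implicit in the setup, since $A$ is a finite combination of bounded quantities.

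\emph{Case $\lambda = \lambda^*$.} For every $h$, $\lambda^* - \cL(h) \ge 0$, so $\sE_{\ell^{\lambda^*}}(h) \ge 0$ and the infimum is $\ge 0$. For the reverse inequality, take a sequence $h_n$ with $\cL(h_n) \to \lambda^*$. Then $0 \le \sE_{\ell^{\lambda^*}}(h_n) \le \ol_\bbeta\bigl(\lambda^* - \cL(h_n)\bigr) \to 0$, so $\sE^*_{\ell^{\lambda^*}}(\sH) = 0$. This case is the main subtlety: if we tried to use only $B > 0$, the infimum could still be $0$ without the ratio converging, so the upper bound $\ol_\bbeta < \infty$ is what is needed to pass from $\cL(h_n) \to \lambda^*$ to $\sE \to 0$. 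With $\sign(0) = 0$ interpreted as the zero case, the three cases together give $\sign(\sE^*_{\ell^\lambda}(\sH)) = \sign(\lambda - \lambda^*)$.
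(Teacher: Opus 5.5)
Your proof is correct and follows the paper's argument: the same factorization $\sE_{\ell^\lambda}(h) = B(h)\bigl(\lambda - \cL(h)\bigr)$, with $\ol_\bbeta$ forcing $\sE^*_{\ell^{\lambda^*}}(\sH) = 0$, $\ul_\bbeta$ giving the lower bound $\ul_\bbeta(\lambda - \lambda^*) > 0$ when $\lambda > \lambda^*$, and a near-maximizer of $\cL$ giving a strictly negative value when $\lambda < \lambda^*$. Your explicit check that $\lambda^*$ is finite is a small addition the paper leaves implicit. Because you prove the full trichotomy, the sign identity also holds under the paper's own convention $\sign(0) = 1$.
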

The proof of Theorem~\ref{thm:lambda-star} is included in Appendix~\ref{app:algo-lambda}. This characterization enables a binary search procedure to recover $\lambda^*$. Because we only have access to empirical finite samples, we approximate the true expectation via the empirical minimization of the $\sH$-consistent surrogate $\sfL_\tau$. In Appendix~\ref{app:algo-cv}, we prove that cross-validating over $\lambda$ converges to an $\e_m$-optimal $\lambda^*$. The pseudocode is provided below in Algorithm~\ref{alg:cv-main} and formally
detailed in Algorithm~\ref{alg:binary-search-cv-formal} in Appendix~\ref{app:algo-cv}.

\begin{algorithm}[htbp]
\caption{Cross-Validation for Generalized Metrics}
\begin{algorithmic}[1]
  \STATE \textbf{Input:} step size $\e$, range $[\lambda_{\min}, \lambda_{\max}]$.
  
  \STATE \textbf{Initialize:} $v_{\text{best}} \gets -\infty$,
  $\h h_{\text{best}} \gets h_{\text{null}}$.
  
\FOR{$\lambda = \lambda_{\max}$ to $\lambda_{\min}$ step $-\e$}
\STATE Train $\h h_\lambda \gets \argmin_{h \in \sH} \h \sE_{\sfL_{\tau}, S}(h)$ on training data.

\STATE Evaluate $\h \cL_S(\h h_\lambda)$ on validation data.

\IF{$\h \cL_S(\h h_\lambda) >$ $v_{\text{best}}$}
\STATE $v_{\text{best}} \gets \h \cL_S(\h h_\lambda)$; $\h h_{\text{best}} \gets \h h_\lambda$
\ENDIF
\ENDFOR
\RETURN $\h h_{\text{best}}$
\end{algorithmic}
\label{alg:cv-main}
\end{algorithm}

\textbf{Practical Deep Learning Implementation.} While discrete grid search over $\lambda$ provides strict theoretical upper bounds, retraining deep neural networks to completion for dozens of candidate $\lambda$ values is computationally prohibitive. Instead, we adaptively update $\lambda$ \emph{on the fly} during training. During each mini-batch optimization step:
1. We compute the current batch's metric value $\text{Metric}_{\text{batch}}$.
2. We update the fractional parameter dynamically via an Exponential Moving Average (EMA): $\lambda_{\text{new}} \gets \gamma \lambda_{\text{old}} + (1-\gamma) \text{Metric}_{\text{batch}}$ (e.g., $\gamma=0.7$).
3. The network weights are updated by minimizing the exact $\mathcal{O}(\num)$ surrogate loss $\sfL_{\tau}^\lambda$ using the current $\lambda$.

This single-pass, alternating strategy (Figure~\ref{fig:pipeline}) efficiently stabilizes the network, avoids exhaustive grid searches, and maintains training runtimes identical to BCE.

\textbf{Dynamic Handling of Label Sparsity.} Large-scale multi-label datasets (e.g., MS-COCO) are heavily dominated by True Negatives, causing symmetric losses to struggle. \MMO\ naturally mitigates this through its dynamic $\lambda$ tracking. For $F_1$, the cost-sensitive loss minimizes $\ell^\lambda \propto \lambda(2\tp + \fp + \fn) - 2\tp$. The relative penalty for a False Negative (misclassifying a rare positive) compared to a True Positive is $2-\lambda$. The relative penalty for a False Positive compared to a True Negative is $\lambda$. As $\lambda$ tracks the metric value during training (e.g., $\lambda \approx 0.7$), it automatically applies a much larger penalty to rare positives ($1.3$) and aggressively discounts ubiquitous background negatives ($0.7$), natively counteracting extreme sparsity without manual class-weight tuning.

\section{Experiments}
\label{sec:experiments}

We evaluate our \MMO\ framework in two distinct settings: large-scale datasets using deep neural networks (to demonstrate scalability against modern continuous baselines), and standard benchmarks using linear models (to provide a direct comparison with prior theoretical plug-in works). 

\subsection{Deep Learning Evaluation on Large-Scale Datasets}
\label{sec:exp-deep}

\begin{table*}[t]
\centering
\caption{Deep learning results on highly sparse real-world datasets (mean $\pm$ std over 5 runs). Comparison of \MMO\ against classical EUM thresholding and modern continuous DL baselines.}
\label{tab:comparison_dl}
\resizebox{\textwidth}{!}{%
\begin{tabular}{l|ccc|ccc}
\toprule
\multirow{2}{*}{Method} & \multicolumn{3}{c|}{MS-COCO (ResNet-50)} & \multicolumn{3}{c}{Reuters-21578 (DistilBERT)} \\
 & Micro-$F_1$ & Macro-$F_1$ & Inst.-Jaccard & Micro-$F_1$ & Macro-$F_1$ & Inst.-Jaccard \\ \midrule
Binary Relevance (BCE) & 0.6964 $\pm$ 0.0015 & 0.6349 $\pm$ 0.0026 & 0.6103 $\pm$ 0.0024 & 0.8469 $\pm$ 0.0081 & 0.2417 $\pm$ 0.0329 & 0.8374 $\pm$ 0.0137 \\
Algorithm 1 \citep{koyejo2015consistent} & 0.6973 $\pm$ 0.0015 & 0.6387 $\pm$ 0.0021 & 0.6116 $\pm$ 0.0028 & 0.8520 $\pm$ 0.0054 & 0.2860 $\pm$ 0.0254 & 0.8539 $\pm$ 0.0094 \\
Macro-Thres \citep{koyejo2015consistent} & 0.6942 $\pm$ 0.0042 & 0.6412 $\pm$ 0.0009 & 0.6094 $\pm$ 0.0053 & 0.8194 $\pm$ 0.0101 & 0.3311 $\pm$ 0.0363 & 0.8501 $\pm$ 0.0058 \\ \midrule
SigmoidF1 \citep{benedict2022sigmoidf1} & 0.6952 $\pm$ 0.0031 & 0.6401 $\pm$ 0.0025 & 0.6112 $\pm$ 0.0035 & 0.8540 $\pm$ 0.0050 & 0.3204 $\pm$ 0.0195 & 0.8561 $\pm$ 0.0042 \\ 
Asym.\ Loss (ASL) \citep{ridnik2021asymmetric} & 0.6985 $\pm$ 0.0028 & 0.6455 $\pm$ 0.0011 & 0.6138 $\pm$ 0.0019 & 0.8582 $\pm$ 0.0045 & 0.3411 $\pm$ 0.0081 & 0.8640 $\pm$ 0.0045 \\
\midrule
\textbf{\MMO\ Algorithm ($\tau=0$)} & \textbf{0.7006 $\pm$ 0.0034} & \textbf{0.6470 $\pm$ 0.0018} & \textbf{0.6161 $\pm$ 0.0034} & \textbf{0.8634 $\pm$ 0.0042} & \textbf{0.3499 $\pm$ 0.0053} & \textbf{0.8695 $\pm$ 0.0035} \\ \bottomrule
\end{tabular}%
}
\end{table*}

To demonstrate modern scalability, we evaluated \MMO\ (using the single-pass EMA $\lambda$ update) on \texttt{MS-COCO} \citep{lin2014microsoft} with a ResNet-50 backbone \citep{he2016deep}, and \texttt{Reuters-21578} \citep{lewis1995evaluating} fine-tuning DistilBERT \citep{sanh2019distilbert}. We compared \MMO\ against the classical threshold tuning methods \citep{koyejo2015consistent} and two state-of-the-art modern continuous deep learning surrogate losses: \emph{SigmoidF1} \citep{benedict2022sigmoidf1} and \emph{Asymmetric Loss (ASL)} \citep{ridnik2021asymmetric}.

For \texttt{MS-COCO}, we resized images to $224 \times 224$ and trained using the Adam optimizer \citep{kingma2014adam} with learning rate $1 \times 10^{-4}$ and batch size $64$ for $10$ epochs. For \texttt{Reuters-21578}, we tokenized text with a maximum length of 128 and fine-tuned DistilBERT using the AdamW optimizer \citep{loshchilov2017decoupled} with learning rate $5 \times 10^{-5}$ and batch size $32$ for $4$ epochs. For \MMO, the Lagrange multiplier $\lambda$ was updated iteratively using the validation performance proxy. We repeated all experiments over $5$ independent runs.

Table~\ref{tab:comparison_dl} summarizes the performance. \MMO\ robustly outperforms recent heuristic deep learning baselines (SigmoidF1, ASL) across all metrics, demonstrating the empirical strength of formal $\sH$-consistency guarantees. Furthermore, \MMO\ consistently surpasses the two-stage thresholding approaches. While threshold tuning improves the performance of the BCE-trained model, it is strictly limited by the quality of the base probabilities. By adapting the network representations specifically to the target metric using an $\sH$-consistent surrogate, \MMO\ yields fundamentally superior representations.

\subsection{Standard Benchmarks with Linear Models}
\label{sec:exp-linear}

\begin{table*}[t]
\caption{Comparison of Micro-, Macro-, and Instance-averaged $F_1$ and Jaccard scores on Mulan datasets (mean $\pm$ std over 10 runs). BR, Algorithm 1, and Macro-Thres follow \citep{koyejo2015consistent}.}
\label{tab:comparison-linear}
\begin{center}
\resizebox{\textwidth}{!}{
    \begin{tabular}{@{\hspace{0pt}}lllllll@{\hspace{0pt}}}
    Averaging & Algorithm & Metric & \texttt{Scene} & \texttt{Birds} & \texttt{Emotions} & \texttt{Cal500} \\
      \toprule
   \multirow{8}{*}{Micro} & BR in \citep{koyejo2015consistent}  & \multirow{4}{*}{$F_{1}$} &  0.6559    &  0.4040 &  0.5815 & 0.3647 \\
    & Algorithm 1 in \citep{koyejo2015consistent} & &  0.6847 $\pm$ 0.0072   & 0.4088 $\pm$ 0.0130 & 0.6554 $\pm$ 0.0069 & 0.4891 $\pm$ 0.0035\\
    & Macro-Thres in \citep{koyejo2015consistent} & &  0.6631 $\pm$ 0.0125   &  0.2871 $\pm$ 0.0734 &  0.6419 $\pm$ 0.0174 &  0.4160 $\pm$ 0.0078\\
    & \textbf{\MMO\   Algorithm} & & \textbf{0.6978 $\pm$ 0.0094}  & \textbf{0.4241 $\pm$ 0.0116 } &  \textbf{0.6689 $\pm$ 0.0071 } & \textbf{0.5019 $\pm$ 0.0085}\\
     \cmidrule{2-7}
    & BR in \citep{koyejo2015consistent}  & \multirow{4}{*}{Jaccard} & 0.4878   &  0.2495 &  0.3982 & 0.2229 \\
    & Algorithm 1 in \citep{koyejo2015consistent} & & 0.5151 $\pm$ 0.0084   & 0.2648 $\pm$ 0.0095 &  0.4908 $\pm$ 0.0074 & 0.3225 $\pm$ 0.0024\\
    & Macro-Thres in \citep{koyejo2015consistent} & & 0.5010 $\pm$ 0.0122  & 0.1942 $\pm$ 0.0401 &  0.4790 $\pm$ 0.0077 &  0.2608 $\pm$ 0.0056 \\
    & \textbf{\MMO\   Algorithm} & & \textbf{0.5258 $\pm$ 0.0035}   & \textbf{0.2834 $\pm$ 0.0062} &  \textbf{0.5037 $\pm$ 0.0081} & \textbf{0.3349 $\pm$ 0.0090}\\
    \cmidrule{1-7}
     \multirow{8}{*}{Instance} & BR in \citep{koyejo2015consistent} &
     \multirow{4}{*}{$F_{1}$} & 0.5695 & 0.1209 &  0.4787 & 0.3632\\
    &  Algorithm 1 in \citep{koyejo2015consistent} & & 0.6422 $\pm$ 0.0206   & 0.1390 $\pm$ 0.0110 &  0.6241  $\pm$ 0.0204 & 0.4855 $\pm$ 0.0035\\
    & Macro-Thres in \citep{koyejo2015consistent} & & 0.6303 $\pm$ 0.0167   &  0.1390 $\pm$ 0.0259 & 0.6156 $\pm$ 0.0170 & 0.4135 $\pm$ 0.0079\\
    & \textbf{\MMO\   Algorithm} & & \textbf{0.6548 $\pm$ 0.0150} & \uv{0.1423 $\pm$ 0.0110} & \textbf{0.6323 $\pm$ 0.0052} & \textbf{0.5021 $\pm$ 0.0064} \\
     \cmidrule{2-7}
    & BR in \citep{koyejo2015consistent} & \multirow{4}{*}{Jaccard} &  0.5466  & 0.1058 & 0.4078 & 0.2268 \\
    & Algorithm 1 in \citep{koyejo2015consistent} & & 0.5976 $\pm$ 0.0177   & 0.1239 $\pm$ 0.0077 &  0.5340 $\pm$ 0.0072 & 0.3252 $\pm$ 0.0024\\
    & Macro-Thres in \citep{koyejo2015consistent} & & 0.5902 $\pm$  0.0176 & 0.1195 $\pm$ 0.0096 &  0.5173 $\pm$ 0.0086 & 0.2623 $\pm$ 0.0055\\
    & \textbf{\MMO\   Algorithm} & &  \uv{0.6039 $\pm$ 0.0167} & \textbf{0.1329 $\pm$ 0.0080} & \uv{0.5408 $\pm$ 0.0120} & \textbf{0.3341 $\pm$ 0.0032}\\
    \cmidrule{1-7}
     \multirow{8}{*}{Macro} & BR in \citep{koyejo2015consistent}  & \multirow{4}{*}{$F_{1}$} &  0.6601 &  0.3366  & 0.3366 & 0.1293\\
    & Algorithm 1 in \citep{koyejo2015consistent} & & 0.6941  $\pm$  0.0205  & 0.3448 $\pm$ 0.0110 & 0.6450 $\pm$ 0.0204 & 0.2687 $\pm$ 0.0035\\
    & Macro-Thres in \citep{koyejo2015consistent} & & 0.6737 $\pm$ 0.0137   & 0.2971 $\pm$ 0.0267 &  0.6440 $\pm$ 0.0164 & 0.3226 $\pm$ 0.0068\\
    & \textbf{\MMO\   Algorithm} & &  \textbf{0.7039 $\pm$ 0.0094}  & \textbf{0.3587 $\pm$ 0.0103} & \textbf{0.6540 $\pm$ 0.0191} & \uv{0.3266 $\pm$ 0.0054} \\
     \cmidrule{2-7}
    & BR in \citep{koyejo2015consistent}  & \multirow{4}{*}{Jaccard} &  0.5046  & 0.2178 &  0.3982 & 0.0880 \\
    & Algorithm 1 in \citep{koyejo2015consistent} & & 0.5373 $\pm$ 0.0177 & 0.2341 $\pm$ 0.0077 &  0.4912 $\pm$ 0.0072 & 0.1834 $\pm$ 0.0024\\
    & Macro-Thres in \citep{koyejo2015consistent} & & 0.5260 $\pm$ 0.0176 & 0.2051 $\pm$ 0.0215 &  0.4900 $\pm$ 0.0133 & 0.2146 $\pm$ 0.0036\\
    & \textbf{\MMO\   Algorithm} & &  \textbf{0.5446 $\pm$ 0.0088}  & \uv{0.2352 $\pm$ 0.0063} & \textbf{0.5029 $\pm$ 0.0130} & \uv{0.2189 $\pm$ 0.0062}\\
    \end{tabular}
}
\end{center}
\end{table*}

\textbf{Standard Benchmarks with Linear Models.} We also evaluated \MMO\ on standard benchmark multi-label datasets (\texttt{scene}, \texttt{birds}, \texttt{emotions}, \texttt{cal500}) matching exactly the setup of \citet{koyejo2015consistent}. \MMO\ consistently outperformed Binary Relevance, Algorithm 1, and Macro-Thresholding across all datasets, averaging frameworks, and evaluated metrics. Due to space constraints, the full experimental setup and implementation details for these linear models are deferred to Appendix~\ref{app:linear-benchmarks}.

\subsection{Analysis and Ablations}
\label{sec:ablations}

\textbf{Restricted Capacity \& $\sH$-Consistency Advantage:} Theoretical Bayes-consistent plug-in rules fail when the hypothesis class cannot perfectly capture the true conditional probabilities. To verify the practical advantage of our $\sH$-consistent framework, we conducted an ablation using heavily restricted models (training only a linear classification head on completely frozen pretrained backbones). On \texttt{MS-COCO}, the Bayes-consistent Algorithm 1 degraded to a Micro-$F_1$ of $0.6425 \pm 0.0032$, whereas our $\sH$-consistent \MMO\ successfully adapted the decision boundary under strict constraints to achieve $0.6680 \pm 0.0041$. On \texttt{Reuters-21578}, Algorithm 1 yielded $0.7548 \pm 0.0052$, while \MMO\ achieved $0.7784 \pm 0.0048$.

\textbf{Robustness to Missing Labels:} Because our framework gracefully reduces metric optimization to general cost-sensitive classification, it inherently resists label noise. We simulated missing annotations by randomly dropping 20\% of the positive labels in the training sets. On \texttt{MS-COCO}, Algorithm 1 degraded significantly to $0.6512 \pm 0.0045$, while \MMO\ robustly maintained $0.6854 \pm 0.0038$. On \texttt{Reuters-21578}, Algorithm 1 degraded to $0.8068 \pm 0.0072$, whereas \MMO\ maintained $0.8409 \pm 0.0055$.

\textbf{Training Runtime ($\mathcal{O}(\num)$ Verification):} To verify our exact algebraic factorization computationally, we tracked wall-clock training times over 5 runs. On \texttt{MS-COCO}, standard BCE required $14.52 \pm 0.21$ minutes/epoch, while \MMO\ identically required $14.17 \pm 0.18$ minutes/epoch. On \texttt{Reuters-21578}, BCE required $18.17 \pm 0.01$ s/epoch, while \MMO\ required $18.24 \pm 0.02$ s/epoch. This proves \MMO\ introduces zero computational overhead.

\section{Conclusion}
\label{sec:conclusion}

We introduced a unified and principled framework for optimizing generalized multi-label metrics under the Empirical Utility Maximization (EUM) framework, grounded in the strong, non-asymptotic guarantees of $\sH$-consistency. By reformulating the optimization of complex linear-fractional metrics as a cost-sensitive learning problem, we designed novel surrogate losses that provide rigorous consistency bounds explicitly tailored to the hypothesis class and finite samples.

Crucially, we established that our combinatorially formulated comp-sum surrogates decompose exactly, effectively circumventing the exponential complexity typically associated with the multi-label space. This exact factorization allows the loss to be computed in strictly $\mathcal{O}(\num)$ time without any mathematical approximations, fully matching the computational efficiency of standard binary cross-entropy. Building upon these theoretical foundations, we developed \MMO\ (\emph{Multi-Label Metric Optimization}), a highly scalable algorithmic framework that uses a dynamic, single-pass alternating optimization strategy ideally suited for modern deep learning architectures. 

Extensive empirical evaluations across both standard linear benchmarks and large-scale, highly sparse real-world datasets confirmed the efficacy and scalability of our approach. \MMO\ consistently outperformed classical EUM plug-in rules and state-of-the-art continuous surrogate baselines, while exhibiting remarkable robustness to heavy label sparsity, restricted model capacities, and missing labels. Ultimately, our work bridges the gap between rigorous statistical learning theory and practical deployment, providing a theoretically sound, computationally efficient, and robust drop-in replacement for standard multi-label classification losses.

\clearpage
\newpage
\bibliography{mghcb}

\begin{thebibliography}{115}
\providecommand{\natexlab}[1]{#1}
\providecommand{\url}[1]{\texttt{#1}}
\expandafter\ifx\csname urlstyle\endcsname\relax
  \providecommand{\doi}[1]{doi: #1}\else
  \providecommand{\doi}{doi: \begingroup \urlstyle{rm}\Url}\fi

\bibitem[Awasthi et~al.(2022{\natexlab{a}})Awasthi, Mao, Mohri, and Zhong]{awasthi2022Hconsistency}
P.~Awasthi, A.~Mao, M.~Mohri, and Y.~Zhong.
\newblock {${\mathscr H}$}-consistency bounds for surrogate loss minimizers.
\newblock In \emph{International Conference on Machine Learning}, 2022{\natexlab{a}}.

\bibitem[Awasthi et~al.(2022{\natexlab{b}})Awasthi, Mao, Mohri, and Zhong]{awasthi2022multi}
P.~Awasthi, A.~Mao, M.~Mohri, and Y.~Zhong.
\newblock Multi-class {${\mathscr H}$}-consistency bounds.
\newblock In \emph{Advances in Neural Information Processing Systems}, 2022{\natexlab{b}}.

\bibitem[Bao and Sugiyama(2020)]{bao2020calibrated}
H.~Bao and M.~Sugiyama.
\newblock Calibrated surrogate maximization of linear-fractional utility in binary classification.
\newblock In \emph{International Conference on Artificial Intelligence and Statistics}, pages 2337--2347, 2020.

\bibitem[Bartlett et~al.(2006)Bartlett, Jordan, and McAuliffe]{bartlett2006convexity}
P.~L. Bartlett, M.~I. Jordan, and J.~D. McAuliffe.
\newblock Convexity, classification, and risk bounds.
\newblock \emph{Journal of the American Statistical Association}, 101\penalty0 (473):\penalty0 138--156, 2006.

\bibitem[B{\'e}n{\'e}dict et~al.(2022)B{\'e}n{\'e}dict, Koops, Odijk, de~Rijke, et~al.]{benedict2022sigmoidf1}
G.~B{\'e}n{\'e}dict, H.~Koops, D.~Odijk, M.~de~Rijke, et~al.
\newblock sigmoidf1: A smooth f1 score surrogate loss for multilabel classification.
\newblock \emph{Transactions on Machine Learning Research}, 2022.

\bibitem[Berger and Guda(2020)]{berger2020threshold}
A.~Berger and S.~Guda.
\newblock Threshold optimization for f measure of macro-averaged precision and recall.
\newblock \emph{Pattern Recognition}, 102:\penalty0 107250, 2020.

\bibitem[Berkson(1944)]{Berkson1944}
J.~Berkson.
\newblock Application of the logistic function to bio-assay.
\newblock \emph{Journal of the American Statistical Association}, 39:\penalty0 357--365, 1944.

\bibitem[Berkson(1951)]{Berkson1951}
J.~Berkson.
\newblock Why {I} prefer logits to probits.
\newblock \emph{Biometrics}, 7\penalty0 (4):\penalty0 327--339, 1951.

\bibitem[Berman et~al.(2018)Berman, Triki, and Blaschko]{berman2018lovasz}
M.~Berman, A.~R. Triki, and M.~B. Blaschko.
\newblock The lov{\'a}sz-softmax loss: A tractable surrogate for the optimization of the intersection-over-union measure in neural networks.
\newblock In \emph{Proceedings of the IEEE Conference on Computer Vision and Pattern Recognition}, pages 4413--4421, 2018.

\bibitem[Bogatinovski et~al.(2022)Bogatinovski, Todorovski, D{\v{z}}eroski, and Kocev]{bogatinovski2022comprehensive}
J.~Bogatinovski, L.~Todorovski, S.~D{\v{z}}eroski, and D.~Kocev.
\newblock Comprehensive comparative study of multi-label classification methods.
\newblock \emph{Expert Systems with Applications}, 203:\penalty0 117215, 2022.

\bibitem[Busa-Fekete et~al.(2015)Busa-Fekete, Sz{\"o}r{\'e}nyi, Dembczynski, and H{\"u}llermeier]{busa2015online}
R.~Busa-Fekete, B.~Sz{\"o}r{\'e}nyi, K.~Dembczynski, and E.~H{\"u}llermeier.
\newblock Online f-measure optimization.
\newblock In \emph{Advances in Neural Information Processing Systems}, 2015.

\bibitem[Busa-Fekete et~al.(2022)Busa-Fekete, Choi, Dembczynski, Gentile, Reeve, and Szorenyi]{busa2022regret}
R.~Busa-Fekete, H.~Choi, K.~Dembczynski, C.~Gentile, H.~Reeve, and B.~Szorenyi.
\newblock Regret bounds for multilabel classification in sparse label regimes.
\newblock In \emph{Advances in Neural Information Processing Systems}, pages 5404--5416, 2022.

\bibitem[Cheng et~al.(2016)Cheng, Zhou, Gao, and Zheng]{cheng2016efficient}
F.~Cheng, Y.~Zhou, J.~Gao, and S.~Zheng.
\newblock Efficient optimization of-measure with cost-sensitive svm.
\newblock \emph{Mathematical Problems in Engineering}, 2016\penalty0 (1):\penalty0 1--11, 2016.

\bibitem[Cheng et~al.(2010)Cheng, H{\"u}llermeier, and Dembczynski]{cheng2010bayes}
W.~Cheng, E.~H{\"u}llermeier, and K.~J. Dembczynski.
\newblock Bayes optimal multilabel classification via probabilistic classifier chains.
\newblock In \emph{International Conference on Machine Learning}, pages 279--286, 2010.

\bibitem[Cortes et~al.(2024)Cortes, Mao, Mohri, Mohri, and Zhong]{cortes2024cardinality}
C.~Cortes, A.~Mao, C.~Mohri, M.~Mohri, and Y.~Zhong.
\newblock Cardinality-aware set prediction and top-$ k $ classification.
\newblock In \emph{Advances in Neural Information Processing Systems}, 2024.

\bibitem[Cortes et~al.(2025{\natexlab{a}})Cortes, Mao, Mohri, and Zhong]{cortes2025balancing}
C.~Cortes, A.~Mao, M.~Mohri, and Y.~Zhong.
\newblock Balancing the scales: A theoretical and algorithmic framework for learning from imbalanced data.
\newblock In \emph{International Conference on Machine Learning}, 2025{\natexlab{a}}.

\bibitem[Cortes et~al.(2025{\natexlab{b}})Cortes, Mohri, and Zhong]{cortes2025improved}
C.~Cortes, M.~Mohri, and Y.~Zhong.
\newblock Improved balanced classification with theoretically grounded loss functions.
\newblock In \emph{Advances in Neural Information Processing Systems}, 2025{\natexlab{b}}.

\bibitem[Cortes et~al.(2026{\natexlab{a}})Cortes, Mao, Mohri, and Zhong]{CortesMaoMohriZhong2026defid}
C.~Cortes, A.~Mao, M.~Mohri, and Y.~Zhong.
\newblock Optimized deferral for imbalanced settings.
\newblock In \emph{International Conference on Machine Learning}, 2026{\natexlab{a}}.

\bibitem[Cortes et~al.(2026{\natexlab{b}})Cortes, Mohri, and Zhong]{CortesMohriZhong2026mod}
C.~Cortes, M.~Mohri, and Y.~Zhong.
\newblock A theoretical framework for modular learning of robust generative models.
\newblock In \emph{International Conference on Machine Learning}, 2026{\natexlab{b}}.

\bibitem[Dembczynski et~al.(2011)Dembczynski, Waegeman, Cheng, and H{\"u}llermeier]{dembczynski2011exact}
K.~Dembczynski, W.~Waegeman, W.~Cheng, and E.~H{\"u}llermeier.
\newblock An exact algorithm for f-measure maximization.
\newblock In \emph{Advances in Neural Information Processing Systems}, 2011.

\bibitem[Dembczynski et~al.(2012)Dembczynski, Kot{\l}owski, and H{\"u}llermeier]{dembczynski2012consistent}
K.~Dembczynski, W.~Kot{\l}owski, and E.~H{\"u}llermeier.
\newblock Consistent multilabel ranking through univariate loss minimization.
\newblock In \emph{International Conference on Machine Learning}, pages 1347--1354, 2012.

\bibitem[Dembczynski et~al.(2013)Dembczynski, Jachnik, Kotlowski, Waegeman, and H{\"u}llermeier]{dembczynski2013optimizing}
K.~Dembczynski, A.~Jachnik, W.~Kotlowski, W.~Waegeman, and E.~H{\"u}llermeier.
\newblock Optimizing the f-measure in multi-label classification: Plug-in rule approach versus structured loss minimization.
\newblock In \emph{International Conference on Machine Learning}, pages 1130--1138, 2013.

\bibitem[Dembczy{\'n}ski et~al.(2017)Dembczy{\'n}ski, Kot{\l}owski, Koyejo, and Natarajan]{dembczynski2017consistency}
K.~Dembczy{\'n}ski, W.~Kot{\l}owski, O.~Koyejo, and N.~Natarajan.
\newblock Consistency analysis for binary classification revisited.
\newblock In \emph{International Conference on Machine Learning}, pages 961--969, 2017.

\bibitem[Deng et~al.(2011)Deng, Satheesh, Berg, and Li]{deng2011fast}
J.~Deng, S.~Satheesh, A.~Berg, and F.~Li.
\newblock Fast and balanced: Efficient label tree learning for large scale object recognition.
\newblock In \emph{Advances in Neural Information Processing Systems}, 2011.

\bibitem[DeSalvo et~al.(2025)DeSalvo, Mohri, Mohri, and Zhong]{desalvo2025budgeted}
G.~DeSalvo, C.~Mohri, M.~Mohri, and Y.~Zhong.
\newblock Budgeted multiple-expert deferral.
\newblock \emph{arXiv preprint arXiv:2510.26706}, 2025.

\bibitem[Eban et~al.(2017)Eban, Schain, Mackey, Gordon, Rifkin, and Elidan]{eban2017scalable}
E.~Eban, M.~Schain, A.~Mackey, A.~Gordon, R.~Rifkin, and G.~Elidan.
\newblock Scalable learning of non-decomposable objectives.
\newblock In \emph{Artificial Intelligence and Statistics}, pages 832--840, 2017.

\bibitem[Elisseeff and Weston(2001)]{elisseeff2001kernel}
A.~Elisseeff and J.~Weston.
\newblock A kernel method for multi-labelled classification.
\newblock In \emph{Advances in Neural Information Processing Systems}, 2001.

\bibitem[Fathony and Kolter(2020)]{fathony2020ap}
R.~Fathony and Z.~Kolter.
\newblock Ap-perf: Incorporating generic performance metrics in differentiable learning.
\newblock In \emph{International Conference on Artificial Intelligence and Statistics}, pages 4130--4140, 2020.

\bibitem[Gao and Zhou(2011)]{gao2011consistency}
W.~Gao and Z.-H. Zhou.
\newblock On the consistency of multi-label learning.
\newblock In \emph{Conference on Learning Theory}, pages 341--358, 2011.

\bibitem[Ghosh et~al.(2017)Ghosh, Kumar, and Sastry]{ghosh2017robust}
A.~Ghosh, H.~Kumar, and P.~S. Sastry.
\newblock Robust loss functions under label noise for deep neural networks.
\newblock In \emph{Proceedings of the AAAI Conference on Artificial Intelligence}, 2017.

\bibitem[He et~al.(2016)He, Zhang, Ren, and Sun]{he2016deep}
K.~He, X.~Zhang, S.~Ren, and J.~Sun.
\newblock Deep residual learning for image recognition.
\newblock In \emph{Proceedings of the IEEE Conference on Computer Vision and Pattern Recognition}, pages 770--778, 2016.

\bibitem[Joachims(2005)]{joachims2005support}
T.~Joachims.
\newblock A support vector method for multivariate performance measures.
\newblock In \emph{International Conference on Machine Learning}, pages 377--384, 2005.

\bibitem[Kapoor et~al.(2012)Kapoor, Viswanathan, and Jain]{kapoor2012multilabel}
A.~Kapoor, R.~Viswanathan, and P.~Jain.
\newblock Multilabel classification using bayesian compressed sensing.
\newblock In \emph{Advances in Neural Information Processing Systems}, 2012.

\bibitem[Kar et~al.(2014)Kar, Narasimhan, and Jain]{kar2014online}
P.~Kar, H.~Narasimhan, and P.~Jain.
\newblock Online and stochastic gradient methods for non-decomposable loss functions.
\newblock In \emph{Advances in Neural Information Processing Systems}, 2014.

\bibitem[Kingma and Ba(2014)]{kingma2014adam}
D.~P. Kingma and J.~Ba.
\newblock Adam: A method for stochastic optimization.
\newblock \emph{arXiv preprint arXiv:1412.6980}, 2014.

\bibitem[Kotlowski and Dembczy{\'n}ski(2016)]{kotlowski2016surrogate}
W.~Kotlowski and K.~Dembczy{\'n}ski.
\newblock Surrogate regret bounds for generalized classification performance metrics.
\newblock In \emph{Asian Conference on Machine Learning}, pages 301--316, 2016.

\bibitem[Kot{\l}owski et~al.(2024)Kot{\l}owski, Wydmuch, Schultheis, Babbar, and Dembczy{\'n}ski]{kotlowski2024general}
W.~Kot{\l}owski, M.~Wydmuch, E.~Schultheis, R.~Babbar, and K.~Dembczy{\'n}ski.
\newblock A general online algorithm for optimizing complex performance metrics.
\newblock \emph{arXiv preprint arXiv:2406.14743}, 2024.

\bibitem[Koyejo et~al.(2014)Koyejo, Natarajan, Ravikumar, and Dhillon]{KoyejoNatarajanRavikumarDhillon2014}
O.~Koyejo, N.~Natarajan, P.~Ravikumar, and I.~S. Dhillon.
\newblock Consistent binary classification with generalized performance metrics.
\newblock In \emph{Advances in Neural Information Processing Systems}, pages 2744--2752, 2014.

\bibitem[Koyejo et~al.(2015)Koyejo, Natarajan, Ravikumar, and Dhillon]{koyejo2015consistent}
O.~O. Koyejo, N.~Natarajan, P.~K. Ravikumar, and I.~S. Dhillon.
\newblock Consistent multilabel classification.
\newblock In \emph{Advances in Neural Information Processing Systems}, 2015.

\bibitem[Lewis(1995)]{lewis1995evaluating}
D.~D. Lewis.
\newblock Evaluating and optimizing autonomous text classification systems.
\newblock In \emph{Proceedings of the 18th Annual International ACM SIGIR Conference on Research and Development in Information Retrieval}, pages 246--254, 1995.

\bibitem[Lin et~al.(2014)Lin, Maire, Belongie, Hays, Perona, Ramanan, Doll{\'a}r, and Zitnick]{lin2014microsoft}
T.-Y. Lin, M.~Maire, S.~Belongie, J.~Hays, P.~Perona, D.~Ramanan, P.~Doll{\'a}r, and C.~L. Zitnick.
\newblock Microsoft coco: Common objects in context.
\newblock In \emph{European Conference on Computer Vision}, pages 740--755, 2014.

\bibitem[Lin(2004)]{lin2004note}
Y.~Lin.
\newblock A note on margin-based loss functions in classification.
\newblock \emph{Statistics \& Probability Letters}, 68\penalty0 (1):\penalty0 73--82, 2004.

\bibitem[Lipton et~al.(2014)Lipton, Elkan, and Narayanaswamy]{lipton2014thresholding}
Z.~C. Lipton, C.~Elkan, and B.~Narayanaswamy.
\newblock Thresholding classifiers to maximize f1 score.
\newblock \emph{Stat}, 1050:\penalty0 14, 2014.

\bibitem[Loshchilov and Hutter(2017)]{loshchilov2017decoupled}
I.~Loshchilov and F.~Hutter.
\newblock Decoupled weight decay regularization.
\newblock \emph{arXiv preprint arXiv:1711.05101}, 2017.

\bibitem[Luo et~al.(2021)Luo, Qiao, and Zhang]{luo2021minimax}
J.~Luo, H.~Qiao, and B.~Zhang.
\newblock A minimax probability machine for nondecomposable performance measures.
\newblock \emph{IEEE Transactions on Neural Networks and Learning Systems}, 34\penalty0 (5):\penalty0 2353--2365, 2021.

\bibitem[Mao(2025)]{mao2025theory}
A.~Mao.
\newblock \emph{Theory and Algorithms for Learning with Multi-Class Abstention and Multi-Expert Deferral}.
\newblock PhD thesis, New York University, 2025.

\bibitem[Mao et~al.(2023{\natexlab{a}})Mao, Mohri, Mohri, and Zhong]{MaoMohriMohriZhong2023twostage}
A.~Mao, C.~Mohri, M.~Mohri, and Y.~Zhong.
\newblock Two-stage learning to defer with multiple experts.
\newblock In \emph{Advances in Neural Information Processing Systems}, 2023{\natexlab{a}}.

\bibitem[Mao et~al.(2023{\natexlab{b}})Mao, Mohri, and Zhong]{MaoMohriZhong2023characterization}
A.~Mao, M.~Mohri, and Y.~Zhong.
\newblock {H}-consistency bounds: Characterization and extensions.
\newblock In \emph{Advances in Neural Information Processing Systems}, 2023{\natexlab{b}}.

\bibitem[Mao et~al.(2023{\natexlab{c}})Mao, Mohri, and Zhong]{MaoMohriZhong2023ranking}
A.~Mao, M.~Mohri, and Y.~Zhong.
\newblock {H}-consistency bounds for pairwise misranking loss surrogates.
\newblock In \emph{International conference on Machine learning}, 2023{\natexlab{c}}.

\bibitem[Mao et~al.(2023{\natexlab{d}})Mao, Mohri, and Zhong]{MaoMohriZhong2023rankingabs}
A.~Mao, M.~Mohri, and Y.~Zhong.
\newblock Ranking with abstention.
\newblock In \emph{ICML 2023 Workshop The Many Facets of Preference-Based Learning}, 2023{\natexlab{d}}.

\bibitem[Mao et~al.(2023{\natexlab{e}})Mao, Mohri, and Zhong]{MaoMohriZhong2023structured}
A.~Mao, M.~Mohri, and Y.~Zhong.
\newblock Structured prediction with stronger consistency guarantees.
\newblock In \emph{Advances in Neural Information Processing Systems}, 2023{\natexlab{e}}.

\bibitem[Mao et~al.(2023{\natexlab{f}})Mao, Mohri, and Zhong]{mao2023cross}
A.~Mao, M.~Mohri, and Y.~Zhong.
\newblock Cross-entropy loss functions: Theoretical analysis and applications.
\newblock In \emph{International Conference on Machine Learning}, 2023{\natexlab{f}}.

\bibitem[Mao et~al.(2024{\natexlab{a}})Mao, Mohri, and Zhong]{MaoMohriZhong2024deferral}
A.~Mao, M.~Mohri, and Y.~Zhong.
\newblock Principled approaches for learning to defer with multiple experts.
\newblock In \emph{International Symposium on Artificial Intelligence and Mathematics}, 2024{\natexlab{a}}.

\bibitem[Mao et~al.(2024{\natexlab{b}})Mao, Mohri, and Zhong]{MaoMohriZhong2024predictor}
A.~Mao, M.~Mohri, and Y.~Zhong.
\newblock Predictor-rejector multi-class abstention: Theoretical analysis and algorithms.
\newblock In \emph{International Conference on Algorithmic Learning Theory}, pages 822--867, 2024{\natexlab{b}}.

\bibitem[Mao et~al.(2024{\natexlab{c}})Mao, Mohri, and Zhong]{MaoMohriZhong2024score}
A.~Mao, M.~Mohri, and Y.~Zhong.
\newblock Theoretically grounded loss functions and algorithms for score-based multi-class abstention.
\newblock In \emph{International Conference on Artificial Intelligence and Statistics}, pages 4753--4761, 2024{\natexlab{c}}.

\bibitem[Mao et~al.(2024{\natexlab{d}})Mao, Mohri, and Zhong]{mao2024h}
A.~Mao, M.~Mohri, and Y.~Zhong.
\newblock {$ H $}-consistency guarantees for regression.
\newblock In \emph{International Conference on Machine Learning}, pages 34712--34737, 2024{\natexlab{d}}.

\bibitem[Mao et~al.(2024{\natexlab{e}})Mao, Mohri, and Zhong]{mao2024multi}
A.~Mao, M.~Mohri, and Y.~Zhong.
\newblock Multi-label learning with stronger consistency guarantees.
\newblock In \emph{Advances in Neural Information Processing Systems}, 2024{\natexlab{e}}.

\bibitem[Mao et~al.(2024{\natexlab{f}})Mao, Mohri, and Zhong]{mao2024realizable}
A.~Mao, M.~Mohri, and Y.~Zhong.
\newblock Realizable {$ H $}-consistent and {B}ayes-consistent loss functions for learning to defer.
\newblock In \emph{Advances in Neural Information Processing Systems}, 2024{\natexlab{f}}.

\bibitem[Mao et~al.(2024{\natexlab{g}})Mao, Mohri, and Zhong]{mao2024regression}
A.~Mao, M.~Mohri, and Y.~Zhong.
\newblock Regression with multi-expert deferral.
\newblock In \emph{International Conference on Machine Learning}, pages 34738--34759, 2024{\natexlab{g}}.

\bibitem[Mao et~al.(2024{\natexlab{h}})Mao, Mohri, and Zhong]{mao2024universal}
A.~Mao, M.~Mohri, and Y.~Zhong.
\newblock A universal growth rate for learning with smooth surrogate losses.
\newblock In \emph{Advances in Neural Information Processing Systems}, 2024{\natexlab{h}}.

\bibitem[Mao et~al.(2025{\natexlab{a}})Mao, Mohri, and Zhong]{MaoMohriZhong2025mastering}
A.~Mao, M.~Mohri, and Y.~Zhong.
\newblock Mastering multiple-expert routing: Realizable {$H$}-consistency and strong guarantees for learning to defer.
\newblock In \emph{International Conference on Machine Learning}, 2025{\natexlab{a}}.

\bibitem[Mao et~al.(2025{\natexlab{b}})Mao, Mohri, and Zhong]{mao2025enhanced}
A.~Mao, M.~Mohri, and Y.~Zhong.
\newblock Enhanced {$\sH $}-consistency bounds.
\newblock In \emph{International Conference on Algorithmic Learning Theory}, 2025{\natexlab{b}}.

\bibitem[Mao et~al.(2025{\natexlab{c}})Mao, Mohri, and Zhong]{mao2025principled}
A.~Mao, M.~Mohri, and Y.~Zhong.
\newblock Principled algorithms for optimizing generalized metrics in binary classification.
\newblock In \emph{International Conference on Machine Learning}, 2025{\natexlab{c}}.

\bibitem[McCallum(1999)]{mccallum1999multi}
A.~K. McCallum.
\newblock Multi-label text classification with a mixture model trained by em.
\newblock In \emph{AAAI Workshop on Text Learning}, 1999.

\bibitem[Menon et~al.(2019)Menon, Rawat, Reddi, and Kumar]{menon2019multilabel}
A.~K. Menon, A.~S. Rawat, S.~Reddi, and S.~Kumar.
\newblock Multilabel reductions: what is my loss optimising?
\newblock In \emph{Advances in Neural Information Processing Systems}, 2019.

\bibitem[Mohri et~al.(2024)Mohri, Andor, Choi, Collins, Mao, and Zhong]{MohriAndorChoiCollinsMaoZhong2024learning}
C.~Mohri, D.~Andor, E.~Choi, M.~Collins, A.~Mao, and Y.~Zhong.
\newblock Learning to reject with a fixed predictor: Application to decontextualization.
\newblock In \emph{International Conference on Learning Representations}, 2024.

\bibitem[Mohri and Zhong(2026{\natexlab{a}})]{MohriZhong2026rllm}
M.~Mohri and Y.~Zhong.
\newblock Mind the gap: Structure-aware consistency in preference learning.
\newblock In \emph{International Conference on Machine Learning}, 2026{\natexlab{a}}.

\bibitem[Mohri and Zhong(2026{\natexlab{b}})]{MohriZhong2026slin}
M.~Mohri and Y.~Zhong.
\newblock Linear-core surrogates: Smooth loss functions with linear rates for classification and structured prediction.
\newblock In \emph{International Conference on Machine Learning}, 2026{\natexlab{b}}.

\bibitem[Mohri and Zhong(2026{\natexlab{c}})]{mohri2025beyond}
M.~Mohri and Y.~Zhong.
\newblock Beyond tsybakov: Model margin noise and {H}-consistency bounds.
\newblock In \emph{International Symposium on Artificial Intelligence and Mathematics}, 2026{\natexlab{c}}.

\bibitem[Mohri et~al.(2018)Mohri, Rostamizadeh, and Talwalkar]{MohriRostamizadehTalwalkar2018}
M.~Mohri, A.~Rostamizadeh, and A.~Talwalkar.
\newblock \emph{Foundations of Machine Learning}.
\newblock {MIT} Press, second edition, 2018.

\bibitem[Narasimhan et~al.(2014)Narasimhan, Vaish, and Agarwal]{narasimhan2014statistical}
H.~Narasimhan, R.~Vaish, and S.~Agarwal.
\newblock On the statistical consistency of plug-in classifiers for non-decomposable performance measures.
\newblock In \emph{Advances in Neural Information Processing Systems}, 2014.

\bibitem[Narasimhan et~al.(2015{\natexlab{a}})Narasimhan, Kar, and Jain]{narasimhan2015optimizing}
H.~Narasimhan, P.~Kar, and P.~Jain.
\newblock Optimizing non-decomposable performance measures: A tale of two classes.
\newblock In \emph{International Conference on Machine Learning}, pages 199--208, 2015{\natexlab{a}}.

\bibitem[Narasimhan et~al.(2015{\natexlab{b}})Narasimhan, Ramaswamy, Saha, and Agarwal]{narasimhan2015consistent}
H.~Narasimhan, H.~Ramaswamy, A.~Saha, and S.~Agarwal.
\newblock Consistent multiclass algorithms for complex performance measures.
\newblock In \emph{International Conference on Machine Learning}, pages 2398--2407, 2015{\natexlab{b}}.

\bibitem[Narasimhan et~al.(2016)Narasimhan, Pan, Kar, Protopapas, and Ramaswamy]{narasimhan2016optimizing}
H.~Narasimhan, W.~Pan, P.~Kar, P.~Protopapas, and H.~G. Ramaswamy.
\newblock Optimizing the multiclass f-measure via biconcave programming.
\newblock In \emph{International Conference on Data Mining}, pages 1101--1106, 2016.

\bibitem[Narasimhan et~al.(2019)Narasimhan, Cotter, and Gupta]{narasimhan2019optimizing}
H.~Narasimhan, A.~Cotter, and M.~Gupta.
\newblock Optimizing generalized rate metrics with three players.
\newblock In \emph{Advances in Neural Information Processing Systems}, 2019.

\bibitem[Narasimhan et~al.(2024)Narasimhan, Ramaswamy, Tavker, Khurana, Netrapalli, and Agarwal]{narasimhan2024consistent}
H.~Narasimhan, H.~G. Ramaswamy, S.~K. Tavker, D.~Khurana, P.~Netrapalli, and S.~Agarwal.
\newblock Consistent multiclass algorithms for complex metrics and constraints.
\newblock \emph{Journal of Machine Learning Research}, 25\penalty0 (367):\penalty0 1--81, 2024.

\bibitem[Natarajan et~al.(2016)Natarajan, Koyejo, Ravikumar, and Dhillon]{natarajan2016optimal}
N.~Natarajan, O.~Koyejo, P.~Ravikumar, and I.~Dhillon.
\newblock Optimal classification with multivariate losses.
\newblock In \emph{International Conference on Machine Learning}, pages 1530--1538, 2016.

\bibitem[Parambath et~al.(2014)Parambath, Usunier, and Grandvalet]{ParambathUsunierGrandvalet2014}
S.~P. Parambath, N.~Usunier, and Y.~Grandvalet.
\newblock Optimizing {F}-measures by cost-sensitive classification.
\newblock In \emph{Advances in Neural Information Processing Systems}, pages 2123--2131, 2014.

\bibitem[Petterson and Caetano(2011)]{petterson2011submodular}
J.~Petterson and T.~Caetano.
\newblock Submodular multi-label learning.
\newblock In \emph{Advances in Neural Information Processing Systems}, 2011.

\bibitem[Pillai et~al.(2017)Pillai, Fumera, and Roli]{pillai2017designing}
I.~Pillai, G.~Fumera, and F.~Roli.
\newblock Designing multi-label classifiers that maximize f measures: State of the art.
\newblock \emph{Pattern Recognition}, 61:\penalty0 394--404, 2017.

\bibitem[Ramasubramanian et~al.(2024)Ramasubramanian, Rangwani, Takemori, Samanta, Umeda, and Radhakrishnan]{ramasubramanian2024selective}
S.~Ramasubramanian, H.~Rangwani, S.~Takemori, K.~Samanta, Y.~Umeda, and V.~B. Radhakrishnan.
\newblock Selective mixup fine-tuning for optimizing non-decomposable objectives.
\newblock In \emph{International Conference on Learning Representations}, 2024.

\bibitem[Ramaswamy et~al.(2015)Ramaswamy, Narasimhan, and Agarwal]{ramaswamy2015consistent}
H.~G. Ramaswamy, H.~Narasimhan, and S.~Agarwal.
\newblock Consistent classification algorithms for multi-class non-decomposable performance metrics.
\newblock \emph{arXiv preprint arXiv:1501.00287}, 2015.

\bibitem[Reid and Williamson(2009)]{reid2009surrogate}
M.~D. Reid and R.~C. Williamson.
\newblock Surrogate regret bounds for proper losses.
\newblock In \emph{International Conference on Machine Learning}, pages 897--904, 2009.

\bibitem[Ridnik et~al.(2021)Ridnik, Ben-Baruch, Zamir, Noy, Friedman, Protter, and Zelnik-Manor]{ridnik2021asymmetric}
T.~Ridnik, E.~Ben-Baruch, N.~Zamir, A.~Noy, I.~Friedman, M.~Protter, and L.~Zelnik-Manor.
\newblock Asymmetric loss for multi-label classification.
\newblock In \emph{Proceedings of the IEEE/CVF International Conference on Computer Vision}, pages 82--91, 2021.

\bibitem[Sanh et~al.(2019)Sanh, Debut, Chaumond, and Wolf]{sanh2019distilbert}
V.~Sanh, L.~Debut, J.~Chaumond, and T.~Wolf.
\newblock Distilbert, a distilled version of bert: smaller, faster, cheaper and lighter.
\newblock \emph{arXiv preprint arXiv:1910.01108}, 2019.

\bibitem[Sanyal et~al.(2018)Sanyal, Kumar, Kar, Chawla, and Sebastiani]{sanyal2018optimizing}
A.~Sanyal, P.~Kumar, P.~Kar, S.~Chawla, and F.~Sebastiani.
\newblock Optimizing non-decomposable measures with deep networks.
\newblock \emph{Machine Learning}, 107:\penalty0 1597--1620, 2018.

\bibitem[Schultheis et~al.(2024)Schultheis, Kotlowski, Wydmuch, Babbar, Borman, and Dembczynski]{schultheisconsistent}
E.~Schultheis, W.~Kotlowski, M.~Wydmuch, R.~Babbar, S.~Borman, and K.~Dembczynski.
\newblock Consistent algorithms for multi-label classification with macro-at-$ k $ metrics.
\newblock In \emph{International Conference on Learning Representations}, 2024.

\bibitem[Sokolova and Lapalme(2009)]{sokolova2009systematic}
M.~Sokolova and G.~Lapalme.
\newblock A systematic analysis of performance measures for classification tasks.
\newblock \emph{Information Processing \& Management}, 45\penalty0 (4):\penalty0 427--437, 2009.

\bibitem[Steinwart(2007)]{steinwart2007compare}
I.~Steinwart.
\newblock How to compare different loss functions and their risks.
\newblock \emph{Constructive Approximation}, 26\penalty0 (2):\penalty0 225--287, 2007.

\bibitem[Tarekegn et~al.(2024)Tarekegn, Ullah, and Cheikh]{nega2024deep}
A.~N. Tarekegn, M.~Ullah, and F.~A. Cheikh.
\newblock Deep learning for multi-label learning: A comprehensive survey.
\newblock \emph{arXiv preprint arXiv:2401.16549}, 2024.

\bibitem[Tavker et~al.(2020)Tavker, Ramaswamy, and Narasimhan]{tavker2020consistent}
S.~K. Tavker, H.~G. Ramaswamy, and H.~Narasimhan.
\newblock Consistent plug-in classifiers for complex objectives and constraints.
\newblock In \emph{Advances in Neural Information Processing Systems}, pages 20366--20377, 2020.

\bibitem[Tewari and Bartlett(2007)]{tewari2007consistency}
A.~Tewari and P.~L. Bartlett.
\newblock On the consistency of multiclass classification methods.
\newblock \emph{Journal of Machine Learning Research}, 8\penalty0 (36):\penalty0 1007--1025, 2007.

\bibitem[Tsoumakas et~al.(2011)Tsoumakas, Spyromitros-Xioufis, Vilcek, and Vlahavas]{mulan}
G.~Tsoumakas, E.~Spyromitros-Xioufis, J.~Vilcek, and I.~Vlahavas.
\newblock Mulan: A java library for multi-label learning.
\newblock \emph{Journal of Machine Learning Research}, 12:\penalty0 2411--2414, 2011.

\bibitem[Verhulst(1838)]{Verhulst1838}
P.~F. Verhulst.
\newblock Notice sur la loi que la population suit dans son accroissement.
\newblock \emph{Correspondance Math\'ematique et Physique}, 10:\penalty0 113--121, 1838.

\bibitem[Verhulst(1845)]{Verhulst1845}
P.~F. Verhulst.
\newblock Recherches math\'ematiques sur la loi d'accroissement de la population.
\newblock \emph{Nouveaux M\'emoires de l'Acad\'emie Royale des Sciences et Belles-Lettres de Bruxelles}, 18:\penalty0 1--42, 1845.

\bibitem[Waegeman et~al.(2014)Waegeman, Dembczy{\'n}ski, Jachnik, Cheng, and H{\"u}llermeier]{waegeman2014bayes}
W.~Waegeman, K.~Dembczy{\'n}ski, A.~Jachnik, W.~Cheng, and E.~H{\"u}llermeier.
\newblock On the bayes-optimality of f-measure maximizers.
\newblock \emph{Journal of Machine Learning Research}, 15:\penalty0 3333--3388, 2014.

\bibitem[Wu and Zhu(2020)]{wu2020multi}
G.~Wu and J.~Zhu.
\newblock Multi-label classification: do hamming loss and subset accuracy really conflict with each other?
\newblock In \emph{Advances in Neural Information Processing Systems}, pages 3130--3140, 2020.

\bibitem[Wu et~al.(2021)Wu, Li, Xu, and Zhu]{wu2021rethinking}
G.~Wu, C.~Li, K.~Xu, and J.~Zhu.
\newblock Rethinking and reweighting the univariate losses for multi-label ranking: Consistency and generalization.
\newblock In \emph{Advances in Neural Information Processing Systems}, pages 14332--14344, 2021.

\bibitem[Wu et~al.(2023)Wu, Li, and Yin]{wu2023towards}
G.~Wu, C.~Li, and Y.~Yin.
\newblock Towards understanding generalization of macro-auc in multi-label learning.
\newblock In \emph{International Conference on Machine Learning}, pages 37540--37570, 2023.

\bibitem[Wu and Zhou(2017)]{wu2017unified}
X.-Z. Wu and Z.-H. Zhou.
\newblock A unified view of multi-label performance measures.
\newblock In \emph{International Conference on Machine Learning}, pages 3780--3788, 2017.

\bibitem[Wydmuch et~al.(2018)Wydmuch, Jasinska, Kuznetsov, Busa-Fekete, and Dembczynski]{wydmuch2018no}
M.~Wydmuch, K.~Jasinska, M.~Kuznetsov, R.~Busa-Fekete, and K.~Dembczynski.
\newblock A no-regret generalization of hierarchical softmax to extreme multi-label classification.
\newblock In \emph{Advances in Neural Information Processing Systems}, 2018.

\bibitem[Yan et~al.(2018)Yan, Koyejo, Zhong, and Ravikumar]{yan2018binary}
B.~Yan, S.~Koyejo, K.~Zhong, and P.~Ravikumar.
\newblock Binary classification with karmic, threshold-quasi-concave metrics.
\newblock In \emph{International Conference on Machine Learning}, pages 5531--5540, 2018.

\bibitem[Ye et~al.(2012)Ye, Chai, Lee, and Chieu]{ye2012optimizing}
N.~Ye, K.~M.~A. Chai, W.~S. Lee, and H.~L. Chieu.
\newblock Optimizing {F}-measures: a tale of two approaches.
\newblock In \emph{International Conference on Machine Learning}, pages 1555--1562, 2012.

\bibitem[Yu et~al.(2014)Yu, Jain, Kar, and Dhillon]{yu2014large}
H.-F. Yu, P.~Jain, P.~Kar, and I.~Dhillon.
\newblock Large-scale multi-label learning with missing labels.
\newblock In \emph{International Conference on Machine Learning}, pages 593--601, 2014.

\bibitem[Yu and Blaschko(2015)]{yu2015learning}
J.~Yu and M.~Blaschko.
\newblock Learning submodular losses with the lov{\'a}sz hinge.
\newblock In \emph{International Conference on Machine Learning}, pages 1623--1631, 2015.

\bibitem[Zhang et~al.(2020)Zhang, Ramaswamy, and Agarwal]{zhang2020convex}
M.~Zhang, H.~G. Ramaswamy, and S.~Agarwal.
\newblock Convex calibrated surrogates for the multi-label f-measure.
\newblock In \emph{International Conference on Machine Learning}, pages 11246--11255, 2020.

\bibitem[Zhang and Zhou(2013)]{zhang2013review}
M.-L. Zhang and Z.-H. Zhou.
\newblock A review on multi-label learning algorithms.
\newblock \emph{IEEE Transactions on Knowledge and Data Engineering}, 26\penalty0 (8):\penalty0 1819--1837, 2013.

\bibitem[Zhang(2004{\natexlab{a}})]{Zhang2003}
T.~Zhang.
\newblock Statistical behavior and consistency of classification methods based on convex risk minimization.
\newblock \emph{The Annals of Statistics}, 32\penalty0 (1):\penalty0 56--85, 2004{\natexlab{a}}.

\bibitem[Zhang(2004{\natexlab{b}})]{zhang2004statistical}
T.~Zhang.
\newblock Statistical analysis of some multi-category large margin classification methods.
\newblock \emph{Journal of Machine Learning Research}, 5\penalty0 (Oct):\penalty0 1225--1251, 2004{\natexlab{b}}.

\bibitem[Zhang et~al.(2018)Zhang, Liu, Zhou, and Yang]{zhang2018faster}
X.~Zhang, M.~Liu, X.~Zhou, and T.~Yang.
\newblock Faster online learning of optimal threshold for consistent f-measure optimization.
\newblock In \emph{Advances in Neural Information Processing Systems}, 2018.

\bibitem[Zhang and Zhang(2024{\natexlab{a}})]{zhang2024generalization}
Y.~Zhang and M.-L. Zhang.
\newblock Generalization analysis for multi-label learning.
\newblock In \emph{International Conference on Machine Learning}, 2024{\natexlab{a}}.

\bibitem[Zhang and Zhang(2024{\natexlab{b}})]{zhang2024label}
Y.-F. Zhang and M.-L. Zhang.
\newblock Generalization analysis for label-specific representation learning.
\newblock In \emph{Advances in Neural Information Processing Systems}, pages 104904--104933, 2024{\natexlab{b}}.

\bibitem[Zhang and Zhang(2025)]{zhang2025tight}
Y.-F. Zhang and M.-L. Zhang.
\newblock Tight and fast bounds for multi-label learning.
\newblock In \emph{International Conference on Machine Learning}, 2025.

\bibitem[Zhang and Sabuncu(2018)]{zhang2018generalized}
Z.~Zhang and M.~Sabuncu.
\newblock Generalized cross entropy loss for training deep neural networks with noisy labels.
\newblock In \emph{Advances in Neural Information Processing Systems}, 2018.

\bibitem[Zhong(2025)]{zhong2025fundamental}
Y.~Zhong.
\newblock \emph{Fundamental Novel Consistency Theory: {H}-Consistency Bounds}.
\newblock PhD thesis, New York University, 2025.

\end{thebibliography}
\bibliographystyle{abbrvnat}

\newpage
\appendix
\renewcommand{\contentsname}{Contents of Appendix}
\tableofcontents
\addtocontents{toc}{\protect\setcounter{tocdepth}{3}} 
\clearpage

\section{Related Work}
\label{app:related-work}

The fields of multi-label learning, the optimization of generalized performance metrics, and the study of statistical consistency have seen extensive research. This section briefly reviews the literature most pertinent to our study. For comprehensive surveys on multi-label learning, the reader is referred to \citep{bogatinovski2022comprehensive,pillai2017designing,zhang2013review} and the recent deep learning survey by \citet{nega2024deep}.

\textbf{Multi-Label learning.}  Many classification tasks require predicting multiple labels for a single instance, such as identifying objects within an image or determining multiple topics in a text document. Multi-class multi-label classification has become a central task in modern machine learning, with broad applications and growing theoretical interest \citep{mccallum1999multi}. This importance has driven both algorithmic advances \citep{deng2011fast,elisseeff2001kernel,kapoor2012multilabel,petterson2011submodular,yu2014large} and rigorous analysis, particularly in understanding \emph{Bayes-consistency} for multi-label learning \citep{cheng2010bayes,gao2011consistency,dembczynski2011exact,dembczynski2012consistent,dembczynski2013optimizing,waegeman2014bayes,koyejo2015consistent,menon2019multilabel,zhang2020convex,schultheisconsistent}.

A central challenge in this domain is designing principled algorithms that go beyond specific cases (e.g., Hamming loss) and apply to general metric classes in multi-label learning. Such metrics include averaged accuracy ($1 -$ Hamming loss) \citep{gao2011consistency}, averaged precision \citep{menon2019multilabel}, the $F$-measure and its averaged variants \citep{ye2012optimizing,dembczynski2011exact,dembczynski2013optimizing,waegeman2014bayes,zhang2020convex}, partial ranking loss \citep{gao2011consistency,dembczynski2012consistent}, the Jaccard measure \citep{sokolova2009systematic}, and others such as one-error, subset accuracy, coverage, macro/micro/instance-F1 and AUC \citep{wu2017unified}.

Two main frameworks have emerged for analyzing classifier performance with respect to such metrics: \emph{Empirical Utility Maximization (EUM)} and \emph{Decision Theoretic Analysis (DTA)} \citep{ye2012optimizing,koyejo2015consistent}. EUM evaluates metrics based on the aggregate confusion matrix, while DTA considers instance-level losses. Note that the EUM framework is also referred to as the \emph{Population Utility (PU)} framework in some literature \citep{dembczynski2017consistency}. Most prior theoretical work focuses on Bayes-optimal classifiers or Bayes-consistent algorithms under DTA for specific metrics \citep{cheng2010bayes,gao2011consistency,dembczynski2011exact,dembczynski2013optimizing,menon2019multilabel,zhang2020convex,schultheisconsistent}. Other research provides generalization guarantees for multi-label learning \citep{busa2022regret,wu2021rethinking,wu2023towards,wu2020multi,wydmuch2018no}. Recently, \citet{zhang2024generalization,zhang2024label,zhang2025tight} have derived tighter generalization bounds that are less dependent on the number of labels, provided certain mild assumptions hold.

\textbf{Optimization of generalized metrics.} While multi-label
learning presents unique challenges, the drive to optimize beyond
simple accuracy has also been a strong theme in binary and multi-class
classification. Recent advances in deep learning have led to heuristic
approaches for optimizing complex metrics, such as the sigmoidF1 loss
\citep{benedict2022sigmoidf1} and Selective Mixup for non-decomposable
objectives \citep{ramasubramanian2024selective}. While effective in
practice, these methods often lack formal consistency guarantees.

Much of the research in designing algorithms for these generalized
metrics (either specific instances or broad families) has centered on
characterizing the Bayes-optimal classifier
\citep{KoyejoNatarajanRavikumarDhillon2014,ParambathUsunierGrandvalet2014}. A
wide array of other research avenues includes extensions and analyses
of the plug-in rule
\citep{berger2020threshold,dembczynski2013optimizing,dembczynski2017consistency,
  lipton2014thresholding,narasimhan2014statistical,tavker2020consistent,yan2018binary,ye2012optimizing};
work on surrogate regret bounds
\citep{kotlowski2016surrogate,reid2009surrogate}; various optimization
strategies like online
\citep{busa2015online,kar2014online,kotlowski2024general,zhang2018faster}
and constrained optimization \citep{narasimhan2019optimizing};
structural loss optimization
\citep{bao2020calibrated,berman2018lovasz,eban2017scalable,joachims2005support,kar2014online,yu2015learning};
and extensions to multi-class classification
\citep{cheng2016efficient,fathony2020ap,luo2021minimax,narasimhan2015consistent,narasimhan2015optimizing,
  narasimhan2016optimizing,narasimhan2024consistent,natarajan2016optimal,ramaswamy2015consistent,
  sanyal2018optimizing}.

\textbf{Statistical consistency.} While many algorithms come with
Bayes-consistency guarantees, this notion applies only to the class of
all measurable functions and lacks convergence rates. Recent work has
begun to address these limitations via $\sH$-consistency
\citep{awasthi2022multi,mao2023cross,MaoMohriZhong2023ranking,MaoMohriMohriZhong2023twostage,MaoMohriZhong2023characterization,MaoMohriZhong2023structured,MaoMohriZhong2023rankingabs,MaoMohriZhong2024deferral,MaoMohriZhong2024predictor,MaoMohriZhong2024score,mao2024h,mao2024realizable,mao2024regression,MohriAndorChoiCollinsMaoZhong2024learning,cortes2024cardinality,cortes2025balancing,cortes2025improved,mao2025enhanced,MaoMohriZhong2025mastering,mao2025theory,zhong2025fundamental,desalvo2025budgeted,CortesMaoMohriZhong2026defid,CortesMohriZhong2026mod,mohri2025beyond,MohriZhong2026rllm,MohriZhong2026slin}. In the context of multi-label
learning, \citet{mao2024multi} recently established $\sH$-consistency
bounds for the DTA framework. Our work complements this by
establishing $\sH$-consistency for the EUM framework.

\citet{koyejo2015consistent} provides the most general prior analysis of EUM-consistency. However, they rely on a two-stage plug-in algorithm based on Bayes-optimal thresholding. As shown by \citet{mao2025principled} in binary settings, such thresholding approaches can be suboptimal for restricted hypothesis classes. Our work addresses this by designing surrogate losses that allow for direct joint optimization of the metric, supported by non-asymptotic $\sH$-consistency bounds.

\section{Exact \texorpdfstring{$\mathcal{O}(\num)$}{O(l)} Factorization of the Surrogate Loss}
\label{app:factorization}

As described in Section~\ref{sec:efficient} of the main text, a direct evaluation of the combinatorial summation over $2^\num$ label configurations in Equation~\eqref{eq:sur} appears computationally prohibitive. Here, we formally demonstrate how it perfectly factorizes into an exact $\mathcal{O}(\num)$ computation, mirroring the efficiency of standard Binary Cross Entropy.

Recall that since our generalized target metrics decompose linearly over labels (Eq.~\eqref{eq:target-equiv-2}), the shifted cost term is additive across labels:
$$ \ov{\sfL}_{\text{sum}} - \ov{\sfL}(y', y) = \sum_{k=1}^\num c_k(y'_k, y_k), \quad \text{where } c_k(y'_k, y_k) = \frac{\ov{\sfL}_{\text{sum}}}{\num} - \sfL_{\bgamma,k}(y'_k, y_k). $$
The inner exponential sum factorizes exactly because the exponential of a sum is the product of exponentials:
$$ u(y') = \sum_{y'' \in \sY} \exp\left(\sum_{i=1}^\num (y''_i - y'_i)h(x, i)\right) = \prod_{i=1}^\num Z_i(y'_i), \quad \text{where } Z_i(y'_i) = \sum_{y''_i \in \{-1,1\}} \exp((y''_i - y'_i)h(x, i)). $$

\textbf{1. Factorization for the $\tau = 0$ (Logistic-type) Surrogate:} \\
Applying the logarithm converts the inner product into a sum: $\log u(y') = \sum_{i=1}^\num \log Z_i(y'_i)$. The total surrogate loss is the outer summation over $\sY$:
$$ \sum_{y' \in \sY} \left( \sum_{k=1}^\num c_k(y'_k, y_k) \right) \left( \sum_{i=1}^\num \log Z_i(y'_i) \right) $$
Because we are summing over the independent binary space $\sY = \{-1, 1\}^\num$, we can distribute the product and swap the summations. 
For cross terms ($k \neq i$), marginalizing out the $\num-2$ other labels yields a constant factor of $2^{\num-2}$. For matching terms ($k = i$), marginalizing out the $\num-1$ other labels yields $2^{\num-1}$.
To compute this in exactly $\mathcal{O}(\num)$ time, we precompute three 1D marginals for each label $j \in \{1..\num\}$ in $\mathcal{O}(1)$ time:
$A_j = \sum_{y'_j \in \{-1,1\}} c_j(y'_j, y_j)$, \quad $B_j = \sum_{y'_j \in \{-1,1\}} \log Z_j(y'_j)$, \quad $C_j = \sum_{y'_j \in \{-1,1\}} c_j(y'_j, y_j) \log Z_j(y'_j)$.
Using these precomputed arrays, the entire $2^\num$ outer summation collapses to an exact closed-form evaluation:
$$ \text{Total Loss} = 2^{\num-2} \left[ \left( \sum_{k=1}^\num A_k \right) \left( \sum_{i=1}^\num B_i \right) - \sum_{j=1}^\num A_j B_j \right] + 2^{\num-1} \sum_{j=1}^\num C_j. $$

\textbf{2. Factorization for the $\tau > 0$ Surrogate:} \\
The function $\Phi_\tau$ raises $u(y')$ to $-\tau$. This preserves the product structure: $u(y')^{-\tau} = \prod_{i=1}^\num W_i(y'_i)$, where $W_i(y'_i) = Z_i(y'_i)^{-\tau}$.
The outer summation for the first term becomes:
$$ \sum_{y' \in \sY} \left( \sum_{k=1}^\num c_k(y'_k, y_k) \right) \prod_{i=1}^\num W_i(y'_i) $$
By distributing the sum over $k$ and pushing the marginal summations inside the independent variables, this perfectly factorizes into a sum of $\num$ independent terms:
$$ = \sum_{k=1}^\num \left[ \left( \sum_{y'_k \in \{-1,1\}} c_k(y'_k, y_k) W_k(y'_k) \right) \prod_{i \neq k} \left( \sum_{y'_i \in \{-1,1\}} W_i(y'_i) \right) \right] $$
By precomputing the label-wise marginal sums $S_i = W_i(1) + W_i(-1)$ and their total product $P = \prod_{i=1}^\num S_i$ in $\mathcal{O}(\num)$ time, the $k$-th term of the outer sum simplifies to $\frac{P}{S_k} \sum_{y'_k \in \{-1,1\}} c_k(y'_k, y_k) W_k(y'_k)$. Evaluating and summing these $\num$ terms requires exactly $\mathcal{O}(\num)$ operations. 

Thus, for any $\tau \ge 0$, the $2^\num$ combinatorial sum strictly factorizes without any approximation, matching the efficiency of standard BCE.

\section{Deferred Proofs for Section~\ref{sec:formulation} (Problem Formulation)}
\label{app:proofs-sec3}

\subsection{Proof of Theorem~\ref{thm:equiv}}
\label{app:equiv}

\Equiv*
\begin{proof}
Assume that there exists a hypothesis $h^* \in \sH$ such that
$\cL(h^*) = \cL^*(\sH)$ holds. We
have for all $h \in \sH$:
\begin{align*}
  \lambda^*
  = \cL(h^*) & =  \frac{\E
    \bracket*{ \ell_{\balpha}(h^*, x, y) }}{\E
    \bracket*{ \ell_{\bbeta}(h^*, x, y) }}
  \geq \frac{\E
    \bracket*{ \ell_{\balpha}(h, x, y) }}
  {\E \bracket*{ \ell_{\bbeta}(h, x, y) }}.
\end{align*}
Thus, under the assumption, the following holds for all $h \in \sH$:
\begin{align*}
  \lambda^* \E_{(x, y) \sim \sD} \bracket*{ \ell_{\bbeta}(h, x, y) }
  & \geq \E_{(x, y) \sim \sD} \bracket*{ \ell_{\balpha}(h, x, y) },\\
  \lambda^* \E_{(x, y) \sim \sD} \bracket*{ \ell_{\bbeta}(h^*, x, y) }
  & = \E_{(x, y) \sim \sD} \bracket*{ \ell_{\balpha}(h^*, x, y) }.
\end{align*}
This implies that $\sE_{\ell^{\lambda^*\!\!}}(h^*)  = \sE_{\ell^{\lambda^*\!\!}}(\sH) = 0$, which completes one direction of the proof.

Assume now that there exists $h^* \in \sH$ such that
$\sE_{\ell^{\lambda^*\!\!}}(h^*) =
\sE_{\ell^{\lambda^*\!\!}}(\sH) = 0$ holds. We have for
all $h \in \sH$:
\begin{align*}
& \E_{(x, y) \sim \sD} \bracket*{\ell_{\balpha}(h^*, x, y)} - \lambda^* \E_{(x, y) \sim \sD} \bracket*{ \ell_{\bbeta}(h^*, x, y)} = 0\\
& \lambda^* \E_{(x, y) \sim \sD} \bracket*{ \ell_{\bbeta}(h, x, y)}  - \E_{(x, y) \sim \sD} \bracket*{\ell_{\balpha}(h, x, y)}\geq 0.
\end{align*}
Thus, we have $\lambda^* = \frac{\E_{(x, y) \sim
    \sD} \bracket*{ \ell_{\balpha}(h^*, x, y) }}{\E_{(x, y) \sim \sD}
  \bracket*{ \ell_{\bbeta}(h^*, x, y) }} = \cL(h^*)
\geq \cL(h)$ for all $h \in \sH$, which implies 
$\cL(h^*) = \cL^*(\sH)$. This
completes the proof.
\end{proof}

\subsection{Proof of Theorem~\ref{thm:equiv-non}}
\label{app:equiv-non}
\EquivNon*
\begin{proof}
Since we have $\E_{(x, y) \sim \sD} \bracket*{ \ell_{\bbeta}(h, x, y) } > 0$, the following equivalence holds for all $h \in \sH$ and $\eta \geq 0$:
\begin{align*}
\sE_{\ell^{\lambda^*\!\!}}(h) \leq \eta &\iff \lambda^* \E_{(x, y) \sim \sD} \bracket*{ \ell_{\bbeta}(h, x, y)} - \E_{(x, y) \sim \sD} \bracket*{\ell_{\balpha}(h, x, y)}  \leq \eta \tag{def. of $\ell^{\lambda^*\!\!}$}\\
&\iff \lambda^* \leq \frac{\E_{(x, y) \sim \sD} \bracket*{ \ell_{\balpha}(h, x, y) }}{\E_{(x, y) \sim \sD} \bracket*{ \ell_{\bbeta}(h, x, y) }} + \frac{\eta}{\E_{(x, y) \sim \sD} \bracket*{ \ell_{\bbeta}(h, x, y) }} \tag{$\E_{(x, y) \sim \sD} \bracket*{ \ell_{\bbeta}(h, x, y) } > 0$}\\
&\iff \cL^*(\sH) - \cL(h) \leq \frac{\eta}{\E_{(x, y) \sim \sD} \bracket*{ \ell_{\bbeta}(h, x, y) }} \tag{$\lambda^* = \cL^*(\sH)$}.
\end{align*}
This completes the proof.
\end{proof}

\section{Deferred Proofs for Section~\ref{sec:mll} (\texorpdfstring{$\sH$}{H}-Consistency)}
\label{app:proof-bound-comp}

Before proceeding with the proof of Theorem~\ref{Thm:bound_comp}, we first introduce some notation and
definitions. Given a cost-sensitive surrogate loss function $\sur$ and a
hypothesis set $\sH$, we denote the conditional error by
$\sC_{\sur}(h, x) = \E_{y \mid x} \bracket*{\sur(h, x, y)}$, the
best-in-class conditional error by $\sC^*_{\sur}\paren*{\sH, x} =
\inf_{ h \in \sH} \sC_{\sur}(h, x)$, and the conditional regret by
$\Delta \sC_{\sur, \sH}\paren*{h, x} = \sC_{\sur}(h, x) -
\sC^*_{\sur}\paren*{\sH, x}$. We then present a general theorem, which
shows that to derive $\sH$-consistency bounds in multi-label learning
with a concave function $\Gamma$, it is only necessary to upper bound
the conditional regret of the target cost-sensitive loss by that of the
surrogate loss with the same $\Gamma$.

\begin{theorem}
\label{Thm:tool-Gamma}
Let $\sfL$ be a cost-sensitive loss and $\sur$ be a surrogate loss.
Given a concave function $\Gamma \colon \Rset_{+} \to \Rset_{+}$. If
the following condition holds for all $h \in \sH$ and $x \in \sX$:
\begin{equation}
\label{eq:c-regret-Gamma}
 \Delta \sC_{\sfL, \sH}(h, x) \leq \Gamma
\paren*{\Delta \sC_{\sur, \sH}(h, x)},
\end{equation}
then, for any distribution and for all hypotheses $h \in \sH$,
    \begin{equation}
     \sE_{\sfL}(h)- \sE^*_{\sfL}(\sH) + \sM_{\sfL}(\sH) \leq \Gamma \paren*{\sE_{\sur}(h) - \sE^*_{\sur}(\sH) + \sM_{\sur}(\sH)}.
    \end{equation}
\end{theorem}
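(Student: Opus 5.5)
The plan is the standard argument behind $\sH$-consistency bounds: take expectations of the pointwise inequality \eqref{eq:c-regret-Gamma} over $x$ and apply Jensen's inequality to the concave $\Gamma$. The minimizability gaps enter through a simple rewriting of expected conditional regrets.

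The first step is to express the left-hand side as an expected conditional regret. By definition, $\sE_{\sfL}(h) = \E_x\bracket*{\sC_{\sfL}(h,x)}$. The minimizability gap is $\sM_{\sfL}(\sH) = \sE^*_{\sfL}(\sH) - \E_x\bracket*{\sC^*_{\sfL}(\sH,x)}$. Combining the two gives
\[
\sE_{\sfL}(h) - \sE^*_{\sfL}(\sH) + \sM_{\sfL}(\sH) = \E_x\bracket*{\sC_{\sfL}(h,x) - \sC^*_{\sfL}(\sH,x)} = \E_x\bracket*{\Delta \sC_{\sfL,\sH}(h,x)}.
\]
The same identity holds verbatim for $\sur$:
\[
\sE_{\sur}(h) - \sE^*_{\sur}(\sH) + \sM_{\sur}(\sH) = \E_x\bracket*{\Delta \sC_{\sur,\sH}(h,x)}.
\]

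The second step is to integrate the hypothesis \eqref{eq:c-regret-Gamma} over $x$, which yields $\E_x\bracket*{\Delta \sC_{\sfL,\sH}(h,x)} \le \E_x\bracket*{\Gamma(\Delta \sC_{\sur,\sH}(h,x))}$. Since $\Gamma$ is concave on $\Rset_+$ and the conditional regrets are nonnegative, Jensen's inequality gives $\E_x\bracket*{\Gamma(\Delta \sC_{\sur,\sH}(h,x))} \le \Gamma\paren*{\E_x\bracket*{\Delta \sC_{\sur,\sH}(h,x)}}$. Substituting the identity from the first step into both ends of this chain gives the claimed bound.

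The only delicate point, rather than a real obstacle, is measure-theoretic bookkeeping. This means checking that $x \mapsto \sC^*(\sH,x)$ is measurable, or working with outer expectations, and that all expectations are finite so the subtractions above are legitimate. If some quantity is infinite, the inequality holds trivially or the gap definitions are understood in the extended sense. I would state these finiteness and measurability conditions as standing assumptions, as is standard in this line of work \citep{awasthi2022Hconsistency,mao2023cross}.
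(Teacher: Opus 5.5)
Your proposal is correct and follows the paper's proof essentially verbatim. Both rewrite each side as an expected conditional regret using the definition of the minimizability gap, integrate the pointwise inequality over $x$, and apply Jensen's inequality to the concave $\Gamma$; your remark on measurability and finiteness is a reasonable caveat that the paper leaves implicit.
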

\begin{proof}
By the definitions, the expectation of the conditional regrets for $\sfL$ and $\sur$ can be expressed as:
\begin{align*}
\E_{x} \bracket*{\Delta \sC_{\sfL, \sH}(h, x)} & = \sE_{\sfL}(h) - \sE^*_{\sfL}(\sH) + \sM_{\sfL}(\sH)\\
\E_{x} \bracket*{\Delta \sC_{\sur, \sH}(h, x)} & = \sE_{\sur}(h) - \sE^*_{\sur}(\sH) + \sM_{\sur}(\sH).
\end{align*}
Thus, by taking the expectation on both sides of \eqref{eq:c-regret-Gamma} and using Jensen's inequality, we have
\begin{align*}
\sE_{\sfL}(h) - \sE^*_{\sfL}(\sH) + \sM_{\sfL}(\sH) 
& = \E_{x} \bracket*{\Delta \sC_{\sfL, \sH}(h, x)}\\
& \leq \E_{x} \bracket*{ \Gamma
\paren*{\Delta \sC_{\sur, \sH}(h, x)} } \tag{Eq. \eqref{eq:c-regret-Gamma}}\\
& \leq 
\Gamma \paren*{\E_{x} \bracket*{\Delta \sC_{\sur, \sH}(h, x)}} \tag{concavity of $\Gamma$}\\
& = \Gamma \paren*{\sE_{\sur}(h) - \sE^*_{\sur}(\sH) + \sM_{\sur}(\sH)}.
\end{align*}
This completes the proof.
\end{proof}
To derive $\sH$-consistency bounds using Theorem~\ref{Thm:tool-Gamma},
we will characterize the conditional regret of a cost-sensitive loss
$\sfL$. For simplicity, we first introduce some notation. For any $x
\in \sX$, let $\yy(x) = \argmin_{y' \in \sY} \E_{y \mid x}
\bracket*{\ov \sfL(y', y)} \in \sY$, where we choose the label with
the lowest index under the natural ordering of labels as the
tie-breaking strategy.  To simplify the notation further, we will drop
the dependency on $x$. Specifically, we use $\yy$ to denote $\yy(x)$
and $\hh$ to denote $\hh(x)$.  Additionally, we define $\ch = \E_{y
  \mid x} \bracket*{\ov \sfL(\hh, y)} $, $\cy = \E_{y \mid x}
\bracket*{\ov \sfL(\yy, y)}$ and $\cyy = \E_{y \mid x} \bracket*{\ov
  \sfL(y', y)}$, $\forall y' \in \sY$.
\begin{lemma}
\label{lemma:delta_target}
Let $\sH = \sF^{\num}$. Assume that $\sF$ is complete. Then, the conditional regret of a cost-sensitive target loss $\sfL$ can be expressed as follows:
$
\Delta \sC_{\sfL, \sH}(h, x) = \ch - \cy.
$
\end{lemma}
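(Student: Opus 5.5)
The plan is to compute the two pieces of $\Delta \sC_{\sfL, \sH}(h, x) = \sC_{\sfL}(h, x) - \sC^*_{\sfL}(\sH, x)$ separately. By definition of the target loss \eqref{eq:mll}, $\sC_{\sfL}(h, x) = \E_{y \mid x}\bracket*{\ov \sfL(\hh(x), y)} = \ch$, since the conditional error depends on $h$ only through the predicted label vector $\hh(x) = (\sign(h(x,1)), \ldots, \sign(h(x,\num)))$. Hence the whole content of the lemma reduces to proving $\sC^*_{\sfL}(\sH, x) = \inf_{h \in \sH} \ch = \cy$.

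For the lower bound, I will use that $\hh(x) \in \sY$ for every $h$. By the definition of $\yy(x)$ as a minimizer of $y' \mapsto \cyy$ over the finite set $\sY$, we get $\ch \geq \cy$ for every $h \in \sH$. Taking the infimum gives $\sC^*_{\sfL}(\sH, x) \geq \cy$.

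For the matching upper bound, I need to exhibit some $h \in \sH$ with $\hh(x) = \yy$. This is where the structure $\sH = \sF^{\num}$ and completeness of $\sF$ enter. Completeness gives, for each $k \in [\num]$, a function $f_k \in \sF$ with $f_k(x) = \yy_k$. Any value with the right sign would do; for instance $f_k(x)=1$ if $\yy_k=+1$ and $f_k(x)=-1$ otherwise, noting that $\sign(0)=+1$ under the paper's convention. Since $\sH$ is the product class, $h = (f_1, \ldots, f_\num) \in \sH$, and then $\hh_k(x) = \sign(f_k(x)) = \yy_k$ for all $k$. Thus $\sC^*_{\sfL}(\sH, x) \leq \cy$, the infimum is attained, and the identity $\Delta \sC_{\sfL, \sH}(h, x) = \ch - \cy$ follows.

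The only step needing care is this independent choice of scores across coordinates, which is exactly what the product (symmetric) structure guarantees; without it, the sign patterns realizable at $x$ might not include $\yy$. I do not expect a real obstacle beyond stating this clearly.
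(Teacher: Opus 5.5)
Your proposal is correct and follows the paper's proof. The conditional error equals $\ch$, and $\sH = \sF^{\num}$ with $\sF$ complete gives $\curl*{\hh(x) \colon h \in \sH} = \sY$, so the infimum is $\cy$; you just write out explicitly the two inequalities and the construction of an $h$ with $\hh(x) = \yy$, which the paper states in one line.
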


\begin{proof}
By definition, the conditional error of $\sfL$ can be expressed as follows: 
\begin{equation}
\sC_{\sfL}(h, x) = \E_{y \mid x} \bracket*{\sfL(h, x, y)} = \E_{y \mid x} \bracket*{\ov \sfL(\hh(x), y)} = \ch.
\end{equation}
Since $\sH = \sF^{\num}$ and $\sF$ is complete, for any $x \in \sX$, $\curl*{\hh(x) \colon h \in \sH} = \sY$. Then, the best-in-class conditional error of $\sfL$ can be expressed as follows: 
\begin{equation}
\sC^*_{\sfL}\paren*{\sH, x} = \inf_{ h \in \sH} \sC_{\sfL}(h, x) = \inf_{ h \in \sH} \E_{y \mid x} \bracket*{\ov \sfL(\hh(x), y)} = \E_{y \mid x} \bracket*{\ov \sfL(\yy(x), y)} = \cy.
\end{equation}
Therefore, $\Delta \sC_{\sfL, \sH}(h, x) = \sC_{\sfL}(h, x) - \sC^*_{\sfL}\paren*{\sH, x} = \ch - \cy.$
\end{proof}
Next, by using Lemma~\ref{lemma:delta_target}, we will upper bound the
conditional regret of the target cost-sensitive loss $\sfL$ by that of
the surrogate loss $\sur$ with a concave function $\Gamma$.

\subsection{Proof of Theorem~\ref{Thm:bound_comp}}
\label{app:bound_comp-sum}

\BoundComp*
\begin{proof}
Recall that we adopt the following notation: $\ch = \E_{y \mid x} \bracket*{\ov \sfL(\hh, y)} $, $\cy = \E_{y \mid x} \bracket*{\ov \sfL(\yy, y)}$ and $\cyy = \E_{y \mid x} \bracket*{\ov \sfL(y', y)}$, $\forall y' \in \sY$. Let $\sum_{y', y \in \sY} \ov\sfL(y', y) = \ov\sfL_{\rm{sum}}$. We will denote by $\s(h, x, y') = \frac{e^{\sum_{i = 1}^\num y'_i h(x, i)}}{ \sum_{y'' \in \sY} e^{\sum_{i = 1}^\num y''_i h(x, i)} }$ and simplify notation by using $\syy$, thereby dropping the dependency on $h$ and $x$. It is clear that $\syy \in [0, 1]$. Next, we will analyze case by case.

\textbf{The case where $\tau = 0$}: the conditional error of $\sfL_{\tau}$ can be expressed as follows:
\begin{align*}
\sC_{\sfL_{\tau}}(h, x) 
& = \E_{y \mid x} \bracket*{\sum_{y' \in \sY} \paren*{ \sum_{y', y \in \sY} \ov\sfL(y', y) - \ov \sfL(y', y) } \log \paren*{\sum_{y'' \in \sY} e^{\sum_{i = 1}^\num \paren*{ y''_i - y'_i} h(x, i)}}}\\
& = -\sum_{y' \in \sY} (\ov\sfL_{\rm{sum}}  -  \cyy) \log(\syy)
\end{align*}
For any $\hh \neq \yy$, we define $\s^{\mu}$ as follows: set $\s^{\mu}_{y'} = \syy$ for all $y' \neq \yy$ and $y' \neq \hh$; define $\s^{\mu}_{\hh} = \sy - \mu$; and let $\s^{\mu}_{\yy} = \sh + \mu$. Note that $\s^{\mu}$ can be realized by some $h^{\mu} \in \sH$ under the assumption.  Then, leveraging $\ch \geq \cy$, we have
\begin{align*}
& \Delta \sC_{\sfL_{\tau}, \sH}(h, x)\\
& \geq \paren*{-\sum_{y' \in \sY} (\ov\sfL_{\rm{sum}}  -  \cyy) \log(\syy)} - \inf_{\mu \in \Rset} \paren*{-\sum_{y' \in \sY} (\ov\sfL_{\rm{sum}}  -  \cyy) \log\paren*{\s^{\mu}_{y'}}}\\
& = \sup_{\mu \in \Rset} \curl*{ (\ov\sfL_{\rm{sum}} - \ch) \bracket*{ \log(\sy - \mu) - \log(\sh)} + (\ov\sfL_{\rm{sum}} - \cy) \bracket*{ \log(\sh + \mu) - \log(\sy) } }\\
& = (\ov\sfL_{\rm{sum}} - \ch) \log \frac{\paren*{\sh + \sy}(\ov\sfL_{\rm{sum}} - \ch)}{\sy \paren*{2\ov\sfL_{\rm{sum}} - \cy - \ch}} + (\ov\sfL_{\rm{sum}} - \cy) \log \frac{\paren*{\sh + \sy}(\ov\sfL_{\rm{sum}} - \cy)}{\sh \paren*{2\ov\sfL_{\rm{sum}} - \cy - \ch}}
\tag{supremum is attained when $\mu^*  = \frac{(\ov\sfL_{\rm{sum}} - \cy) \sy - (\ov\sfL_{\rm{sum}} - \ch) \sh}{2\ov\sfL_{\rm{sum}} - \cy - \ch}$}\\
& \geq (\ov\sfL_{\rm{sum}} - \ch) \log \frac{2(\ov\sfL_{\rm{sum}} - \ch)}{\paren*{2\ov\sfL_{\rm{sum}} - \ch - \cy}} + (\ov\sfL_{\rm{sum}} - \cy) \log \frac{2(\ov\sfL_{\rm{sum}} - \cy)}{\paren*{2\ov\sfL_{\rm{sum}} - \ch - \cy}}\\
\tag{minimum is attained when $\sh  = \sy$}\\
& \geq \frac{(\ch - \cy)^2}{2 \paren*{2\ov\sfL_{\rm{sum}} - \ch - \cy}}
\tag{$a\log \frac{2a}{a + b} + b\log \frac{2b}{a + b}\geq \frac{(a - b)^2}{2(a + b)}, \forall a, b \in[0, 1]$}\\
& \geq \frac{(\ch - \cy)^2}{4 \ov \sfL_{\rm{sum}}}.
\end{align*}
Therefore, by Lemma~\ref{lemma:delta_target}, $\Delta \sC_{\sfL, \sH}(h, x) \leq 2 \paren*{\ov \sfL_{\rm{sum}}}^{\frac12} \paren*{\Delta \sC_{\sfL_{\tau}, \sH}(h, x)}^{\frac12}$. By Theorem~\ref{Thm:tool-Gamma}, we complete the proof.

\textbf{The case where $\tau \in (0, 1)$}: The conditional error of $\sfL_{\tau}$ can be expressed as:
\begin{align*}
\sC_{\sfL_{\tau}}(h, x) = \frac{1}{\tau} \sum_{y' \in \sY} (\ov\sfL_{\rm{sum}}  -  \cyy) \paren*{1 - (\syy)^\tau}.
\end{align*}
For any $\hh \neq \yy$, we define $\s^{\mu}$ similarly as before.  Then, we have
\begin{align*}
& \Delta \sC_{\sfL_{\tau}, \sH}(h, x)\\
& \geq \frac{1}{\tau} \sum_{y' \in \sY} (\ov\sfL_{\rm{sum}}  -  \cyy) \paren*{1 - (\syy)^\tau} - \inf_{\mu \in \Rset} \paren*{\frac{1}{\tau} \sum_{y' \in \sY} (\ov\sfL_{\rm{sum}}  -  \cyy) \paren*{1 - (\s^{\mu}_{y'})^\tau} }\\
& = \frac{1}{\tau} \sup_{\mu \in \Rset} \curl*{ (\ov\sfL_{\rm{sum}} - \ch) \bracket*{-(\sh)^{\tau} + (\sy - \mu)^{\tau} } + (\ov\sfL_{\rm{sum}} - \cy) \bracket*{ -(\sy)^{\tau} + (\sh + \mu)^{\tau} } }\\
&   \geq  \frac{1}{\tau}\paren*{\sh+ \sy}^{\tau}\paren*{(\ov\sfL_{\rm{sum}} - \ch)^{\frac{1}{1 - \tau}} + (\ov\sfL_{\rm{sum}} - \cy)^{\frac{1}{1 - \tau}}}^{1 - \tau} - \frac{1}{\tau}(\ov\sfL_{\rm{sum}} - \cy)\sy^{\tau} - \frac{1}{\tau}(\ov\sfL_{\rm{sum}} - \ch)\sh^{\tau}
\tag{evaluating at $\mu^*  =   \frac{(\ov\sfL_{\rm{sum}} - \cy)^{\frac{1}{1 - \tau}}\sy - (\ov\sfL_{\rm{sum}} - \ch)^{\frac{1}{1 - \tau}}\sh}{(\ov\sfL_{\rm{sum}} - \ch)^{\frac{1}{1 - \tau}}+(\ov\sfL_{\rm{sum}} - \cy)^{\frac{1}{1 - \tau}}}$}
\\
& \geq \frac{1}{\tau n^{\tau}} \bracket*{2^{\tau} \paren*{(\ov\sfL_{\rm{sum}} - \ch)^{\frac{1}{1 - \tau}} + (\ov\sfL_{\rm{sum}} - \cy)^{\frac{1}{1 - \tau}}}^{1 - \tau} - (\ov\sfL_{\rm{sum}} - \ch)-(\ov\sfL_{\rm{sum}} - \cy)}
\tag{minimum is attained when $\sh  = \sy  =  \frac{1}{n}$}\\
& \geq \frac{(\ch - \cy)^2}{4 \ov \sfL_{\rm{sum}} n^{\tau}}
\tag{$2\paren*{\frac{a^{\frac{1}{1 - \tau}} + b^{\frac{1}{1 - \tau}}}{2}}^{1 - \tau} - (a + b)  \geq \frac{\tau}{2}(a - b)^2$}.
\end{align*}
Therefore, by Lemma~\ref{lemma:delta_target}, $\Delta \sC_{\sfL, \sH}(h, x) \leq 2 \paren*{\ov \sfL_{\rm{sum}}}^{\frac12} n^{\frac{\tau}{2}}\paren*{\Delta \sC_{\sfL_{\tau}, \sH}(h, x)}^{\frac12}$. By Theorem~\ref{Thm:tool-Gamma}, we complete the proof.

\textbf{The case where $\tau \geq 1$}: The conditional error of $\sfL_{\tau}$ can be expressed as:
\begin{align*}
\sC_{\sfL_{\tau}}(h, x) =  \sum_{y' \in \sY} (\ov\sfL_{\rm{sum}}  -  \cyy) \paren*{1 - (\syy)^\tau}.
\end{align*}
For any $\hh \neq \yy$, we define $\s^{\mu}$ as before. Then, we have
\begin{align*}
& \Delta \sC_{\sfL_{\tau}, \sH}(h, x)\\
& \geq \frac{1}{\tau} \sum_{y' \in \sY} (\ov\sfL_{\rm{sum}}  -  \cyy) \paren*{1 - (\syy)^\tau} - \inf_{\mu \in \Rset} \paren*{\frac{1}{\tau} \sum_{y' \in \sY} (\ov\sfL_{\rm{sum}}  -  \cyy) \paren*{1 - (\s^{\mu}_{y'})^\tau} }\\
& = \frac{1}{\tau} \sup_{\mu \in \Rset} \curl*{ (\ov\sfL_{\rm{sum}} - \ch) \bracket*{-(\sh)^{\tau} + (\sy - \mu)^{\tau} } + (\ov\sfL_{\rm{sum}} - \cy) \bracket*{ -(\sy)^{\tau} + (\sh + \mu)^{\tau} } }\\
& = \frac{1}{\tau} \curl*{ -(\ov\sfL_{\rm{sum}} - \ch)(\sh)^{\tau} - (\ov\sfL_{\rm{sum}} - \cy)(\sy)^{\tau} + (\ov\sfL_{\rm{sum}} - \cy)(\sh + \sy)^{\tau} }\tag{evaluating at $\mu^*  = \sy$}
\\
&   \geq \frac{1}{\tau} \paren*{\sh}^{\tau} (\ch - \cy)
\tag{$(a + b)^{\tau} \geq a^{\tau} + b^{\tau}$ for $\tau \geq 1$ and using $\ch \geq \cy$}
\\
& \geq \frac1{\tau n^{\tau}} (\ch - \cy)
\tag{minimum is attained when $\sh = \frac{1}{n}$}.
\end{align*}
Therefore, by Lemma~\ref{lemma:delta_target}, $\Delta \sC_{\sfL, \sH}(h, x) \leq \tau n^{\tau} \Delta \sC_{\sfL_{\tau}, \sH}(h, x)$. By Theorem~\ref{Thm:tool-Gamma}, we complete the proof.
\end{proof}

\section{Algorithms and Theoretical Guarantees for Section~\ref{sec:algo}}
\label{app:algo}

This appendix elaborates on the algorithms designed for optimizing multi-label generalized metrics introduced in Section~\ref{sec:algo}. We begin by characterizing the optimal parameter $\lambda^*$, which forms the basis for an initial binary search algorithm that assumes oracle access to the sign of certain expected losses (Appendix~\ref{app:algo-oracle}). We then transition to practical algorithms that use empirical minimization of a surrogate loss $\sfL_\tau$ (for the target loss $\ell^\lambda$) and introduce strategies, including binary search (Appendix~\ref{app:algo-surrogate}) and cross-validation (Appendix~\ref{app:algo-cv}), for selecting an effective $\lambda$. The rigorous theoretical guarantees underpinning these algorithms are also provided here.

As discussed in Section~\ref{sec:algo}, a key challenge is that while costs $\ov \sfL(y', y)$ in a general cost-sensitive learning problem are typically known \emph{a priori}, the costs embedded within our target loss $\ell^{\lambda^*\!\!}$ depend on $\lambda^*$, which is unknown. Our approach, therefore, involves methods to determine or approximate $\lambda^*$. 

Recall that $\lambda^* = \cL^*(\sH) = \sup_{h \in \sH} \frac{\E_{(x, y) \sim \sD} \bracket*{ \ell_{\balpha}(h, x, y) }}{\E_{(x, y) \sim \sD} \bracket*{ \ell_{\bbeta}(h, x, y) }}$. The expected loss of a hypothesis $h \in \sH$ with respect to the loss function $\ell^{\lambda}$ is:
\begin{equation*}
  \sE_{\ell^\lambda}(h) = \lambda \E_{(x, y) \sim \sD} \bracket*{ \ell_{\bbeta}(h, x, y)} - \E_{(x, y) \sim \sD} \bracket*{\ell_{\balpha}(h, x, y)}.
\end{equation*}

\subsection{Characterization of the Optimal Parameter}
\label{app:algo-lambda}

The proof of Theorem~\ref{thm:lambda-star} from Section~\ref{sec:algo} is as follows.

\LambdaStar*
\begin{proof}
  For any $h \in \sH$, define $f(h)$ and $g(h)$ to simplify notation:
  \begin{align*}
    f(h)
    = \frac{\E_{(x, y) \sim \sD} \bracket*{\ell_{\balpha}(h, x, y)}}{\E_{(x, y) \sim \sD} \bracket*{\ell_{\bbeta}(h, x, y)}},
\quad g(h)
= \E_{(x, y) \sim \sD} \bracket*{\ell_{\bbeta}(h, x, y)}.
\end{align*}
  By assumption, we have $g(h) > 0$ for all $h \in \sH$ and $\lambda^* = \sup_{h \in \sH} f(h)$ and $\sE_{\ell^{\lambda^*\!\!}}^*(\sH) = \inf_{h \in \sH} \curl*{(\lambda^* - f(h)) g(h)}$. Since $g$ is upper-bounded by $\ol_\bbeta$ and $(\lambda^* - f(h)) \geq 0$, it follows that
  \[
  \sE_{\ell^{\lambda^*\!\!}}^*(\sH) \leq \inf_{h \in \sH} \curl*{\lambda^* - f(h)} \, \ol_\bbeta = 0.
  \]
By definition of $\sE_{\ell^{\lambda^*\!\!}}^*(\sH)$ as an infimum, for any $\eta > 0$, there exists $h_\eta \in \sH$ such that
\[
\sE_{\ell^{\lambda^*\!\!}}^*(\sH) + \eta > (\lambda^* - f(h_\eta)) g(h_\eta) \geq 0.
\]
Since $\sE_{\ell^{\lambda^*\!\!}}^*(\sH) + \eta > 0$ for all $\eta > 0$, it follows that $\sE_{\ell^{\lambda^*\!\!}}^*(\sH) \geq 0$.  Combining the two inequalities yields $\sE_{\ell^{\lambda^*\!\!}}^*(\sH) = 0$. This establishes the first equality.

Next, assume $\lambda - \lambda^* > 0$.  By the definition of $\lambda^*$ as a supremum, we have $\lambda - f(h) > 0$ for all $h \in \sH$. This implies $\sE_{\ell^\lambda}^*(\sH) = \inf_{h \in \sH} \curl*{(\lambda - f(h)) g(h)} \geq \ul_\bbeta \inf_{h \in \sH} \curl*{(\lambda - f(h))} = \ul_\bbeta (\lambda - \lambda^*) > 0$, thus $\sE_{\ell^\lambda}^*(\sH) > 0$, which proves one direction of the statement.

Now, assume $\lambda - \lambda^* < 0$.  By the definition of $\lambda^*$ as an supremum, for any $\eta > 0$, there exists $h_\eta \in \sH$ such that $f(h_\eta) > \lambda^* - \eta$.  Choose $\eta < (\lambda^* - \lambda)$.  This implies $\sE_{\ell^{\lambda}}^*(\sH) \leq (\lambda - f(h_\eta)) g(h_\eta) \leq (\lambda + \eta - \lambda^*) g(h_\eta)$. Since $(\lambda + \eta - \lambda^*) < 0$ and $g(h_\eta) > 0$, it follows that $\sE_{\ell^\lambda}^*(\sH) < 0$. This proves the other direction.
\end{proof}

\subsection{Oracle Binary Search and Convergence Rate}
\label{app:algo-oracle}

This characterization naturally leads to a binary search algorithm for computing an $\e$-approximation of $\lambda^*$, provided one has oracle access to the sign of $\sE_{\ell^\lambda}^*(\sH)$.  The pseudocode for this oracle-based algorithm is given in Algorithm~\ref{alg:binary-search-oracle}. Here, $\lambda_{\min}$ and $\lambda_{\max}$ define the initial search interval for $\lambda$; these bounds can often be estimated from the properties of the specific generalized metric.

\begin{algorithm}[htbp]
\caption{Binary Search Estimation of $\lambda^*$ (Oracle-Based)}
\label{alg:binary-search-oracle}
\begin{algorithmic}[1]
  \REQUIRE Tolerance parameter $\e > 0$. Assumed bounds $\bracket*{\lambda_{\min}, \lambda_{\max}}$ for $\lambda^*$.
  \STATE Initialize search interval $[a, b] \gets \bracket*{\lambda_{\min}, \lambda_{\max}}$
   \REPEAT
   \STATE Set candidate $\lambda \gets \frac{a + b}{2}$
      \IF{$\sE_{\ell^\lambda}^*(\sH) > 0$}
         \STATE Update interval $[a, b] \gets [a, \lambda]$
      \ELSE
         \STATE Update interval $[a, b] \gets [\lambda, b]$
      \ENDIF
      \UNTIL{$\abs*{b - a} \leq \e$}
  \RETURN $\lambda$ (an $\e$-approximation of $\lambda^*$)
\end{algorithmic}
\end{algorithm}

The rate of convergence for this oracle-based algorithm is established by the following theorem. We first introduce a necessary lemma.

\begin{lemma}
\label{lemma:approx}
Fix $\e > 0$ and assume $\abs*{\lambda - \lambda^*} \leq \e$. Then, the following inequality holds:
\[
\sE_{\ell^\lambda}^*(\sH) \leq  \e \ol_\bbeta.
\]
\end{lemma}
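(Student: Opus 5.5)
The plan is to write $\sE_{\ell^\lambda}(h)$ as a perturbation of $\sE_{\ell^{\lambda^*}}(h)$ and then use the fact, from Theorem~\ref{thm:lambda-star}, that $\sE^*_{\ell^{\lambda^*}}(\sH) = 0$.

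First, for every $h \in \sH$, linearity of expectation in the definition of $\ell^\lambda$ gives
\[
\sE_{\ell^\lambda}(h) = \sE_{\ell^{\lambda^*}}(h) + (\lambda - \lambda^*)\, \E_{(x, y) \sim \sD}\bracket*{\ell_\bbeta(h, x, y)}.
\]
Second, by standing assumption $0 < \E\bracket*{\ell_\bbeta(h, x, y)} \leq \ol_\bbeta$ for every $h \in \sH$. Combined with $\abs*{\lambda - \lambda^*} \leq \e$, this yields
\[
(\lambda - \lambda^*)\, \E\bracket*{\ell_\bbeta(h, x, y)} \leq \e\, \ol_\bbeta .
\]
Hence $\sE_{\ell^\lambda}(h) \leq \sE_{\ell^{\lambda^*}}(h) + \e\, \ol_\bbeta$ holds uniformly in $h \in \sH$.

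Third, take the infimum over $h \in \sH$ on both sides. Since the additive term does not depend on $h$, this gives
\[
\sE^*_{\ell^\lambda}(\sH) \leq \sE^*_{\ell^{\lambda^*}}(\sH) + \e\, \ol_\bbeta = \e\, \ol_\bbeta ,
\]
where the last equality is the first claim of Theorem~\ref{thm:lambda-star}.

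There is no real obstacle. The only points needing care are that the bound on the perturbation term must hold uniformly over $\sH$, which it does because $\ol_\bbeta$ is a supremum over $\sH$, and that the infimum of a sum with a constant shift equals the shifted infimum. When $\lambda < \lambda^*$ the perturbation term is nonpositive, so the bound holds trivially.
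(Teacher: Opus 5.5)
Your proposal is correct and follows essentially the same route as the paper. Like the paper, you write $\sE_{\ell^\lambda}(h) = \sE_{\ell^{\lambda^*}}(h) + (\lambda - \lambda^*)\,\E[\ell_\bbeta(h, x, y)] \leq \sE_{\ell^{\lambda^*}}(h) + \e\,\ol_\bbeta$ uniformly in $h$, take the infimum over $\sH$, and conclude using $\sE^*_{\ell^{\lambda^*}}(\sH) = 0$ from Theorem~\ref{thm:lambda-star}.
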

\begin{proof}
By definition of $\sE_{\ell^\lambda}(h)$, the following holds for any $h \in \sH$:
\begin{align*}
\sE_{\ell^\lambda}(h) & = \sE_{\ell^{\lambda^*\!\!}}(h) + (\lambda - \lambda^*) \E_{(x, y) \sim \sD}[\ell_\bbeta(h, x, y)] \leq \sE_{\ell^{\lambda^*\!\!}}(h) + \e \ol_\bbeta.
\end{align*}
Thus, by definition of $\sE_{\ell^\lambda}^*(\sH)$ as an infimum, for any $h \in \sH$, we have $\sE_{\ell^\lambda}^*(\sH)\leq \sE_{\ell^{\lambda^*\!\!}}(h) + \e \ol_\bbeta$. Taking the infimum of the right-hand side over $\sH$ yields $\sE_{\ell^\lambda}^*(\sH) \leq \sE_{\ell^{\lambda^*\!\!}}^*(\sH) + \e \ol_\bbeta$.  By Theorem~\ref{thm:lambda-star}, we have $\sE_{\ell^{\lambda^*\!\!}}^*(\sH) = 0$. This completes the proof.
\end{proof}

\begin{theorem}[Convergence Rate of Oracle Binary Search]
\label{thm:rate}
For a fixed tolerance $\e > 0$, Algorithm~\ref{alg:binary-search-oracle} yields an estimate $\lambda$ for $\lambda^*$ within $\mathcal{O}\paren*{\log_2\paren*{\frac{\lambda_{\max} - \lambda_{\min}}{\e}}}$ iterations. This estimate $\lambda$ satisfies the property:
\[
\sE_{\ell^\lambda}^*(\sH) \leq \sE_{\ell^{\lambda^*\!\!}}^*(\sH) + \e \ol_\bbeta.
\]
Leveraging the result from Theorem~\ref{thm:lambda-star} that $\sE_{\ell^{\lambda^*\!\!}}^*(\sH) = 0$, this bound simplifies to $\sE_{\ell^\lambda}^*(\sH) \leq \e \ol_\bbeta$.
\end{theorem}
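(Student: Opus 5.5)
The proof splits into an iteration count and an approximation guarantee, with Lemma~\ref{lemma:approx} doing most of the work.

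\textbf{Iteration count.} Each pass of the loop in Algorithm~\ref{alg:binary-search-oracle} halves the interval $[a,b]$. After $t$ iterations its length is $(\lambda_{\max}-\lambda_{\min})2^{-t}$. The stopping condition $\abs*{b-a}\le\e$ therefore holds once $t \ge \log_2\paren*{\frac{\lambda_{\max}-\lambda_{\min}}{\e}}$, which gives the $\mathcal{O}\paren*{\log_2\paren*{\frac{\lambda_{\max}-\lambda_{\min}}{\e}}}$ bound.

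\textbf{Invariant.} The key step is to show that the true $\lambda^*$ stays in the current interval throughout. We assume it lies in the initial interval $[\lambda_{\min},\lambda_{\max}]$. At each step we invoke Theorem~\ref{thm:lambda-star}:
\begin{itemize}
\item If $\sE_{\ell^\lambda}^*(\sH)>0$, then $\sign(\lambda-\lambda^*)=+1$, so $\lambda^*<\lambda$. Keeping $[a,\lambda]$ preserves $\lambda^*\in[a,b]$.
\item Otherwise $\sE_{\ell^\lambda}^*(\sH)\le 0$, so $\lambda\le\lambda^*$, and keeping $[\lambda,b]$ again preserves the invariant.
\end{itemize}
At termination the returned $\lambda$ is the last midpoint, which is an endpoint of the final interval. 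Both $\lambda$ and $\lambda^*$ lie in an interval of length at most $\e$, so $\abs*{\lambda-\lambda^*}\le\e$.

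\textbf{Approximation guarantee.} The argument of Lemma~\ref{lemma:approx} gives, for every $h$,
\[
\sE_{\ell^\lambda}(h)=\sE_{\ell^{\lambda^*}}(h)+(\lambda-\lambda^*)\E[\ell_\bbeta(h,x,y)]\le \sE_{\ell^{\lambda^*}}(h)+\e\ol_\bbeta .
\]
Taking infima yields $\sE_{\ell^\lambda}^*(\sH)\le \sE_{\ell^{\lambda^*}}^*(\sH)+\e\ol_\bbeta$. Theorem~\ref{thm:lambda-star} gives $\sE_{\ell^{\lambda^*}}^*(\sH)=0$, so the bound simplifies to $\e\ol_\bbeta$.

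\textbf{Expected obstacle.} The only delicate point is the invariant in the boundary case $\sE_{\ell^\lambda}^*(\sH)=0$, which is exactly $\lambda=\lambda^*$. This case is harmless, because the midpoint then equals $\lambda^*$ and remains in the updated interval.
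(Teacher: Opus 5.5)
Your proof is correct and follows the same route as the paper. The paper shows $\abs{\lambda-\lambda^*}\le\e$ from the binary search, applies Lemma~\ref{lemma:approx} (whose argument you reproduce), and uses $\sE^*_{\ell^{\lambda^*}}(\sH)=0$ from Theorem~\ref{thm:lambda-star}; the only difference is that you spell out the invariant $\lambda^*\in[a,b]$ via the sign characterization, including the boundary case $\sE^*_{\ell^\lambda}(\sH)=0$ (which follows from the strict implications proved in Theorem~\ref{thm:lambda-star}), whereas the paper just says it holds ``by definition of the binary search.''
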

\begin{proof}
  By definition of the binary search, we have $|\lambda - \lambda^*| \leq \e$. Thus, the inequality holds by Lemma~\ref{lemma:approx}. The time complexity follows straightforwardly from the properties of binary search.
\end{proof}

\subsection{Generalization Bounds for the Surrogate}
\label{app:algo-gen}

In practice, oracle access to the sign of $\sE_{\ell^\lambda}^*(\sH)$ is unavailable. Instead, we can approximate $\sE_{\ell^\lambda}^*(\sH)$ by finding a hypothesis $\h h_S$ that minimizes an empirical surrogate loss $\sfL_{\tau}$ (for the target $\ell^\lambda$) on a labeled sample $S$ of size $m$. To bridge this gap, we first provide a generalization bound for the target loss $\ell^{\lambda}$ by leveraging the
$\sH$-consistency bounds from Section~\ref{sec:guarantees}. Let $S$ be a sample of size $m$. We denote by $\Rad_m^\lambda(\sH)$ the Rademacher complexity of the function class $\curl*{(x, y) \mapsto \sfL_{\tau}(h, x, y) \colon h
  \in \sH}$ and let $B_{\lambda} = \sup_{h, x, y }\sfL_{\tau}(h, x, y)$
be an upper bound on the surrogate loss $\sfL_{\tau}$.

\begin{theorem}[Estimation Error Bound]
\label{thm:generalization}
Suppose the surrogate loss $\sfL_{\tau}$ satisfies a $\Gamma$-$\sH$-consistency bound with respect to the target loss $\ell^{\lambda}$. Let $\h h_S \in \sH$ be an empirical minimizer of $\sfL_{\tau}$ over a sample $S$ drawn from $\sD^m$. Then, for any $\delta > 0$, with probability at least $1 - \delta$:
\begin{equation*}
\sE_{\ell^\lambda}(\h h_S) - \sE_{\ell^\lambda}^*(\sH)
\leq \Gamma
  \paren*{\sM_{\sfL_{\tau}}(\sH) + 4 \Rad_m^\lambda(\sH) +
2 B_\lambda \sqrt{\tfrac{\log \frac{2}{\delta}}{2m}}}
- \sM_{\ell^\lambda}(\sH).
\end{equation*}
\end{theorem}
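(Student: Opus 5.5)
The plan is to combine the assumed $\Gamma$-$\sH$-consistency bound with a standard uniform-convergence bound for the surrogate. First apply the $\sH$-consistency bound (in the form of Theorem~\ref{Thm:tool-Gamma} or Theorem~\ref{Thm:bound_comp}, instantiated with target $\ell^\lambda$) to the hypothesis $\h h_S$:
\[
\sE_{\ell^\lambda}(\h h_S) - \sE_{\ell^\lambda}^*(\sH) + \sM_{\ell^\lambda}(\sH) \leq \Gamma\paren*{\sE_{\sfL_{\tau}}(\h h_S) - \sE_{\sfL_{\tau}}^*(\sH) + \sM_{\sfL_{\tau}}(\sH)}.
\]
Since $\Gamma$ is non-decreasing, it then suffices to show that, with probability at least $1-\delta$,
\[
\sE_{\sfL_{\tau}}(\h h_S) - \sE_{\sfL_{\tau}}^*(\sH) \leq 4\Rad_m^\lambda(\sH) + 2B_\lambda\sqrt{\tfrac{\log(2/\delta)}{2m}}.
\]
Moving $\sM_{\ell^\lambda}(\sH)$ to the right-hand side then gives the claim.

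For the surrogate estimation error, I will use the standard decomposition. Fix $\e>0$ and choose $h^\e\in\sH$ with $\sE_{\sfL_{\tau}}(h^\e)\le \sE^*_{\sfL_{\tau}}(\sH)+\e$. Then
\[
\sE_{\sfL_\tau}(\h h_S)-\sE^*_{\sfL_\tau}(\sH)\le \big[\sE_{\sfL_\tau}(\h h_S)-\h\sE_{\sfL_\tau,S}(\h h_S)\big]+\big[\h\sE_{\sfL_\tau,S}(\h h_S)-\h\sE_{\sfL_\tau,S}(h^\e)\big]+\big[\h\sE_{\sfL_\tau,S}(h^\e)-\sE_{\sfL_\tau}(h^\e)\big]+\e.
\]
The middle bracket is $\le 0$ because $\h h_S$ is an empirical minimizer. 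The first and third brackets are each bounded by
\[
\sup_{h\in\sH}\abs*{\sE_{\sfL_\tau}(h)-\h\sE_{\sfL_\tau,S}(h)}.
\]
By the standard Rademacher generalization bound (e.g., Mohri et al., 2018), applied to the loss class with values in $[0,B_\lambda]$, this supremum is at most $2\Rad_m^\lambda(\sH)+B_\lambda\sqrt{\log(2/\delta)/(2m)}$ with probability $1-\delta$. This two-sided version is obtained from the one-sided bound at $\delta/2$ for both the class and its negation, via McDiarmid and symmetrization. Summing the two brackets gives the constants $4\Rad_m^\lambda(\sH)$ and $2B_\lambda\sqrt{\cdot}$. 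Finally, let $\e\to 0$.

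The main point needing care is non-negativity. The McDiarmid step requires the surrogate to be bounded in $[0,B_\lambda]$. This holds once $\ov\sfL$ is shifted by $C_0$ to be non-negative, as in Section~\ref{sec:target}, so that the weights $\ov\sfL_{\rm sum}-\ov\sfL(y',y)\ge 0$ and $\Phi_\tau\ge 0$ (since $u\ge 1$ gives $\log u\ge 0$ and $u^{-\tau}\le 1$). The shift leaves the estimation errors and minimizability gaps unchanged. Measurability and finiteness of $B_\lambda$ are taken as given by the statement. Beyond this, the argument is routine and uses only the monotonicity of $\Gamma$.
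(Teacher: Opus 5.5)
Your proposal is correct and follows the paper's proof essentially step for step: a two-sided uniform Rademacher bound with deviation $2\Rad_m^\lambda(\sH)+B_\lambda\sqrt{\log(2/\delta)/(2m)}$, then the chain through an $\epsilon$-optimal comparator using empirical minimality of $\h h_S$, then letting $\epsilon\to 0$ on the single uniform high-probability event, and finally monotonicity of $\Gamma$. Your remark on non-negativity of $\sfL_\tau$ (needed so that $B_\lambda$ really bounds the range in the McDiarmid step) is extra care the paper omits; it relies on reading $\Phi_\tau(u)=\frac{1}{\tau}(1-u^{-\tau})$ for $\tau>0$, as the paper's proof of Theorem~\ref{Thm:bound_comp} does, because the displayed definition $\frac{1}{\tau}(u^{-\tau}-1)$ would be non-positive for $u\geq 1$.
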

\begin{proof}
We aim to relate the expected surrogate risk of the empirical minimizer, $\sE_{\sfL_{\tau}}(\h h_S)$, to the best possible expected surrogate risk, $\sE_{\sfL_{\tau}}^*(\sH)$.
Let $S$ be a sample of $m$ instances. We denote the Rademacher complexity of the function class $\curl*{(x, y) \mapsto \sfL_{\tau}(h, x, y) \colon h \in \sH}$ derived from $\sfL_\tau$ and $\sH$ by $\Rad_m^\lambda(\sH)$, and let $B_{\lambda} = \sup_{h, x, y }\sfL_{\tau}(h, x, y)$ be a uniform bound on the values of $\sfL_{\tau}$.
Standard uniform convergence theory (e.g., \citep{MohriRostamizadehTalwalkar2018}) states that for any $\delta > 0$, with at least $1-\delta$ probability over $S$, all $h \in \sH$ satisfy:
\begin{equation}
\label{eq:proof_uc_bound_alt2}
\sE_{\sfL_{\tau}}(h) \leq \h \sE_{\sfL_{\tau}, S}(h) + C'_{m,\delta} \quad \text{and} \quad \h \sE_{\sfL_{\tau}, S}(h) \leq \sE_{\sfL_{\tau}}(h) + C'_{m,\delta},
\end{equation}
where $C'_{m,\delta} = 2 \Rad_m^\lambda(\sH) + B_\lambda \sqrt{\frac{\log (2/\delta)}{2m}}$.

Let $\h h_S$ be the hypothesis that minimizes $\h \sE_{\sfL_{\tau}, S}(h)$ over $h \in \sH$.
For any chosen $\epsilon > 0$, there exists an $h^* \in \sH$ such that $\sE_{\sfL_{\tau}}(h^*) \leq \sE_{\sfL_{\tau}}^*(\sH) + \epsilon$.
We can now construct a chain of inequalities:
\begin{align*}
\sE_{\sfL_{\tau}}(\h h_S) &\leq \h \sE_{\sfL_{\tau}, S}(\h h_S) + C'_{m,\delta} && \text{(Using Eq. \eqref{eq:proof_uc_bound_alt2} for } \h h_S \text{)} \\
&\leq \h \sE_{\sfL_{\tau}, S}(h^*) + C'_{m,\delta} && \text{(By definition of } \h h_S \text{ as empirical minimizer)} \\
&\leq \left( \sE_{\sfL_{\tau}}(h^*) + C'_{m,\delta} \right) + C'_{m,\delta} && \text{(Using Eq. \eqref{eq:proof_uc_bound_alt2} for } h^* \text{)} \\
&= \sE_{\sfL_{\tau}}(h^*) + 2C'_{m,\delta} \\
&\leq \left( \sE_{\sfL_{\tau}}^*(\sH) + \epsilon \right) + 2C'_{m,\delta} && \text{(By definition of } h^* \text{)}
\end{align*}
Thus, we have $\sE_{\sfL_{\tau}}(\h h_S) - \sE_{\sfL_{\tau}}^*(\sH) \leq 2C'_{m,\delta} + \epsilon$.
Substituting $C'_{m,\delta}$:
\[ \sE_{\sfL_{\tau}}(\h h_S) - \sE_{\sfL_{\tau}}^*(\sH) \leq 2 \left(2 \Rad_m^\lambda(\sH) + B_\lambda \sqrt{\frac{\log (2/\delta)}{2m}}\right) + \epsilon. \]
As this inequality is valid for any positive $\epsilon$, we can conclude by letting $\epsilon \to 0^+$:
\[ \sE_{\sfL_{\tau}}(\h h_S) - \sE_{\sfL_{\tau}}^*(\sH) \leq 4 \Rad_m^\lambda(\sH) + 2 B_\lambda \sqrt{\frac{\log (2/\delta)}{2m}}. \]
This result provides the necessary bound on the surrogate's estimation error, which is then used in conjunction with the assumed $\Gamma$-$\sH$-consistency property to establish Theorem~\ref{thm:generalization}.
\end{proof}

While Theorem~\ref{thm:generalization} applies for a fixed $\lambda$, this bound can be extended to hold uniformly for all $\lambda \in \bracket*{\lambda_{\min}, \lambda_{\max}}$ by covering the interval with sub-intervals of size $1/m$.

\begin{lemma}
\label{lemma:approx-Rad}
Let $B_{\Phi_{\tau}}$ be an upper bound for the function $\Phi_{\tau}$. For any $\epsilon_0 > 0$, if $\abs*{\lambda_1 - \lambda_2} \leq \epsilon_0$, then the following inequalities demonstrate the Lipschitz continuity of Rademacher complexity and the maximum surrogate loss with respect to $\lambda$:
\[
\abs*{\Rad_m^{\lambda_1}(\sH) - \Rad_m^{\lambda_2}(\sH)} \leq \frac{K_{\sfL} \epsilon_0}{m} , \quad \text{and} \quad \abs*{B_{\lambda_1} - B_{\lambda_2}} \leq K_{\sfL} \epsilon_0,
\]
where $K_{\sfL} = 4 \num \max_{j, k} \abs*{\beta_k^j} B_{\Phi_{\tau}}$ is the effective Lipschitz constant.
\end{lemma}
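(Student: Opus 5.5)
The plan is a pointwise Lipschitz bound on the surrogate as a function of $\lambda$, which I then transfer to both the supremum $B_\lambda$ and the Rademacher complexity. The only place $\lambda$ enters $\sfL_\tau$ is through the costs: for the target $\ell^\lambda$, the cost is $\ov\sfL(y',y) = \sfL_{\bgamma}(y',y) + C_0$ with $\gamma_k^j = \lambda\beta_k^j - \alpha_k^j$, and $\ov\sfL_{\rm sum}$ is the sum of these costs over $\sY\times\sY$. The weight in front of $\Phi_\tau(\cdot)$ in \eqref{eq:sur} is $\ov\sfL_{\rm sum} - \ov\sfL(y',y)$, which is affine in $\lambda$. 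The inner argument $\sum_{y''} e^{\sum_i (y''_i - y'_i)h(x,i)}$ does not depend on $\lambda$ at all.

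\textbf{Step 1 (pointwise bound).} For fixed $(h,x,y)$, the map $\lambda\mapsto\sfL_\tau(h,x,y)$ is affine. Its slope is
\[
\sum_{y'}\partial_\lambda\bracket*{\ov\sfL_{\rm sum}-\ov\sfL(y',y)}\,\Phi_\tau(u(y')).
\]
Since $\hh_k,y_k\in\{\pm1\}$, each label contributes at most $4\max_{j,k}\abs*{\beta_k^j}$ to $\abs*{\partial_\lambda \ov\sfL(y',y)}$. Hence
\[
\abs*{\partial_\lambda\ov\sfL(y',y)}\le 4\num\max_{j,k}\abs*{\beta_k^j}.
\]
Multiplying by $\abs*{\Phi_\tau}\le B_{\Phi_\tau}$ gives a slope bounded by $K_\sfL$, up to the combinatorial factor discussed below. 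This yields
\[
\abs*{\sfL_\tau^{\lambda_1}(h,x,y)-\sfL_\tau^{\lambda_2}(h,x,y)}\le K_\sfL\epsilon_0
\]
uniformly in $(h,x,y)$.

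\textbf{Step 2 (transfer).} For $B_\lambda$, I use that a supremum of functions differing uniformly by at most $K_\sfL\epsilon_0$ changes by at most $K_\sfL\epsilon_0$; this gives the second inequality. For the Rademacher complexity, I write $\sfL^{\lambda_1}_\tau = \sfL^{\lambda_2}_\tau + \Delta_h$ with $\norm{\Delta_h}_\infty\le K_\sfL\epsilon_0$. Subadditivity of the supremum then gives
\[
\abs*{\h\Rad_S^{\lambda_1}-\h\Rad_S^{\lambda_2}}\le \E_\sigma\sup_h\frac1m\abs*{\sum_i\sigma_i\Delta_h(x_i,y_i)}.
\]
The crude bound on the right is $K_\sfL\epsilon_0$. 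To obtain the stated $K_\sfL\epsilon_0/m$, I would use that under the paper's $\Rad_m$ convention the relevant factor scales with the covering step, since the lemma is invoked with $\epsilon_0 = 1/m$. Alternatively, the $\lambda$-dependence factors as $(\lambda_1-\lambda_2)\,g_h(x,y)$ with $g_h$ bounded, so the difference is $\epsilon_0$ times a Rademacher average of a bounded class, normalized in the same way as the paper's.

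\textbf{Main obstacle.} The hardest point is reconciling the constants. First, the $1/m$ factor in the Rademacher inequality is not implied by a plain sup-norm argument, so I would check it against the normalization of $\Rad_m$ and otherwise state it with the $\epsilon_0 = 1/m$ choice absorbed. Second, the $\partial_\lambda\ov\sfL_{\rm sum}$ term carries a factor $n^2$ from summing over $\sY\times\sY$, and the sum over $y'$ adds a factor $n$; both must either be absorbed into $B_{\Phi_\tau}$ (reading it as a bound on the total $\Phi_\tau$ contribution) or be cancelled. For the cancellation I would first try the additive per-label form of the shifted cost from the $\mathcal{O}(\num)$ factorization appendix, which makes the $\lambda$-derivative of the weight a per-label quantity.
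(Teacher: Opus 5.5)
\textbf{Gaps.} Your route is the same as the paper's: a pointwise Lipschitz bound in $\lambda$, then passage to $B_\lambda$ and to the Rademacher average. The two obstacles you flag are exactly the two steps the paper's proof asserts without derivation: ``one can show'' the pointwise constant $K_{\sfL}$, and an appeal to Talagrand's contraction lemma for the factor $1/m$. Your patches do not close either gap. They cannot be closed, because both constants fail in general.

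\textbf{The constant.} Summing over $\sY\times\sY$ kills the $y_k$, $y'_k$ and $y'_ky_k$ terms, so $\partial_\lambda\ov{\sfL}_{\rm sum}=n^2\sum_k\beta_k^4$ (with $C_0$ held fixed). The per-label form from the factorization appendix does not remove this, since each $c_k$ still contains $\ov{\sfL}_{\rm sum}/\num$. Moreover, $\Phi_\tau(u(y'))$ has a fixed sign: it is $\geq 0$ for $\tau=0$ because $u\geq 1$, and $\leq 0$ for $\tau>0$. So the $n$ terms of the outer sum need not cancel.

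Concretely, take $\num=1$, the paper's $F_1$ coefficients (so $\bbeta=(0,\frac12,\frac12,1)$), $\tau=0$, and scores in $[-\Lambda,\Lambda]$, so that $B_{\Phi_0}=\log(1+e^{2\Lambda})$. The weight slopes are $\partial_\lambda\bigl(\ov{\sfL}_{\rm sum}-\ov{\sfL}(y',y)\bigr)=3-\frac{y+y'}{2}$. Hence for $y=-1$ the slope of $\lambda\mapsto\sfL_\tau(h,x,y)$ is $3\log(1+e^{-2h(x,1)})+4\log(1+e^{2h(x,1)})$. At $h(x,1)=\Lambda$ this equals $4B_{\Phi_0}+3\log(1+e^{-2\Lambda})>4B_{\Phi_0}=K_{\sfL}$. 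For $\lambda\in[1,2]$ this configuration attains $B_\lambda$, so the second inequality of the lemma fails as well.

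What your Step~1 actually proves is the pointwise bound with $K'=n(n^2+4)\,\num\max_{j,k}|\beta_k^j|\,B_{\Phi_\tau}$, where $B_{\Phi_\tau}$ bounds $|\Phi_\tau|$. Reinterpreting $B_{\Phi_\tau}$ as a bound on $\sum_{y'}|\Phi_\tau(u(y'))|$ removes the factor $n$ but not $n^2$.

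\textbf{The factor $1/m$.} Your first patch is circular. The lemma is stated for every $\epsilon_0$, and even at $\epsilon_0=1/m$ the sup-norm argument gives $K/m$, whereas the claim is $K\epsilon_0/m=K/m^2$. Your second patch gives $\epsilon_0\,\E_\sigma\sup_h\bigl|\frac1m\sum_i\sigma_i g_h(x_i,y_i)\bigr|$. This is at best of order $\epsilon_0/\sqrt m$, and it need not decay at all.

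For instance, take $\num=1$, the $F_1$ coefficients, $\tau=1$, $\sH=\sH_{\rm all}$ (symmetric and complete), a continuous marginal on $x$, and $y=+1$ almost surely. Then $\sfL_1(h,x,+1)=(2-\lambda)\,s_{+1}(x)-w_{+1}$, where $s_{+1}(x)=\frac{e^{h(x,1)}}{e^{h(x,1)}+e^{-h(x,1)}}$ can be set freely in $(0,1)$ at each sample point and $w_{+1}$ does not depend on $h$. Hence $\Rad_m^{\lambda}(\sH)=\frac{2-\lambda}{2}$ for every $m$ and every $\lambda<2$. The difference $\frac12|\lambda_1-\lambda_2|$ therefore violates $K_{\sfL}|\lambda_1-\lambda_2|/m$ once $m>2K_{\sfL}$ (e.g.\ $m>8$ with $B_{\Phi_1}=1$).

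Talagrand's contraction lemma cannot supply the $1/m$ either. It controls a fixed class composed with a Lipschitz map, not a perturbation of the loss through an external parameter.

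\textbf{What is provable.} The correct conclusion of your argument is $|B_{\lambda_1}-B_{\lambda_2}|\le K'\epsilon_0$ and $|\Rad_m^{\lambda_1}(\sH)-\Rad_m^{\lambda_2}(\sH)|\le K'\epsilon_0$, with no $1/m$. This still suffices for Theorem~\ref{thm:generalization-uniform}, at the cost of a $2K'/m$ term instead of $2K_{\sfL}/m^2$ inside $\Gamma$.
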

\begin{proof}
The surrogate loss $\sfL_{\tau}^{\lambda}$ incorporates the parameter $\lambda$ through its dependence on the target loss $\ell^\lambda = \lambda \ell_\beta - \ell_\alpha$. The specific structure of $\ell^\lambda$ (related to Eq.~\eqref{eq:target-equiv-2} which defines $\ell^{\lambda^*}$) leads to the Lipschitz continuity of $\sfL_{\tau}^{\lambda}$ itself with respect to $\lambda$. One can show that for any $(h, x, y) \in \sH \times \sX \times \sY$:
\begin{align*}
\abs*{\sfL_{\tau}^{\lambda_1}(h, x, y) - \sfL_{\tau}^{\lambda_2}(h, x, y)} \leq K_{\sfL} \abs*{\lambda_1 - \lambda_2},
\end{align*}
where $K_{\sfL} = 4 \num \max_{j, k} \abs*{\beta_k^j}  B_{\Phi_{\tau}}$.
This point-wise Lipschitz property of $\sfL_{\tau}^{\lambda}$ directly implies the Lipschitz property for $B_{\lambda} = \sup_{h, x, y }\sfL_{\tau}^{\lambda}(h, x, y)$:
\begin{align*}
\abs*{B_{\lambda_1} - B_{\lambda_2}} &\leq \sup_{h,x,y} \abs*{\sfL_{\tau}^{\lambda_1}(h, x, y) - \sfL_{\tau}^{\lambda_2}(h, x, y)} \\
&\leq K_{\sfL} \abs*{\lambda_1 - \lambda_2} \leq K_{\sfL} \epsilon_0.
\end{align*}
For the Rademacher complexity, $\Rad_m^{\lambda}(\sH) = \mathbb{E}_{\boldsymbol{\sigma}} \left[ \sup_{h \in \sH} \frac{1}{m} \sum_{i=1}^m \sigma_i \sfL_{\tau}^{\lambda}(h, x_i, y_i) \right]$, the Lipschitz continuity of $\sfL_{\tau}^{\lambda}$ allows the application of standard results (e.g., Talagrand's contraction lemma). These establish that:
\[
\abs*{\Rad_m^{\lambda_1}(\sH) - \Rad_m^{\lambda_2}(\sH)} \leq \frac{K_{\sfL} \abs*{\lambda_1 - \lambda_2}}{m} \leq \frac{K_{\sfL} \epsilon_0}{m}.
\]
This completes the proof.
\end{proof}

\begin{theorem}[Uniform Estimation Error Bound]
\label{thm:generalization-uniform}
Assume that the following $\sH$-consistency bound holds for all $h\in \sH$, $\lambda \in [\lambda_{\min}, \lambda_{\max}]$, and all distributions:
\begin{equation*}
  \sE_{\ell^\lambda}(h) - \sE_{\ell^\lambda}^*(\sH)
  + \sM_{\ell^\lambda}(\sH)
  \leq \Gamma \paren*{\sE_{\sfL_{\tau}}(h) - \sE_{\sfL_{\tau}}^*(\sH)
    + \sM_{\sfL_{\tau}}(\sH)}.
\end{equation*}
Let $B_{\Phi_{\tau}}$ be an upper bound for the function $\Phi_{\tau}$.
Then, for any $\delta > 0$, with probability at least $1 - \delta$ over the draw of a sample $S$ from $\sD^m$, the following estimation bound holds for an empirical minimizer $\h h_S \in \sH$ of the surrogate loss $\sfL_{\tau}$ over $S$, uniformly for all $\lambda \in [\lambda_{\min}, \lambda_{\max}]$:
\begin{align*}
\sE_{\ell^\lambda}(\h h_S) - \sE_{\ell^\lambda}^*(\sH)
& \leq \Gamma
\paren[\bigg]{\sM_{\sfL_{\tau}}(\sH) + 4\Rad_m^{\lambda}(\sH) + \frac{8 \num \max_{j, k} \abs*{\beta_k^j} B_{\Phi_{\tau}} }{m^2} \\
& \qquad
  + \paren*{2 B_{\lambda} + \frac{4 \num \max_{j, k} \abs*{\beta_k^j}  B_{\Phi_{\tau}} }{m}}\sqrt{\tfrac{\log \frac{2 N_\lambda}{\delta}}{2m}}} + \frac{\ol_\bbeta}{m},
\end{align*}
where $N_\lambda = m(\lambda_{\max} - \lambda_{\min})$ is the number of points in the covering net for $\lambda$.
\end{theorem}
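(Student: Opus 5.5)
The plan is a covering argument over $\lambda$ combined with the fixed-$\lambda$ analysis of Theorem~\ref{thm:generalization}. Note that the empirical minimizer $\h h_S$ depends on $\lambda$; we read the statement as bounding, for each $\lambda$, the minimizer $\h h_S = \h h_S^\lambda$ of $\sfL_\tau^\lambda$.

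\textbf{Covering net and union bound.} Take the grid $\Lambda = \{\lambda_i\}$ of $N_\lambda = m(\lambda_{\max}-\lambda_{\min})$ points in $[\lambda_{\min},\lambda_{\max}]$ with spacing $1/m$, so every $\lambda$ has some $\lambda_i$ with $|\lambda-\lambda_i|\le 1/m$. For each $\lambda_i$, apply the two-sided uniform convergence inequality \eqref{eq:proof_uc_bound_alt2} at confidence $\delta/N_\lambda$. A union bound then shows that, with probability at least $1-\delta$, for all $i$ and all $h\in\sH$,
\[
\bigl|\sE_{\sfL_\tau^{\lambda_i}}(h)-\h\sE_{\sfL_\tau^{\lambda_i},S}(h)\bigr|\le 2\Rad_m^{\lambda_i}(\sH)+B_{\lambda_i}\sqrt{\tfrac{\log(2N_\lambda/\delta)}{2m}}.
\]

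\textbf{Transfer to arbitrary $\lambda$.} The pointwise Lipschitz bound $|\sfL_\tau^{\lambda}-\sfL_\tau^{\lambda_i}|\le K_{\sfL}/m$ from Lemma~\ref{lemma:approx-Rad} moves both the population and the empirical surrogate risks by at most $K_{\sfL}/m$. The same lemma gives
\[
\Rad_m^{\lambda_i}\le \Rad_m^{\lambda}+K_{\sfL}/m^2
\qquad\text{and}\qquad
B_{\lambda_i}\le B_\lambda+K_{\sfL}/m.
\]
We now repeat the chain from the proof of Theorem~\ref{thm:generalization} at the grid point: compare $\h h_S^\lambda$ against a near-optimal $h^*$ for $\sfL^\lambda_\tau$. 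Doubling the deviation term then produces exactly the quantities in the statement: $4\Rad_m^\lambda$, then $8\num\max|\beta|B_{\Phi_\tau}/m^2 = 2K_{\sfL}/m^2$, and finally $(2B_\lambda + 2K_{\sfL}/m)\sqrt{\cdot}$, which matches the stated $4\num\max|\beta|B_{\Phi_\tau}/m$ coefficient since $K_{\sfL} = 4\num\max|\beta|B_{\Phi_\tau}$.

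\textbf{Obstacle: the additive Lipschitz terms.} The main obstacle is the additive $O(K_{\sfL}/m)$ terms that arise from comparing risks (rather than complexities) at $\lambda$ and $\lambda_i$. These are absent from the displayed bound. To keep the comparison clean, the first thing I would try is to work with the minimizer at $\lambda$ directly: only the deviation bound, not the minimization, needs to be transferred from the grid. The uniform deviation at $\lambda$ is then controlled by the grid deviation plus $2K_{\sfL}/m$ from the two risk shifts. That excess is either absorbed into the $(K_{\sfL}/m)\sqrt{\cdot}$ term for moderate $\delta$ or handled on the target side.

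\textbf{Target side.} On the target side, apply the assumed uniform $\Gamma$-bound at $\lambda$ itself and use $\sM_{\ell^\lambda}(\sH)\ge 0$ to drop it. The discretization error in the target loss is controlled as in Lemma~\ref{lemma:approx}: since $|\sE_{\ell^\lambda}-\sE_{\ell^{\lambda_i}}|\le |\lambda-\lambda_i|\,\ol_\bbeta\le \ol_\bbeta/m$, passing between $\lambda$ and the grid point on the target side costs at most the additive $\ol_\bbeta/m$ outside $\Gamma$. Monotonicity of $\Gamma$ lets us replace its argument by the larger upper bound assembled above, which completes the argument.
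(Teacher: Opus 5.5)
Your argument has a genuine gap exactly where you locate the ``main obstacle,'' and neither proposed escape works. Because you read $\h h_S$ as the minimizer $\h h_S^\lambda$ at an arbitrary $\lambda$, you must transfer the uniform deviation $\sup_{h}\abs{\sE_{\sfL_\tau^\lambda}(h)-\h\sE_{\sfL_\tau^\lambda,S}(h)}$ off the net. Each transfer costs $2K_{\sfL}\abs{\lambda-\lambda_i}$ (one population shift plus one empirical shift). After the doubling in the chain, you therefore carry an extra additive term of order $K_{\sfL}/m$ inside $\Gamma$, and the statement contains no such term.

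This term cannot be absorbed into $\frac{K_{\sfL}}{m}\sqrt{\log(2N_\lambda/\delta)/(2m)}$. That would require the square-root factor to exceed a constant larger than one, i.e.\ $\delta$ exponentially small in $m$; for moderate $\delta$ and growing $m$ the factor tends to zero. Nor can the term be moved to the target side. The only slack outside $\Gamma$ is $\ol_\bbeta/m$, and extracting an additive $cK_{\sfL}/m$ from the concave $\Gamma$ (via subadditivity) leaves $\Gamma(cK_{\sfL}/m)$, which is of order $m^{-1/2}$ when $\tau\in[0,1)$. Note also that the $\ol_\bbeta/m$ term does no work in your argument: if you apply the $\Gamma$-bound at $\lambda$ itself to $\h h_S^\lambda$, there is no target-side passage between $\lambda$ and the net at all.

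The missing idea, which is what the paper does, is never to move the surrogate analysis off the net. Apply Theorem~\ref{thm:generalization} at every $\lambda_k$ with confidence $\delta/N_\lambda$; this controls the net minimizer $\h h_S^{\lambda_k}$ at $\lambda_k$. For an arbitrary $\lambda$, keep this same hypothesis for the nearest net point, which lies at distance at most $1/(2m)$. The bound is thus about the surrogate minimizer at the nearest net point, which is also what the search algorithms actually train.

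Since $\sE_{\ell^\lambda}(h)-\sE_{\ell^{\lambda_k}}(h)=(\lambda-\lambda_k)\E[\ell_\bbeta(h,x,y)]$, both $\sE_{\ell^\lambda}(\h h_S)$ and $\sE^*_{\ell^\lambda}(\sH)$ move by at most $\ol_\bbeta/(2m)$. That is precisely the $\ol_\bbeta/m$ outside $\Gamma$. Inside $\Gamma$ only the complexity terms are rewritten: Lemma~\ref{lemma:approx-Rad} with $\epsilon_0=1/(2m)$ gives $4\Rad_m^{\lambda_k}(\sH)\le 4\Rad_m^{\lambda}(\sH)+2K_{\sfL}/m^2$ and $2B_{\lambda_k}\le 2B_\lambda+K_{\sfL}/m$, which are the stated constants.

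This also exposes a bookkeeping slip in your proposal. With your distance $1/m$, the corrections are $4K_{\sfL}/m^2$ and $2K_{\sfL}/m$, i.e.\ coefficients $16$ and $8$ in front of $\num\max\abs{\beta_k^j}B_{\Phi_\tau}$ rather than the stated $8$ and $4$. So the claimed match is off by a factor of two.

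If you do want the minimizer at $\lambda$ itself, there is a workable repair. Observe that $\sfL_\tau^\lambda$ is affine in $\lambda$ once the shift $C_0$ is fixed for the whole interval. Then, for each $h$, the deviation is affine in $\lambda$, and its absolute value on $[\lambda_i,\lambda_{i+1}]$ is bounded by its values at the two endpoints. This removes the additive term, at the price of doubling the Lipschitz corrections.
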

\begin{proof}
The proof uses the covering argument strategy.
We define a discrete set of points $\Lambda_{\text{net}} = \{\lambda_k\}_{k=1}^{N_\lambda}$ to cover the interval $[\lambda_{\min}, \lambda_{\max}]$. Let $\lambda_k = \lambda_{\min} + \frac{k - 1}{m}$ for $1 \leq k \leq N_\lambda$, where $N_\lambda = m (\lambda_{\max} - \lambda_{\min})$. This ensures that for any $\lambda \in [\lambda_{\min}, \lambda_{\max}]$, there exists a $\lambda_k \in \Lambda_{\text{net}}$ such that the distance $\abs*{\lambda - \lambda_k} \leq \frac{1}{2m}$. Let this distance be $\epsilon_0 = 1/(2m)$.

We apply Theorem~\ref{thm:generalization} to each $\lambda_k \in \Lambda_{\text{net}}$. To ensure this bound holds simultaneously for all $N_\lambda$ points with an overall probability of at least $1-\delta$, we use a union bound argument. This requires setting the failure probability for each individual point-wise bound to $\delta' = \delta/N_\lambda$. Consequently, the term $\log(2/\delta')$ appears in the bound for each $\lambda_k$, which is $\log(2N_\lambda/\delta)$.
Thus, with probability at least $1 - \delta$, for all $k \in \{1, \dots, N_\lambda\}$:
\begin{equation}
\label{eq:uniform_proof_bound_on_net}
\sE_{\ell^{\lambda_k}}(\h h_S) - \sE_{\ell^{\lambda_k}}^*(\sH)
\leq \Gamma
\paren[\bigg]{\sM_{\sfL_{\tau}}(\sH) + 4 \Rad_m^{\lambda_k}(\sH)
  + 2 B_{\lambda_k} \sqrt{\tfrac{\log \frac{2 N_\lambda}{\delta}}{2m}}}.
\end{equation}

Now, consider an arbitrary $\lambda \in [\lambda_{\min}, \lambda_{\max}]$ and let $\lambda_k$ be its closest point in $\Lambda_{\text{net}}$, so $\abs*{\lambda - \lambda_k} \leq \epsilon_0 = 1/(2m)$. 
The transition from $\lambda_k$ to $\lambda$ introduces differences in several terms. The target loss terms $\sE_{\ell^\lambda}(\h h_S)$ and $\sE_{\ell^\lambda}^*(\sH)$ change, which by Lemma~\ref{lemma:approx} contributes an additive term bounded by $\ol_\bbeta \abs{\lambda - \lambda_k} \leq \ol_\bbeta/(2m)$ outside the $\Gamma(\cdot)$ function.
Inside the argument of $\Gamma(\cdot)$, by Lemma~\ref{lemma:approx-Rad}:
$4 \Rad_m^{\lambda_k}(\sH) \leq 4\left(\Rad_m^{\lambda}(\sH) + \frac{K_{\sfL}}{2m^2}\right) = 4\Rad_m^{\lambda}(\sH) + \frac{2K_{\sfL}}{m^2}$.
$2 B_{\lambda_k} \leq 2\left(B_{\lambda} + \frac{K_{\sfL}}{2m}\right) = 2B_{\lambda} + \frac{K_{\sfL}}{m}$.
Plugging these into the argument of $\Gamma(\cdot)$ in Eq.~\eqref{eq:uniform_proof_bound_on_net} and adding the term $\ol_\bbeta/m$ yields the bound stated in Theorem~\ref{thm:generalization-uniform}.
\end{proof}

\subsection{Empirical Surrogate-Based Search}
\label{app:algo-surrogate}

Let $\e_m$ denote the right-hand of the uniform bound in Theorem~\ref{thm:generalization-uniform}. Building upon Algorithm~\ref{alg:binary-search-oracle}, we introduce a practical version in Algorithm~\ref{alg:binary-search-surrogate}. With high probability, the condition $\h \sE_{\ell^\lambda, S}(\h h_\lambda) > \e_m$ implies $\sE_{\ell^\lambda}^*(\sH) > 0$, and similarly, $\h \sE_{\ell^\lambda, S}(\h h_\lambda) < -\e_m$ implies $\sE_{\ell^\lambda}^*(\sH) < 0$. Thus, these empirical conditions serve as proxies for the oracle conditions.

\begin{algorithm}[htbp]
   \caption{Algorithm for Optimizing Generalized Metrics via Surrogate Minimization}
   \label{alg:binary-search-surrogate}
\begin{algorithmic}[1]
  \REQUIRE Tolerance parameters $\e > 0$, $\e_m > 0$. Assumed bounds $\bracket*{\lambda_{\min}, \lambda_{\max}}$ for $\lambda^*$.
  \STATE Initialize search interval $[a, b] \gets \bracket*{\lambda_{\min}, \lambda_{\max}}$
   \REPEAT
   \STATE  Set candidate $\lambda \gets \frac{a + b}{2}$
   \STATE  Find hypothesis $\h h_\lambda \gets \argmin_{h \in \sH} \h \sE_{\sfL_{\tau}, S}(h)$ (minimizing surrogate for $\ell^\lambda$)
      \IF{$\h \sE_{\ell^\lambda, S}(\h h_\lambda) > \e_m$}
         \STATE Update interval  $[a, b] \gets [a, \lambda]$
      \ELSIF{$\h \sE_{\ell^\lambda, S}(\h h_\lambda) < -\e_m$}
         \STATE Update interval $[a, b] \gets [\lambda, b]$
      \ELSE
         \RETURN $\h h_\lambda$
      \ENDIF
   \UNTIL{$\abs*{b - a} \leq \e$}
   \STATE \COMMENT{Interval sufficiently small, return current best based on midpoint}
   \RETURN $\h h_{\frac{a + b}{2}}$
\end{algorithmic}
\end{algorithm}

\begin{theorem}[Performance Guarantee of Empirical Surrogate-Based Search]
\label{thm:algo2}
Set  $\e = \frac{\e_m}{2 \ol_\bbeta}$. For any $\delta > 0$, Algorithm~\ref{alg:binary-search-surrogate} terminates in $\mathcal{O}\paren*{\log_2\paren*{\frac{\lambda_{\max} - \lambda_{\min}}{\e}}}$ time. The hypothesis $\h h_\lambda$ it returns is guaranteed, with probability at least $1 - \delta$, to satisfy:
\begin{equation*}
\cL(\h h_\lambda)
\geq  \cL^*(\sH) - \frac{4 \e_m}{\ul_\bbeta}.
\end{equation*}
\end{theorem}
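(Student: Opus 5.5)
The argument splits into a termination claim and an accuracy claim. Termination is immediate: every iteration that does not return halves $[a,b]$, so after $\mathcal{O}(\log_2((\lambda_{\max}-\lambda_{\min})/\e))$ iterations $\abs{b-a}\le\e$ and the loop stops.

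For accuracy, I would work on the event of probability at least $1-\delta$ on which the uniform estimation bound of Theorem~\ref{thm:generalization-uniform} holds simultaneously for all $\lambda\in[\lambda_{\min},\lambda_{\max}]$. I would also assume a matching uniform concentration of $\h\sE_{\ell^\lambda,S}$ around $\sE_{\ell^\lambda}$, absorbed into $\e_m$. On that event, for every $\lambda$ visited, $\abs{\h\sE_{\ell^\lambda,S}(\h h_\lambda)-\sE_{\ell^\lambda}(\h h_\lambda)}\le\e_m$, and $0\le \sE_{\ell^\lambda}(\h h_\lambda)-\sE^*_{\ell^\lambda}(\sH)\le\e_m$. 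Two consequences follow:
\begin{itemize}
\item If $\h\sE_{\ell^\lambda,S}(\h h_\lambda)>\e_m$, then $\sE^*_{\ell^\lambda}(\sH)\ge\sE_{\ell^\lambda}(\h h_\lambda)-\e_m$. Since $\sE_{\ell^\lambda}(\h h_\lambda)>0$, the plan is to argue, by tightening the constants in $\e_m$, that $\sE^*_{\ell^\lambda}(\sH)>0$. Theorem~\ref{thm:lambda-star} then gives $\lambda>\lambda^*$.
\item Symmetrically, if $\h\sE_{\ell^\lambda,S}(\h h_\lambda)<-\e_m$, then $\sE^*_{\ell^\lambda}(\sH)\le\sE_{\ell^\lambda}(\h h_\lambda)<0$, so $\lambda<\lambda^*$.
\end{itemize}
By induction, $\lambda^*$ therefore stays in $[a,b]$ throughout.

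\textbf{Case 1: early return.} Here $\abs{\h\sE_{\ell^\lambda,S}(\h h_\lambda)}\le\e_m$, so $\sE_{\ell^\lambda}(\h h_\lambda)\le 2\e_m$. Write
\[
\sE_{\ell^{\lambda^*}}(\h h_\lambda)=\sE_{\ell^\lambda}(\h h_\lambda)+(\lambda^*-\lambda)\,\E[\ell_\bbeta(\h h_\lambda,x,y)].
\]
If $\lambda\ge\lambda^*$, the second term is nonpositive, so $\sE_{\ell^{\lambda^*}}(\h h_\lambda)\le 2\e_m$. If $\lambda<\lambda^*$, I need to bound $\lambda^*-\lambda$. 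By Lemma~\ref{lemma:approx} together with the lower-bound direction of the proof of Theorem~\ref{thm:lambda-star}, I have $\sE^*_{\ell^\lambda}(\sH)\le(\lambda-\lambda^*)\ul_\bbeta$ when $\lambda<\lambda^*$ (taking a near-optimal $h$). Combined with $\sE^*_{\ell^\lambda}(\sH)\ge\sE_{\ell^\lambda}(\h h_\lambda)-\e_m\ge-2\e_m$, this gives $\lambda^*-\lambda\le 2\e_m/\ul_\bbeta$.

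\textbf{Case 2: termination on interval width.} Here the midpoint satisfies $\abs{\lambda-\lambda^*}\le\e=\e_m/(2\ol_\bbeta)$. Then $\sE^*_{\ell^\lambda}(\sH)\le\e\ol_\bbeta=\e_m/2$ by Lemma~\ref{lemma:approx}, so $\sE_{\ell^\lambda}(\h h_\lambda)\le 3\e_m/2$. The perturbation term is at most $\e\,\ol_\bbeta=\e_m/2$, so $\sE_{\ell^{\lambda^*}}(\h h_\lambda)\le 2\e_m$.

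In each case I get $\sE_{\ell^{\lambda^*}}(\h h_\lambda)\le c\,\e_m$ with a small constant $c$, and Theorem~\ref{thm:equiv-non} converts this to $\cL^*(\sH)-\cL(\h h_\lambda)\le c\,\e_m/\E[\ell_\bbeta(\h h_\lambda)]\le c\,\e_m/\ul_\bbeta$.

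The main obstacle is keeping the constants at $4$ in Case 1 when $\lambda<\lambda^*$. There, $(\lambda^*-\lambda)\E[\ell_\bbeta]$ can reach $(2\e_m/\ul_\bbeta)\,\ol_\bbeta$, which introduces a ratio $\ol_\bbeta/\ul_\bbeta$. To avoid it, I would instead work directly with the ratio. From $\lambda\E[\ell_\bbeta(\h h_\lambda)]-\E[\ell_\balpha(\h h_\lambda)]\le 2\e_m$ I get $\cL(\h h_\lambda)\ge\lambda-2\e_m/\ul_\bbeta$, and combining with $\lambda\ge\lambda^*-2\e_m/\ul_\bbeta$ yields $\cL(\h h_\lambda)\ge\cL^*(\sH)-4\e_m/\ul_\bbeta$, which matches the stated constant.
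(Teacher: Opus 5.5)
Your Case~1 route differs from the paper's, and it is the sounder of the two. The paper bounds $\lambda^*-\lambda$ through the identity $\sE_{\ell^\lambda}(h)-\sE_{\ell^{\lambda^*}}(h)=(\lambda^*-\lambda)\E[\ell_\bbeta(h,x,y)]$ together with $\sE_{\ell^{\lambda^*}}(h)\ge 0$. The correct factor, however, is $(\lambda-\lambda^*)$, and with it that step only yields the vacuous $\cL(h)\le\lambda^*$. Some use of the near-optimality of $\h h_\lambda$ is unavoidable. You supply it by using the estimation bound to lower-bound $\sE^*_{\ell^\lambda}(\sH)$ and combining this with $\sE^*_{\ell^\lambda}(\sH)\le(\lambda-\lambda^*)\ul_\bbeta$ for $\lambda<\lambda^*$. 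As written, though, two steps do not go through.

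\textbf{Case~1 constant.} From $\abs{\h\sE_{\ell^\lambda,S}(\h h_\lambda)}\le\e_m$ and concentration, you only get $\sE_{\ell^\lambda}(\h h_\lambda)\ge-2\e_m$. Hence $\sE^*_{\ell^\lambda}(\sH)\ge\sE_{\ell^\lambda}(\h h_\lambda)-\e_m\ge-3\e_m$, not $-2\e_m$. So $\lambda^*-\lambda\le 3\e_m/\ul_\bbeta$, and combining this with $\cL(\h h_\lambda)\ge\lambda-2\e_m/\ul_\bbeta$ gives $5\e_m/\ul_\bbeta$, not the stated $4\e_m/\ul_\bbeta$. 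The fix is to split on the sign of $\lambda-\lambda^*$ before bounding $\sE_{\ell^\lambda}(\h h_\lambda)$. If $\lambda\ge\lambda^*$, your bound already gives $\cL(\h h_\lambda)\ge\lambda^*-2\e_m/\ul_\bbeta$. If $\lambda<\lambda^*$, then $\sE^*_{\ell^\lambda}(\sH)<0$ by Theorem~\ref{thm:lambda-star}, so $\sE_{\ell^\lambda}(\h h_\lambda)\le\sE^*_{\ell^\lambda}(\sH)+\e_m<\e_m$. Hence $\cL(\h h_\lambda)\ge\lambda-\e_m/\ul_\bbeta\ge\lambda^*-4\e_m/\ul_\bbeta$.

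\textbf{The invariant behind Case~2.} You cannot ``tighten the constants'': $\e_m$ and the threshold in Algorithm~\ref{alg:binary-search-surrogate} are fixed by the statement. With threshold $\e_m$, a right cut ($\h\sE_{\ell^\lambda,S}(\h h_\lambda)>\e_m$) only certifies $\sE^*_{\ell^\lambda}(\sH)>-\e_m$, and hence only $\lambda^*<\lambda+\e_m/\ul_\bbeta$. So $\lambda^*\in[a,b]$ is not established, and neither is $\abs{\lambda-\lambda^*}\le\e$ at termination. (The paper merely asserts both, so its proof has the same hole.) What you can carry by induction is $a\le\lambda^*<b+\e_m/\ul_\bbeta$. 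At termination this gives $\lambda-\lambda^*\le\e/2$ for the midpoint, and if $\lambda\ge\lambda^*$ your Case~2 computation goes through unchanged. If $\lambda<\lambda^*$, then $\lambda^*-\lambda<\e/2+\e_m/\ul_\bbeta$. The same device as above ($\sE_{\ell^\lambda}(\h h_\lambda)<\e_m$) then gives $\cL(\h h_\lambda)>\lambda^*-\e/2-2\e_m/\ul_\bbeta\ge\lambda^*-4\e_m/\ul_\bbeta$, since $\e/2=\e_m/(4\ol_\bbeta)\le\e_m/(4\ul_\bbeta)$. With these two repairs, and the same concentration assumption the paper uses, your argument proves the statement.
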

\begin{proof}
Suppose that $\h h_\lambda$ is the solution returned at step 10 of Algorithm~\ref{alg:binary-search-surrogate}.  Then, by definition of the algorithm, we must have $|\h \sE_{\ell^{\lambda}}(\h h_\lambda)| \leq \e_m$, which, by the standard generalization bound \citep{MohriRostamizadehTalwalkar2018}, implies that (with high probability)
\[
\sE_{\ell^\lambda}(\h h_\lambda) \leq 2 \e_m.
\]
Expanding the definition of the surrogate risk, this gives
$
\lambda \E[\ell_\bbeta(h, x, y)] - \E[\ell_\balpha(h, x, y)]\leq 2 \e_m.
$
Dividing through by $\E[\ell_\bbeta(h, x, y)]$ yields
\begin{equation}
\label{eq:intermediate}
\cL(\h h_\lambda) \geq \lambda - \frac{2 \e_m}{\E[\ell_\bbeta(h, x, y)]} \geq \lambda - \frac{2 \e_m}{\ul_\bbeta}.
\end{equation}
Next, observe that for all $h \in \sH$, we have
$
\sE_{\ell^\lambda}(h) - \sE_{\ell^{\lambda^*\!\!}}(h) = (\lambda^* - \lambda) \E[\ell_\bbeta(h, x, y)].
$
Rearranging gives
$
(\lambda^* - \lambda) = \frac{\sE_{\ell^\lambda}(h) - \sE_{\ell^{\lambda^*\!\!}}(h)}{\E[\ell_\bbeta(h, x, y)]}.
$
Since $\sE_{\ell^{\lambda^*\!\!}}(\sH) = 0$, by definition of the infimum, it follows that $\sE_{\ell^{\lambda^*\!\!}}(h) \geq 0$.  In view of that, we can write:
$
(\lambda^* - \lambda) \leq \frac{\sE_{\ell^\lambda}(h)}{\E[\ell_\bbeta(h, x, y)]}.
$
Applying this inequality with $h = \h h_\lambda$ and substituting into \eqref{eq:intermediate}, we obtain  
\[
\cL(\h h_\lambda) \geq \lambda^* - \frac{2 \e_m}{\ul_\bbeta} - \frac{\sE_{\ell^\lambda}(\h h_\lambda)}{\E[\ell_\bbeta(\h h_\lambda, x, y)]} \geq \lambda^* - \frac{4 \e_m}{\ul_\bbeta}.
\]
This ends the analysis of that case.

Now, consider the case where the algorithm terminates with $|b - a| \leq \e$. In this case, we have (with high probability) $|\lambda - \lambda^*| \leq \e$. By Lemma~\ref{lemma:approx}, this implies
$
\sE^*_{\ell^{\lambda}}(\sH) \leq \e \ol_\bbeta.
$
Combining this with the estimation bound for $\ell^{\lambda}$, we have (with high probability)
$
\sE_{\ell^\lambda}(\h h_\lambda) \leq \sE^*_{\ell^{\lambda}}(\sH) + \e_m \leq \e \ol_\bbeta + \e_m.
$
Thus, expanding the expected loss:
\[
\lambda \E[\ell_{\bbeta}(\h h_\lambda, x, y)] - \E[\ell_{\balpha}(\h h_\lambda, x, y)] \leq \e \ol_\bbeta + \e_m.
\]
Dividing by $\E[\ell_{\bbeta}(\h h_\lambda, x, y)]$ on both sides yields:
\[
\lambda - \cL(\h h_\lambda) \leq \frac{\e \ol_\bbeta + \e_m}{\ul_\bbeta}.
\]
Therefore, rearranging for $\cL(\h h_\lambda)$:
\[
\cL(\h h_\lambda) \geq \lambda - \frac{\e \ol_\bbeta + \e_m}{\ul_\bbeta} \geq \lambda^* - \e - \frac{\e \ol_\bbeta + \e_m}{\ul_\bbeta}.
\]
Choosing $\e = \e_m/(2 \ol_\bbeta)$ yields
\[
\cL(\h h_\lambda) \geq \lambda^* - \frac{\e_m}{2 \ol_\bbeta} - \frac{3 \e_m}{2\ul_\bbeta} \geq \lambda^* - \frac{2\e_m}{\ul_\bbeta}.
\]
This completes the proof.
\end{proof}
This theorem indicates that when $\e_m$ is small (i.e., for a sufficiently large sample size relative to the complexity of $\sH$, and a small minimizability gap), the performance of the returned hypothesis $\h h_\lambda$ closely approaches the optimal
performance $\cL^*(\sH)$ achievable within the hypothesis class $\sH$.

\subsection{Cross-Validation Search}
\label{app:algo-cv}

In practice, the theoretical expression for $\e_m$ may involve constants that are difficult to estimate tightly, or the minimizability gap might not be accurately approximated, especially in non-realizable cases. In such scenarios, $\lambda$ can be treated as a hyperparameter and tuned using cross-validation. Algorithm~\ref{alg:binary-search-cv-formal} is the fully detailed version of Algorithm~\ref{alg:cv-main} introduced in Section~\ref{sec:algo}.

\begin{algorithm}[htbp]
   \caption{Algorithm for Optimizing Generalized Metrics using Cross-Validation (Formal)}
   \label{alg:binary-search-cv-formal}
\begin{algorithmic}[1]
\REQUIRE Step size $\e$. Assumed bounds $\bracket*{\lambda_{\min}, \lambda_{\max}}$ for $\lambda^*$.
   \STATE Initialize $[a, b] \gets \bracket*{\lambda_{\min}, \lambda_{\max}}$, $\lambda^* = \lambda_{\min}$, $i = 0$
   \STATE Initialize best metric value found: $v_{\text{best}} \gets -\infty$.
   \STATE Initialize best hypothesis: $\h h_{\text{best}} \gets \text{null}$.
   \STATE Initialize best $\lambda$: $\h \lambda_{\text{best}}\gets \lambda_{\min}$.
   \STATE Initialize iteration counter $i \gets 0$.
  \REPEAT
   \STATE Set candidate $\lambda \gets b - i \e$ \COMMENT{Linear scan from b downwards.}
    \STATE Find hypothesis $\h h_\lambda \gets \argmin_{h \in \sH} \h \sE_{\sfL_{\tau}, S}(h)$ (minimizing surrogate for $\ell^\lambda$)
    \STATE  Evaluate current hypothesis: $v_{\text{curr}} \gets \h \cL_S(\h h_\lambda)$.
\IF{$v_{\text{curr}} > v_{\text{best}}$}
\STATE $v_{\text{best}} \gets v_{\text{curr}}$.
    \STATE $\h \lambda_{\text{best}}\gets \lambda$
    \STATE $\h h_{\text{best}} \gets \h h_\lambda$
    \ENDIF
    \STATE  $i \gets i + 1$
   \UNTIL{$b - i \e < a$}
   \RETURN $\h h_{\text{best}}$
\end{algorithmic}
\end{algorithm}

\begin{theorem}[Performance Guarantee of Cross-Validation Search]
\label{thm:algo3}
Set $\e \leq \frac{\e_m}{2 \ol_\bbeta}$. For any $\delta > 0$, Algorithm~\ref{alg:binary-search-cv-formal} terminates in $\mathcal{O} \paren*{\frac{\lambda_{\max} - \lambda_{\min}}{\e}}$ time. The hypothesis $\h h_\lambda$ it returns is guaranteed, with probability at least $1 - \delta$, to satisfy:
\[
\cL(\h h_\lambda) \geq \cL^*(\sH) - \frac{2\e_m}{\ul_\bbeta}.
\]
\end{theorem}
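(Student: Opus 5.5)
The plan is to reduce the claim to the second case of the proof of Theorem~\ref{thm:algo2}, applied to the grid point closest to $\lambda^*$, and then to use the validation-based selection to transfer the guarantee to $\h h_{\text{best}}$.

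\textbf{Running time.} This is immediate. The loop visits the candidates $\lambda = b - i\e$ for $i = 0, 1, \ldots$ until $b - i\e < a$. That is $\lceil (\lambda_{\max}-\lambda_{\min})/\e \rceil + 1$ iterations, each with one surrogate minimization and one evaluation. The bound $\mathcal{O}\paren*{\frac{\lambda_{\max} - \lambda_{\min}}{\e}}$ follows.

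\textbf{A good grid point exists.} Assume $\lambda^* \in [\lambda_{\min}, \lambda_{\max}]$. Some scanned value $\lambda_j$ then satisfies $|\lambda_j - \lambda^*| \le \e$. Work on the event of probability at least $1-\delta$ on which the uniform bound of Theorem~\ref{thm:generalization-uniform} holds for all $\lambda$ simultaneously. This uniformity is what allows us to ignore the data-dependent choice of $\lambda$. On this event:
\begin{itemize}
\item Lemma~\ref{lemma:approx} gives $\sE^*_{\ell^{\lambda_j}}(\sH) \le \e\ol_\bbeta$.
\item The estimation bound gives $\sE_{\ell^{\lambda_j}}(\h h_{\lambda_j}) \le \e\ol_\bbeta + \e_m$.
\end{itemize}
Divide by $\E[\ell_\bbeta(\h h_{\lambda_j},x,y)] \ge \ul_\bbeta$, exactly as in the second case of the proof of Theorem~\ref{thm:algo2}. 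Since $\lambda_j \ge \lambda^* - \e$, this yields
\[
\cL(\h h_{\lambda_j}) \ge \lambda^* - \e - \frac{\e\ol_\bbeta + \e_m}{\ul_\bbeta}.
\]
With $\e \le \e_m/(2\ol_\bbeta)$, and using $\ul_\bbeta \le \ol_\bbeta$, the right-hand side is at least $\lambda^* - 2\e_m/\ul_\bbeta$. The inequality chain is the same one used at the end of Theorem~\ref{thm:algo2}.

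\textbf{Selection step (main obstacle).} The returned $\h h_{\text{best}}$ maximizes the validation metric $\h\cL_S$, not the population metric $\cL$. We therefore need
\[
\cL(\h h_{\text{best}}) \ge \cL(\h h_{\lambda_j}) - (\text{slack}).
\]
The route is a uniform deviation bound $\sup_{\lambda} |\h\cL_S(\h h_\lambda) - \cL(\h h_\lambda)| \le \eta$ over the finitely many candidates, obtained on the independent validation sample by Hoeffding plus a union bound over the $\mathcal{O}((\lambda_{\max}-\lambda_{\min})/\e)$ grid. For a ratio of means, this needs concentration of both numerator and denominator, with the denominator bounded below by about $\ul_\bbeta$. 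The consequence is $\cL(\h h_{\text{best}}) \ge \cL(\h h_{\lambda_j}) - 2\eta$.

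That extra $2\eta$ has to be absorbed into the stated $2\e_m/\ul_\bbeta$. There are two ways to do it:
\begin{itemize}
\item treat $\e_m$ as already including the validation deviation, in which case the constant comes out as in the second case of Theorem~\ref{thm:algo2}; or
\item exploit the slack in the inequality, since the bound actually obtained is $\lambda^* - \e_m/(2\ol_\bbeta) - 3\e_m/(2\ul_\bbeta)$, which leaves room of order $\e_m/(2\ul_\bbeta)$.
\end{itemize}
I expect this bookkeeping, making the validation error fit inside the stated constant, to be the delicate part. I would resolve it by stating explicitly that $\e_m$ dominates the validation deviation term.
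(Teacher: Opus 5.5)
Your bound for the grid point nearest $\lambda^*$ is the paper's argument: Lemma~\ref{lemma:approx}, the estimation bound, division by $\ul_\bbeta$, then $\e\le\e_m/(2\ol_\bbeta)$ with $\ul_\bbeta\le\ol_\bbeta$. Your running-time count is also fine. The paper's proof stops there. It infers $\abs{\lambda-\lambda^*}\le\e$ from the termination condition and applies the bound directly to the returned hypothesis, without ever using the selection rule. In fact the last $\lambda$ of the downward scan lies within $\e$ of $\lambda_{\min}$, and the output is $\h h_{\text{best}}$. So you have correctly found the step the paper skips.

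The gap is that your proposal does not close that step for the statement as written, and the slack you cite is not there. Your chain gives $\lambda^*-\frac{\e_m}{2\ol_\bbeta}-\frac{3\e_m}{2\ul_\bbeta}$. This exceeds $\lambda^*-\frac{2\e_m}{\ul_\bbeta}$ only by $\frac{\e_m}{2}\paren*{\frac{1}{\ul_\bbeta}-\frac{1}{\ol_\bbeta}}$, which vanishes when $\ul_\bbeta=\ol_\bbeta$.

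Real room of $\frac{\e_m}{2\ul_\bbeta}$ does exist with a sharper choice. Take the grid point $\lambda_j\in[\lambda^*,\lambda^*+\e]$, which exists because the scan starts at $\lambda_{\max}\ge\lambda^*$. Use $\sE^*_{\ell^{\lambda_j}}(\sH)\le(\lambda_j-\lambda^*)\ol_\bbeta$ and keep the term $\lambda_j-\lambda^*$ instead of discarding it:
\[
\cL(\h h_{\lambda_j})\ \ge\ \lambda^*-\frac{(\lambda_j-\lambda^*)(\ol_\bbeta-\ul_\bbeta)+\e_m}{\ul_\bbeta}\ \ge\ \lambda^*-\frac{3\e_m}{2\ul_\bbeta}.
\]
Even so, absorbing the selection error $2\eta$ requires $\eta\le\e_m/(4\ul_\bbeta)$. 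That does not follow from the definition of $\e_m$, which is a training-sample quantity, and the validation event needs its own share of $\delta$. What your route actually proves is either $\cL(\h h_{\text{best}})\ge\cL^*(\sH)-\frac{2\e_m}{\ul_\bbeta}-2\eta$ with probability $1-2\delta$, or the stated bound under an added hypothesis. You should state this outright rather than treat it as bookkeeping.

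Your route also relies on two assumptions you do not state.
\begin{enumerate}
\item Algorithm~\ref{alg:binary-search-cv-formal} evaluates $\h\cL_S$ on the same $S$ it trains on. An independent validation sample matches only Algorithm~\ref{alg:cv-main}.
\item For the instance-averaged metric, $\h\cL_S$ is a mean of per-instance ratios. It concentrates around $\E[\ell_\balpha/\ell_\bbeta]$ rather than $\cL$, so your ratio-of-means concentration covers only the micro and per-label forms.
\end{enumerate}
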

\begin{proof}
Since the algorithm terminates with $a + i \e > b$, we have $|\lambda - \lambda^*| \leq \e$. By Lemma~\ref{lemma:approx}, this implies
$
\sE^*_{\ell^{\lambda}}(\sH) \leq \e \ol_\bbeta.
$
Combining this with the estimation bound for $\ell^{\lambda}$, we have (with high probability)
$
\sE_{\ell^\lambda}(\h h_\lambda) \leq \sE^*_{\ell^{\lambda}}(\sH) + \e_m \leq \e \ol_\bbeta + \e_m.
$
Thus, expanding the expected loss:
\[
\lambda \E[\ell_{\bbeta}(\h h_\lambda, x, y)] - \E[\ell_{\balpha}(\h h_\lambda, x, y)] \leq \e \ol_\bbeta + \e_m.
\]
Dividing by $\E[\ell_{\bbeta}(\h h_\lambda, x, y)]$ on both sides yields:
\[
\lambda - \cL(\h h_\lambda) \leq \frac{\e \ol_\bbeta + \e_m}{\ul_\bbeta}.
\]
Therefore, rearranging for $\cL(\h h_\lambda)$:
\[
\cL(\h h_\lambda) \geq \lambda - \frac{\e \ol_\bbeta + \e_m}{\ul_\bbeta} \geq \lambda^* - \e - \frac{\e \ol_\bbeta + \e_m}{\ul_\bbeta}.
\]
Choosing $\e \leq \e_m/(2 \ol_\bbeta)$ yields
\[
\cL(\h h_\lambda) \geq \lambda^* - \frac{\e_m}{2 \ol_\bbeta} - \frac{3 \e_m}{2\ul_\bbeta} \geq \lambda^* - \frac{2\e_m}{\ul_\bbeta}.
\]
This completes the proof.
\end{proof}

\section{Linear Model Benchmark Details for Section~\ref{sec:exp-linear}}
\label{app:linear-benchmarks}

This section provides the complete experimental setup and implementation details for the standard linear model benchmarks presented in Section~\ref{sec:exp-linear} of the main text.

Following the exact experimental settings used in \citet[Section~5.2]{koyejo2015consistent}, we evaluated our approach on four benchmark multi-label datasets: \texttt{scene}, \texttt{birds}, \texttt{emotions}, and \texttt{cal500}, all sourced from the Mulan project library \citep{mulan}. 

Our experiments adopted a separate linear model for each label $k$. Each model was trained using the Adam optimizer \citep{kingma2014adam}, with a learning rate of $1\times 10^{-3}$, a batch size of $128$, and a weight decay of $1\times 10^{-5}$. For the baselines, we adopted the exact same setup as \citet{koyejo2015consistent}, where logistic regression was used to estimate $\Pr(y_k = +1 \mid x)$ independently for each label $k$. Consistent with their setting, we restrict training and evaluation to labels that are associated with at least $20$ instances in each dataset, for all methods.

Model performance was evaluated using micro-averaged, macro-averaged, and instance-averaged versions of the empirical generalized metric $\h \cL_{S}$. Specifically, we considered:
\begin{itemize}
    \item The \textbf{$F_{1}$ measure} \citep{ye2012optimizing}, where the coefficients are $\balpha = \paren*{\frac12, \frac12, \frac12, \frac12}^{\num}$ and $\bbeta = \paren*{0, \frac12, \frac{1}{2}, 1}^{\num}$. 
    \item The \textbf{Jaccard measure} \citep{sokolova2009systematic}, where $\balpha = \paren*{\frac{1}{4}, \frac{1}{4}, \frac{1}{4}, \frac{1}{4}}^{\num}$ and $\bbeta = \paren*{ \frac{1}{4}, \frac{1}{4}, -\frac{1}{4}, \frac{3}{4}}^{\num}$. 
\end{itemize}
All reported metrics were averaged over ten randomly sampled cross-validation splits with standard deviations reported.

We compared our \MMO\ algorithm (specifically Algorithm~\ref{alg:binary-search-cv-formal} from Appendix~\ref{app:algo-cv}, which uses cross-validation) against three baselines from \citet{koyejo2015consistent}: Binary Relevance (BR), their Algorithm 1, and Macro-Thresholding (Macro-Thres). For the \MMO\ algorithm, we used the surrogate loss $\sfL_{\tau}$ with the parameter $\tau = 0$, which corresponds to a logistic-type loss. The performance figures for baseline methods are adopted from \citet{koyejo2015consistent}, which were consistent with our own re-evaluations where applicable. The hyperparameter $\lambda$ for our \MMO\ algorithm was selected using the exact same cross-validation procedure used for threshold selection in the baselines of \citet{koyejo2015consistent}.

Table~\ref{tab:comparison-linear} in Section~\ref{sec:exp-linear} presents the comprehensive comparison of Micro-, Macro-, and Instance-averaged $F_1$ and Jaccard scores on the Mulan datasets. As shown and discussed in Section~\ref{sec:exp-linear}, our \MMO\ algorithm consistently outperforms the three baseline methods across all datasets, averaging methods, and evaluated metrics.
\end{document}